\definecolor{toc}{RGB}{13,55,174}	
\newtheorem{theorem}{Theorem}[section]
\newtheorem{lemma}[theorem]{Lemma}
\newtheorem{corollary}{Corollary}[theorem]
\newtheorem{remark}{Remark}[theorem]
\newcommand{\lp}{\left} 
\newcommand{\rp}{\right}
\renewcommand{\Pr}[2]{\textbf{Pr}_{#1}\lp[#2\rp]} 
\newcommand{\E}[1]{\mathbb{E}\lp[#1\rp]}
\newcommand{\step}{\mathcal{T}}
\newcommand{\scenario}{\mathcal{S}}
\newtheorem{definition}{Definition}
\newcommand{\ind}[1]{\mathbbm{1}{ \{ #1 \} }}
\newcommand{\norm}[2]{|| #1 ||_{#2}}
\newcommand{\boxes}{\mathcal{B}}
\newcommand{\opt}{\text{OPT}}
\newcommand{\alg}{\text{ALG}}
\newcommand{\scenarios}{\mathcal{S}}
\newcommand\timeset{\mathcal{T}}
\newcommand{\pb}{\textsc{Pandora's Box}}
\newcommand{\mssc}{\textsc{MSSC}}
\newcommand{\msscL}{\textsc{Min Sum Set Cover}}
\newcommand{\ol}[1]{\overline{#1}}
\date{} 
\title{Online Learning for Min Sum Set Cover and Pandora’s Box}
\author{
Evangelia Gergatsouli \\ UW-Madison \\ {\tt gergatsouli@wisc.edu} \and
Christos Tzamos \\ UW-Madison \\ {\tt tzamos@wisc.edu} \and
}
\begin{document}

\maketitle

Two central problems in Stochastic Optimization are \msscL {} and
\pb.  In \pb {}, we are presented with $n$ boxes, each
containing an unknown value and the goal is to open the boxes in some order to
minimize the sum of the search cost and the smallest value found. Given a
distribution of value vectors, we are asked to identify a near-optimal search
order. \msscL {} corresponds to the case where values are either $0$ or
infinity.

In this work, we study the case where the value vectors are not drawn from a
distribution but are presented to a learner in an online fashion. We present a
computationally efficient algorithm that is constant-competitive against the
cost of the optimal search order. We extend our results to a bandit setting
where only the values of the boxes opened are revealed to the learner after
every round. We also generalize our results to other commonly studied variants
of \pb {} and \msscL {} that involve selecting more than a
single value subject to a matroid constraint.

\setcounter{page}{0}
\thispagestyle{empty}
\newpage

\section{Introduction}\label{sec:intro}
One of the fundamental problems in stochastic optimization is \pb {} problem,
first introduced by Weitzman in \cite{Weit1979}. The problem asks to select
among $n$ alternatives, called boxes, one with a low value. In the stochastic
version of the problem, it is assumed that values in the boxes are drawn from a
known distribution and the actual realization of any box can be revealed at a
cost after inspection.

The goal is to design an algorithm that efficiently searches among the $n$
alternatives to find a low value while also paying a low inspection cost. We
thus aim to minimize the sum of the search costs the algorithm pays and the
value of the alternative(s) it chooses in the end.  In the standard version of
\pb {} a single value must be chosen, but we also consider common
generalizations that require $k$ distinct alternatives to be chosen, or
alternatives that form a matroid basis.

While most of the literature has focused on the stochastic case, where there is
a known distribution of values given in advance, we instead consider an \emph{online}
version of the problem played over $T$ rounds, where in each round a different
realization of values in the boxes is adversarially chosen. The goal of the
learner is to pick a good strategy of opening boxes in every round that
guarantees low regret compared to choosing in hindsight the optimal policy for
the $T$ rounds from a restricted family of policies. 

In this work, we mainly consider policies that fix the order in which boxes are
explored but have free arbitrary stopping rules. Such policies are called
\emph{partially-adaptive} and are known to be optimal in many cases, most
notably in the stochastic version where the values of the boxes are drawn
independently. Such policies are also optimal in the special case of the \pb {}
problem where the values in the boxes are either $0$ or $\infty$.  This case is
known as the \msscL {} problem (\mssc) and is a commonly studied
problem in the area of approximation algorithms.

\subsection{Our Results}
Our work presents a simple but powerful framework for designing online learning
algorithms for \pb {}, \mssc {} and other related problems. Our
framework yields approximately low-regret algorithms for these problems through
a three step process:
\begin{enumerate}
	\item We first obtain convex relaxations of the instances of every round.
	\item We then apply online-convex optimization to obtain good fractional
			solutions to the relaxed instances achieving low regret.
	\item We finally round the fractional solutions to integral solutions for
			the original instances at a small multiplicative loss.
\end{enumerate}

Through this framework, we obtain a
\begin{itemize}
	\item $9.22$-approximate no-regret algorithm for the problem of selecting $1$ box.
	\item $O(1)$-approximate no-regret algorithm for the problem of selecting $k$ boxes.
	\item $O(\log k)$-approximate no-regret algorithm for the problem of selecting a rank $k$ matroid basis.
\end{itemize}

We start by presenting these results in the \textbf{full information} setting (section \ref{sec:full_info})
where the values of all boxes are revealed after each round, 
once the algorithm has made its choices. 

A key contribution of our work is to further extend these results to a
more-realistic \textbf{bandit setting} (section \ref{sec:bandit_info}). In this setting, the algorithm only
observes the values for the boxes it explored in each round and can only use
this information to update its strategy for future rounds. In each round there
is also the option of obtaining the full information by paying a price.  We
show that even under this more pessimistic setting we can obtain approximately
no-regret algorithm with the same approximation guarantees as above.

We also provide stronger regret guarantees against more restricted classes of
algorithms for the \pb {} and \mssc {} problems that are non-adaptive (section~\ref{sec:na}).

All the algorithms we develop in this paper are computationally efficient.  As
such, the approximation guarantees given above are approximately tight since it
is NP-hard to improve on these beyond small constants even when competing with
the simpler non-adaptive benchmark. In particular, it was shown in~\cite{FeigLovaTeta2004}
that even the special case of \mssc {} is APX-hard
and cannot be approximated within a smaller factor than $4$.  It is an
interesting open question to what extent these bounds can be improved with
unlimited computational power.  While in the stochastic version, this would
trivialize the problem, in the online setting the obtained approximation
factors may still be necessary information theoretically.

\subsection{Comparison with Previous Work}
Our work is closely related to the work of \cite{ChawGergTengTzamZhan2020}. In
that work, the authors study a stochastic version of \pb {} with an arbitrarily
correlated distribution and aim to approximate the optimal partially adaptive
strategies.  We directly extend all the results
of~\cite{ChawGergTengTzamZhan2020} in the online non-stochastic setting, where
we are required at each round to solve an instance of the problem. 

Another very related paper is the work of \cite{FotaLianPiliSkou2020} that also
studies the online learning problem but focuses specifically on the \msscL {}
problem and its generalization (GMSSC) that asks to select $k$ alternatives
instead of one. Our work significantly improves their results in several ways.
\begin{itemize}
	\item We provide a simpler algorithm based on online convex optimization
			that does not rely on calculating gradients. We immediately obtain
			all our results through the powerful framework that we develop.
	\item This allows us to study more complex constraints like matroid rank
			constraints as well as study the more general \pb. It is
			challenging to extend the results of \cite{FotaLianPiliSkou2020} to
			such settings while keeping the required gradient computation task
			computationally tractable.
	\item Finally, we extend their results to a more natural bandit setting,
			where after each round we only have information about the
			alternatives that we explored rather than the whole instance.
\end{itemize}

In another recent work similar to ours, Esfandiari et al.
\cite{EsfaHajiLuciMitz2019} consider a Multi-armed bandit version of \pb{}
problem which however greatly differs with ours in the following ways.
\begin{itemize}
	\item In their setting each box has a type, and the algorithm is required to pick one box \textbf{per type}, 
				while in our case the game is independent in each round.
	\item Their benchmark is a ``prophet" who can choose the maximum reward per
			type of box, at the end of $T$ rounds.
	\item The decision to pick  a box is irrevocable\footnote{The algorithm
			decides when seeing a box whether to select it or not, and cannot
	``go back" and select the maximum value seen.} and they only consider
	threshold policies, as they relate the problem to prophet inequalities (see
	surveys \cite{HillKert1992,Luci2017,CorrFoncHoekOostVred2018} for more
	details on prophet inequalities).
\end{itemize}
\subsection{Related Work}
We model our search problem using \pb {}, which was first introduced by Weitzman in the Economics literature
\cite{Weit1979}. Since then, there has been a long line of research studying
\pb {} and its variants e.g. where boxes can be selected without inspection
\cite{Dova2018,BeyhKlein2019}, there is correlation between the
boxes~\cite{ChawGergTengTzamZhan2020}, the boxes have to be inspected in a
specific order~\cite{BoodFuscLazoLeon2020} or boxes are inspected in an  online
manner~\cite{EsfaHajiLuciMitz2019}. Some work is also done in the 
generalized setting where more information can be obtained for a
price~\cite{CharFagiGuruKleiRaghSaha2002,GuptKuma2001,
ChenJavdKarbBagnSrinKrau2015, ChenHassKarbKrau2015}. Finally a long line of research considers 
more complex combinatorial constraints like budget constraints
\cite{GoelGuhaMuna2006}, packing constraints \cite{GuptNaga2013}, matroid
constraints \cite{AdamSvirWard2016}, maximizing a submodular function
\cite{GuptNagaSing2016, GuptNagaSing2017}, an approach via Markov chains
\cite{GuptJianSing2019} and various packing and covering constraints for both
minimization and maximization problems~\cite{Sing2018}. In the special case of
MSSC, the line of work was initiated by \cite{FeigLovaTeta2004}, and continued with
improvements and generalizations to more complex constraints~\cite{AzarGamzYin2009,
MunaBabuMotwWido2005, BansGuptRavi2010, SkutWill2011}.

On the other hand, our work advances a recent line of research on the
foundations of data-driven algorithm design, started by Gupta and
Roughgarden~\cite{GuptRoug2016}, and continued by~\cite{BalcNagaViteWhit2016,
		BalcDickSandVite2018, BalcDickVite2018, KleiLeytLuci2017,
WeisGyorSzep2018,AlabKalaLigeMuscTzamVite2019}, where they study parameterized
families of algorithms in order to learn parameters to optimize the expected
runtime or performance of the algorithm with respect to the underlying
distribution. Similar work was done before \cite{GuptRoug2016} on
self-improving algorithms~\cite{AiloChazClarLiuMulzSesh2006, ClarMulzSesh2012}.

Furthermore, our results directly simplify and generalize the results of
\cite{FotaLianPiliSkou2020} in the case of partial feedback. Related to the
partial feedback setting, \cite{FlaxKalaMcma2005} consider single bandit
feedback and \cite{AgraDekeXiao2010} consider multi-point bandit feedback. Both
these works focus on finding good estimators for the gradient in order to run
a gradient descent-like algorithm. For more pointers to the online convex
optimization literature, we refer the reader to the survey by
Shalev-Shwartz~\cite{Shwa2012} and the initial primal-dual analysis of the
\emph{Follow the Regularized Leader} family of algorithms by
\cite{ShwaSing2007}.

\section{Preliminaries}\label{sec:prelims}
We evaluate the performance of our algorithms using \emph{average regret}. We
define the \textbf{average regret} of an algorithm $\mathcal{A}$ against a
benchmark $\opt$, over a time horizon $T$ as
\begin{equation}
		\text{Regret}_\opt(\mathcal{A}, T) = \frac{1}{T}\sum_{t=1}^T \lp( \mathcal{A}(t) - \opt(t) \rp)
\end{equation}\label{eq:regret}
where $\mathcal{A}(t)$ and $\opt(t)$ is the cost at round $t$ of $\mathcal{A}$
and $\opt$ respectively. We similarly define the average
\textbf{$\alpha$-approximate regret} against a
benchmark $\opt$ as
\begin{equation}
		\text{$\alpha$-Regret}_\opt(\mathcal{A}, T) = \frac{1}{T}\sum_{t=1}^T \lp( \mathcal{A}(t) - \alpha \opt(t) \rp)
.\end{equation}\label{eq:approx-regret}

We say that an algorithm $\mathcal{A}$ is \textbf{no regret} if
$\text{Regret}_\opt(\mathcal{A}, T) = o(1)$. Similarly, we say that
$\mathcal{A}$ is \textbf{$\alpha$-approximate no regret} if
$\text{$\alpha$-Regret}_\opt(\mathcal{A}, T) = o(1)$. Observe the we are always
competing with an oblivious adversary, that selects the one option that
minimizes the total loss over all rounds.

\subsection{Problem Definitions}

In \pb {} we are given a set $\boxes$ of $n$ boxes with unknown costs and a set of possible scenarios
that determine these costs. In each round $t\in[T]$, an adversary chooses the
instantiation of the costs in the boxes, called a \emph{scenario}. Formally, a
scenario at time $t$ is a vector $\bm{c}(t) \in \mathbb{R}^n$ for any $t\in
[T]$, where $c_{i}^s$ denotes the cost for box $i$ when scenario $s$ is
instantiated. Note that without loss of generality, we can assume that $c_i\leq
n$, since if some is more than $n$ we can ignore them, and if all are above $n$
we automatically get a $2$ approximation\footnote{Since opening all boxes to
find the minimum value costs us at most $n+\min_{i\in \boxes} c_i$, and the optimal also pays at
least $n$}.

The goal of the algorithm at every round is to choose a box of small cost
while spending as little time as possible gathering information. The algorithm
cannot directly observe the instantiated scenario, however, it is allowed to
``open" boxes one at a time. When opening a box, the algorithm observes the
cost inside the box. In total, we want to minimize the regret over $T$ rounds, relative to the optimal algorithm.

Formally, let $\mathcal{P}_t$ and $c_{i}^t$ be the set of boxes opened and the
cost of the box selected respectively by the algorithm at round $t\in [T]$. The
cost of the algorithm $\mathcal{A}$ at round $t$ is $\mathcal{A}(t) =  \min_{i\in \mathcal{P}_t} c_i^t +
|\mathcal{P}_t| $ and the goal is to minimize regret $ \text{Regret}_\opt(\mathcal{A},T)$.

Any algorithm can be described by a pair $(\sigma, \tau)$, where
$\sigma$ is a permutation of the boxes representing the order in which they are opened, 
and $\tau$ is a stopping rule -- the time at which the algorithm stops
opening and returns the minimum cost it has seen so far. Observe that in its
full generality, an algorithm may choose the next box to open and the stopping time 
as a function of the identities and costs of the previous opened boxes. 

\paragraph{Different Benchmarks.} As observed
in~\cite{ChawGergTengTzamZhan2020}, optimizing over the class of all such
algorithms is intractable, therefore simpler benchmarks are considered.

\begin{itemize}
		\item \textbf{The Non-adaptive Benchmark (NA)}: in this case the
				adversary chooses all the $T$ scenarios about to come, and
				selects a fixed set of boxes to open, which is the same in
				every round. In this case, the $\opt(t)$ term in the regret does not depend
				on $t$.
		\item \textbf{The Partially-adaptive Benchmark (PA)}: in this case, the
				adversary can have a different set of boxes to open in each
				round, which can depend on the algorithm's choices in rounds
				$1,\ldots, t-1$.  
\end{itemize}
\paragraph{An important special case.} A special case of \pb {} is the \msscL
{} problem (\mssc).  In this problem, the costs inside the boxes are either $0$
or $\infty$. We say a scenario is \emph{covered} or \emph{satisfied} if, in our
solution, we have opened a box that has value $0$ for this scenario.

\paragraph{General feasibility constraints.} We also study two more complex
extensions of the problem. In the first one  we are required to select exactly
$k$ boxes for some $k\ge 1$, and in the second, the algorithm is required to
select a basis of a given matroid. We design \textbf{partially-adaptive}
strategies that are approximately no-regret, for the different constraints and
benchmarks described in this section.

\subsection{Relaxations}
\subsubsection{Scenario - aware Relaxation}

 Observe that the class of partially-adaptive strategies is still too large and
 complex, since the stopping rule can arbitrarily depend on the costs observed
 in boxes upon opening. One of the main contributions of
 \cite{ChawGergTengTzamZhan2020}, which we are using in this work too, is that they showed
 it is enough to design a strategy that chooses an ordering of the boxes and 
 performs well, assuming that we know when to stop. This relaxation of partially-adaptive,
 called \emph{scenario-aware partially-adaptive} (SPA), greatly diminishes the
 space of strategies to consider, and makes it possible to design competitive
 algorithms, at the cost of an extra constant factor. This is formally stated
 in the lemma below. The proof can be found
 in~\cite{ChawGergTengTzamZhan2020} and it is based on a generalization of
 ski-rental~\cite{KarlManaMcgeOwic1990}.

 \begin{lemma}[Simplification of Theorem 3.4 from~\cite{ChawGergTengTzamZhan2020}]\label{thm:ski-rental}
	 For a polynomial, in the number of boxes, $\alpha$-approximate algorithm
		 for scenario-aware partially adaptive strategies, there exists a
		 polynomial time algorithm that is a
		 $\frac{e}{e-1}\alpha$-approximation partially-adaptive strategy.
 \end{lemma}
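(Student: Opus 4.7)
The plan is to convert any $\alpha$-approximate SPA algorithm into a PA algorithm by keeping the same box-opening order and designing a randomized stopping rule via a ski-rental-type argument. Given the SPA algorithm for a single round, let $\sigma$ be the ordering of boxes it uses; our PA algorithm opens boxes in the order $\sigma$ as well. For a fixed scenario, let $m_t$ denote the minimum box cost observed after opening the first $t$ boxes along $\sigma$, which is a non-increasing function of $t$. The SPA cost (given that the scenario is known in advance) is $g^* = \min_\tau (\tau + m_\tau)$, attained at some unknown $\tau^*$, while the PA cost with stopping time $\tau$ is $\tau + m_\tau$. The question reduces to: can we pick a stopping time online so as to approximate $g^*$?

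The key technical step is a ski-rental-style randomized stopping rule. Since $g^*$ is unknown a priori, we either guess its value via a doubling scheme or directly sample a randomized threshold $\theta$ from a density proportional to $e^{\theta}$ on an appropriate interval, and stop at the first $t$ where $t + m_t \le \theta$ (suitably rescaled by the current guess). A standard analysis, writing the expected PA cost as an integral against the density of $\theta$ and exploiting the monotonicity of $m_t$, yields $\E{\tau + m_\tau} \le \frac{e}{e-1} g^*$. This matches the classical ski-rental competitive ratio and is the core calculation of the proof.

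Chaining the inequalities, for every scenario $s$ and every round we obtain $\E{\text{PA-alg}(s)} \le \frac{e}{e-1}\, g^* = \frac{e}{e-1}\,\text{SPA-alg}(s) \le \frac{e}{e-1}\,\alpha \cdot \text{SPA-}\opt(s) \le \frac{e}{e-1}\,\alpha \cdot \text{PA-}\opt(s)$, where the last inequality holds because every PA strategy is a (more restricted) SPA strategy, so the SPA benchmark can only be smaller. Since the reduction is polynomial per round, the resulting PA algorithm runs in polynomial time and achieves the claimed $\frac{e}{e-1}\alpha$-approximation. The main obstacle is the ski-rental calculation itself: choosing the randomization density so that the $\frac{e}{e-1}$ bound holds uniformly over all non-increasing curves $m_t$ without advance knowledge of $g^*$. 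The loss of the extra $\frac{e}{e-1}$ factor is unavoidable by classical ski-rental lower bounds, so this route gives the tight constant for a black-box SPA-to-PA reduction.
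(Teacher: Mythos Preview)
Your proposal is correct and follows the same approach the paper relies on: the paper does not prove this lemma in-line but cites Theorem~3.4 of \cite{ChawGergTengTzamZhan2020} and explicitly notes that it ``is based on a generalization of ski-rental.'' Your reduction---keep the SPA ordering $\sigma$, replace the scenario-aware stopping time by a randomized ski-rental threshold with exponential density, and chain the resulting $\frac{e}{e-1}$ factor with the $\alpha$-approximation and the trivial $\text{SPA-}\opt \le \text{PA-}\opt$---is exactly that argument, so there is nothing to add.
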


 \subsubsection{Fractional Relaxation and Rounding}\label{subsec:relaxation}
This first relaxation allows us to only focus on designing efficient SPA
strategies which only require optimizing over the permutation of boxes. However
both \mssc {} and \pb {}  are non-convex problems. We tackle this issue by
using a convex relaxation of the problems, given by their linear programming
formulation.

\begin{definition}[Convex Relaxation]
	Let $\Pi$ be a minimization problem over a domain $X$ with $g:
		X\rightarrow \mathbb{R}$ as its objective function, we say that a function $\overline{g}: \overline{X}
		\rightarrow \mathbb{R}$ is a convex relaxation of $g$, if 
		\begin{enumerate}
			\item The function $\overline{g}$ and its domain $\overline{X}$ are convex.
			\item $X \subseteq \overline{X}$ and for any $x \in X$, $\overline{g}(x) \le g(x)$.
		\end{enumerate}
\end{definition}


Using this definition, for our partially-adaptive benchmark we relax the domain $\overline{X} = \{x \in
[0,1]^{n\times n}: \sum_i x_{it} = 1 \text{ and } \sum_t x_{it} = 1\}$ to be
the set of doubly stochastic $n\times n$ matrices. We use a convex 
relaxation $\overline{g}^s$ similar to the one from the generalized min-sum set
cover problem in \cite{BansGuptRavi2010} and \cite{SkutWill2011}, but \emph{scenario dependent}; for a given scenario $s$, the relaxation $\ol{g}^s$ changes.
We denote by
$\timeset$ the set of $n$ time steps, by $x_{it}$ the indicator variable for
whether box $i$ is opened at time $t$, and by $z_{it}^s$ the indicator of
whether box $i$ is selected for scenario $s$ at time $t$. We define the relaxation
$\overline{g}^s(\bm{x})$ as 
	\begin{align*}
		\text{min}_{z \ge 0}  \quad & \sum_{i\in\boxes, 
		 t\in \timeset} (t +c_{i}^s) z_{it}^s  &  
		  \tag{Relaxation-SPA} \label{lp-spa}\\
		  \text{s.t.}\quad & \sum_{t\in\timeset, i\in \boxes}z_{it}^s = 1, & \\
		&  \hspace{0.7cm}  z_{it}^s \leq x_{it}, & i\in \boxes,t\in\timeset.
	\end{align*}
Similarly, we also relax the problem when we are required to pick $k$ boxes
(\ref{lp-pa-k}) and when we are required to pick a
matroid basis (\ref{lp-pa-matroid}).  


Leveraging the results of \cite{ChawGergTengTzamZhan2020}, in
sections~\ref{apn:pa_1}, \ref{apn:PA_k} and \ref{apn:PA_matroid} of the appendix, we
show how to use a rounding that does not depend on the scenario chosen in order
to get an approximately optimal integer solution, given one for the relaxation.
Specifically, we define the notion of $\alpha$-approximate rounding.

\begin{definition}[$\alpha$-approximate rounding]
	Let $\Pi$ be a minimization problem over a domain $X$ with $f: X\rightarrow
	\mathbb{R}$ as its objective function and a convex relaxation
	$\overline{f}: \overline{X} \rightarrow \mathbb{R}$. Let $\overline{x} \in
	\overline{X}$ be a solution to $\Pi$ with cost
	$\overline{f}(\overline{x})$. Then an $\alpha$-approximate rounding is a
	an algorithm that given $\overline{x}$ produces a solution $x\in X$ with cost
	\[ 
		f(x) \leq \alpha \overline{f}(\overline{x})
	\]
\end{definition}

\section{Full Information Setting}\label{sec:full_info}
We begin by presenting a general technique for approaching \pb {} type of
problems via Online Convex Optimization (OCO). 
Initially we observe, in the following theorem, that we can combine
\begin{enumerate}
		\item a rounding algorithm with good approximation guarantees,
		\item an online minimization algorithm with good regret guarantees
\end{enumerate}
to obtain an algorithm with good regret guarantee.

\begin{theorem}\label{thm:full_info}
		Let $\Pi$ be a minimization problem over a domain $X$ and
		$\overline{\Pi}$ be the convex relaxation of $\Pi$ over convex domain
		$\overline{X} \supseteq X$.

		If there exists an $\alpha$-approximate rounding algorithm $\mathcal{A}: \overline{X} \rightarrow X$ 
		for any feasible solution $\overline{x}\in \overline{X}$ to a feasible solution $x \in X$ then, 
		any online minimization algorithm for $\overline{\Pi}$ that achieves 
		regret $\text{Regret}(T)$ against a benchmark $\opt$, gives 
		$\alpha$-approximate regret $\alpha \text{Regret}(T)$ for $\Pi$.
\end{theorem}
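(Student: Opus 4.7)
The plan is to construct the algorithm for $\Pi$ directly from the two given ingredients and then track the cost round-by-round. In each round $t$, I would feed the relaxed objective $\overline{f}_t$ to the online minimization algorithm for $\overline{\Pi}$, obtain a fractional solution $\overline{x}_t \in \overline{X}$, then apply the rounding algorithm $\mathcal{A}$ to produce $x_t = \mathcal{A}(\overline{x}_t) \in X$, and play $x_t$ as the action for round $t$ of $\Pi$.

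The per-round bound is immediate from the $\alpha$-approximate rounding guarantee: the cost incurred by the algorithm satisfies
\begin{equation*}
\text{ALG}(t) \;=\; f_t(x_t) \;\leq\; \alpha\, \overline{f}_t(\overline{x}_t).
\end{equation*}
Summing over the $T$ rounds and dividing by $T$ gives an upper bound on the average cost of the algorithm in terms of the average fractional cost achieved by the online minimizer.

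Next I would invoke the regret guarantee of the online minimization algorithm against the benchmark $\opt$. The subtle point here --- and the step I expect to require the most care --- is that the benchmark $\opt$ is stated for $\Pi$, while the regret guarantee is for $\overline{\Pi}$. This is reconciled using the two defining properties of a convex relaxation: since $X \subseteq \overline{X}$, any integral benchmark solution is also feasible for $\overline{\Pi}$, and since $\overline{f}_t(x) \leq f_t(x)$ for all $x \in X$, its cost under $\overline{f}_t$ is no larger than its cost under $f_t$. Hence the optimal benchmark value in $\overline{\Pi}$ is at most $\opt(t)$ in every round, and the regret guarantee yields
\begin{equation*}
\frac{1}{T}\sum_{t=1}^{T} \overline{f}_t(\overline{x}_t) \;\leq\; \frac{1}{T}\sum_{t=1}^{T} \opt(t) \;+\; \text{Regret}(T).
\end{equation*}

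Combining the two inequalities finishes the argument: multiplying the latter by $\alpha$ and chaining with the per-round rounding bound gives
\begin{equation*}
\frac{1}{T}\sum_{t=1}^{T} \text{ALG}(t) - \alpha \cdot \frac{1}{T}\sum_{t=1}^{T} \opt(t) \;\leq\; \alpha\, \text{Regret}(T),
\end{equation*}
which is exactly the claimed $\alpha$-approximate regret. The proof is essentially a composition argument; no nontrivial calculation is needed beyond checking that the relaxation inequality $\overline{f}_t \leq f_t$ on $X$ correctly transfers the benchmark from the integral problem to the convex one.
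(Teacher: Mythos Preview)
Your proposal is correct and follows essentially the same argument as the paper: apply the $\alpha$-approximate rounding bound per round, then use the regret guarantee of the online minimizer together with the relaxation property $X\subseteq\overline{X}$ and $\overline{f}_t\le f_t$ on $X$ to pass the benchmark from $\overline{\Pi}$ back to $\Pi$, and finally multiply by $\alpha$ and rearrange. The only cosmetic difference is that the paper wraps the chain in an expectation (to accommodate randomized rounding), which you may want to add for completeness.
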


\begin{proof}[Proof of Theorem~\ref{thm:full_info}]
		Let $f_1,...,f_T$ be the online sequence of functions presented in
		problem $\Pi$, in each round $t\in [T]$, and let
		$\overline{f}_1,...,\overline{f}_T$ be their convex relaxations in
		$\overline{\Pi}$. 
	
		Let $\overline{x}_t\in \overline{X}$ be the solution the online convex
		optimization algorithm gives at each round $t\in [T]$ for problem $\overline{\Pi}$.
		Calculating the total expected cost of $\Pi$, for all time steps $t\in[T]$ we have that
\begin{align*}
		\E{	\sum_{t=1}^T f_t\big( \mathcal{A}(\overline{x_t})\big) } & \leq \alpha \sum_{t=1}^T \overline{f}_{t}(\overline{x}_t) \\
		& \leq \alpha \lp( \text{Regret}(T) + \min_{x\in \overline{X}} \sum_{t=1}^T \overline{f}_{t}(x)\rp) \\
		& \leq \alpha \lp( \text{Regret}(T) + \min_{x\in X} \sum_{t=1}^T f_{t}(x)\rp).
\end{align*}

By rearranging the terms, we get the theorem.
\end{proof}

Given this theorem, in the following sections we show (1) how to design an algorithm with 
a low regret guarantee for \pb{} (Theorem~\ref{lem:regr_fractional}) and (2) how
to obtain rounding algorithms with good approximation guarantees, using
the results of~\cite{ChawGergTengTzamZhan2020}.

\subsection{Applications to \pb {}  and \mssc {}}
Applying Theorem~\ref{thm:full_info} to our problems, in their initial
non-convex form, we are required to pick an integer permutation of boxes. The
relaxations, for the different benchmarks and constraints, are shown in 
\ref{lp-spa}, \ref{lp-pa-k} and \ref{lp-pa-matroid}.

We denote by $\ol{g}^s(\bm{x})$ the objective function of the scenario aware relaxation
of the setting we are trying to solve e.g for selecting $1$ box we have
\ref{lp-spa}. Denote by $\overline{X} = [0,1]^{n\times n}$ the solution space.
We can view this problem as an online convex optimization one as follows.
\begin{enumerate}
		\item At every time step $t$ we pick a vector $\bm{x}_t \in \overline{X}$, where $\overline{X}$ is a convex set.
	\item The adversary picks a scenario $s\in \scenarios$ and therefore a
			function $f^{s}: \overline{X} \rightarrow \mathbb{R}$ where $f^s = \ol{g}^s$ and we incur loss
				$f^{s}(\bm{x}_t) = \ol{g}^s(\bm{x}_t)$. 
				Note that $f^{s}$ is convex in all cases (\ref{lp-spa}, \ref{lp-pa-k}, \ref{lp-pa-matroid}).
		\item We observe the function $f^{s}$ for all points $\bm{x}\in \overline{X}$. 
\end{enumerate}

A family of algorithms that can be applied to solve this problem is called
\emph{Follow The Regularized Leader (FTRL)}. These algorithms work by picking,
at every step, the solution that would have performed best so far while also
adding a regularization term for stability. For the \emph{FTRL} family of algorithms 
we have the following guarantees.

\begin{theorem}[Theorem 2.11 from \cite{Shwa2012}]\label{thm:FTRL}
	Let $f_1,\ldots ,f_T$ be a sequence of convex functions such that each
$f_t$ is $L$-Lipschitz with respect to some norm.  Assume that FTRL is run on
the sequence with a regularization function $U$ which is $\eta$-strongly-convex
with respect to the same norm. Then, for all $u\in C$ we have that
		$\text{Regret}(\text{FTRL}, T) \cdot T \leq U_{\max} - U_{\min} + TL^2 \eta$
\end{theorem}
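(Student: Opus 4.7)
The plan is to follow the standard Be-The-Leader (BTL) plus stability argument for FTRL. Define the FTRL iterates $x_t = \arg\min_{x\in C}\bigl\{U(x) + \sum_{s=1}^{t-1} f_s(x)\bigr\}$, so $x_1$ is the minimizer of $U$, and introduce the auxiliary ``one step ahead'' sequence $x_{t+1}$ which already incorporates the current loss $f_t$. A short induction on $T$ establishes the identity $\sum_{t=1}^T f_t(x_{t+1}) + U(x_1) \le \sum_{t=1}^T f_t(u) + U(u)$ for every competitor $u\in C$; rearranging yields
\[
\sum_{t=1}^T f_t(x_{t+1}) - \sum_{t=1}^T f_t(u) \;\le\; U_{\max}-U_{\min}.
\]
This already accounts for the $U_{\max}-U_{\min}$ term in the statement; what remains is to pay for replacing the played point $x_t$ by the one-step-ahead point $x_{t+1}$.

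For the stability step, let $F_t(x) = U(x)+\sum_{s<t} f_s(x)$, which inherits $\eta$-strong convexity from $U$ since the $f_s$ are merely convex. Using the optimality of $x_t$ for $F_t$ and of $x_{t+1}$ for $F_{t+1}$, strong convexity gives $F_t(x_{t+1})-F_t(x_t)\ge \tfrac{\eta}{2}\|x_t-x_{t+1}\|^2$ and $F_{t+1}(x_t)-F_{t+1}(x_{t+1})\ge \tfrac{\eta}{2}\|x_t-x_{t+1}\|^2$. Adding these two inequalities and using $F_{t+1}-F_t = f_t$ telescopes the left-hand side into $f_t(x_t)-f_t(x_{t+1})$, so $f_t(x_t)-f_t(x_{t+1}) \ge \eta\|x_t-x_{t+1}\|^2$. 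Combining with the Lipschitz upper bound $f_t(x_t)-f_t(x_{t+1})\le L\|x_t-x_{t+1}\|$ forces $\|x_t-x_{t+1}\|$ to be bounded by a quantity linear in $L$ and the relevant scaling of $\eta$, so that each per-round stability penalty is of order $L^2$ times the appropriate power of $\eta$. Summing over $t$ yields the second term in the bound, matching the survey's normalization convention for $\eta$.

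The final step is to write the regret through the telescoping decomposition
\[
\sum_{t=1}^T \bigl(f_t(x_t)-f_t(u)\bigr) \;=\; \sum_{t=1}^T\bigl(f_t(x_t)-f_t(x_{t+1})\bigr)\;+\;\sum_{t=1}^T\bigl(f_t(x_{t+1})-f_t(u)\bigr),
\]
and apply the BTL identity to the second sum and the stability estimate to the first. The step I expect to be the main obstacle is the stability argument: one has to see the exact cancellation between the two strong-convexity inequalities for $F_t$ and $F_{t+1}$ that collapses the $F$'s and leaves $f_t(x_t)-f_t(x_{t+1})$ on the left, and then combine the resulting quadratic lower bound in $\|x_t-x_{t+1}\|$ with the linear Lipschitz upper bound to extract a per-round penalty independent of $t$. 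The BTL identity itself is a routine induction, and the concluding decomposition is pure rearrangement.
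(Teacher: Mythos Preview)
The paper does not prove this theorem at all: it is quoted verbatim as Theorem~2.11 from the Shalev-Shwartz survey~\cite{Shwa2012} and used as a black box. So there is no ``paper's own proof'' to compare against.

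That said, your proposal is the standard argument from the cited survey and is correct. The Be-The-Leader induction giving $\sum_t f_t(x_{t+1}) - \sum_t f_t(u) \le U(u)-U(x_1)\le U_{\max}-U_{\min}$ is exactly Lemma~2.3 there, and your stability step---adding the two strong-convexity inequalities for $F_t$ and $F_{t+1}$ so that the left side collapses to $f_t(x_t)-f_t(x_{t+1})\ge \eta\|x_t-x_{t+1}\|^2$, then combining with the Lipschitz bound to get $\|x_t-x_{t+1}\|\le L/\eta$ and hence a per-round penalty $L^2/\eta$---is precisely Lemma~2.10 of the survey. One small caution: with the usual meaning of ``$\eta$-strongly convex'' your argument yields $U_{\max}-U_{\min}+TL^2/\eta$, not $+TL^2\eta$ as printed in the paper; the survey's original statement uses the scaling $U=\tfrac{1}{\eta}R$ with $R$ $1$-strongly convex, which is where the $TL^2\eta$ form comes from. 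Your hedge about ``matching the survey's normalization convention for $\eta$'' is therefore doing real work, and you should make the convention explicit rather than leave it implicit.
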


Our algorithm works similarly to \emph{FTRL}, while additionally rounding the fractional
solution, in each step, to an integer one.  The algorithm is formally described
in Algorithm~\ref{algo:full_info}, and we show how to choose the regularizer
$U(\bm{x})$ in Theorem~\ref{lem:regr_fractional}. \\

\begin{algorithm}[H]
	\KwIn{$\Pi = (\mathcal{F}, \opt):$ the problem to solve,
 	$\mathcal{A}_\Pi:$ the rounding algorithm for $\Pi$}
	Denote by $f^s(\bm{x}) = $ fractional objective function\\
	Select regularizer $U(\bm{x}) $ according to Theorem~\ref{lem:regr_fractional}\\
	$\overline{X} = $ space of fractional solutions\\
	\For {Each round $t\in [T]$}{
			Set $\bm{x}_t = \min_{\bm{x}\in \overline{X}} \sum_{\tau=1}^{t-1} f^{s_\tau}(\bm{x}) + U(\bm{x}) $\\
	Round $\bm{x}_t$ to $\bm{x}_t^{\text{int}}$ according to $\mathcal{A}_\Pi$\\
	Receive loss $f^s( \bm{x}_t^{\text{int}})$
	}
\caption{Algorithm $\mathcal{A}$ for the full information case.}\label{algo:full_info}
\end{algorithm}
\mbox{}\\
We show the guarantees of our algorithm above using Theorem~\ref{thm:FTRL}
which provides regret guarantees for \emph{FTRL}. The proof of
Theorem~\ref{lem:regr_fractional} is deferred to section~\ref{apn:full_info} of
the appendix.
\begin{restatable}{theorem}{regrFractional}\label{lem:regr_fractional}
	The average regret of Algorithm~\ref{algo:full_info} is
		\[
		\text{Regret}_{PA}(\mathcal{A}, T) \leq 2n \sqrt{\frac{\log n}{T}}
		\]
		achieved by setting $U(\bm{x}) = \lp( \sum_{i=1}^n \sum_{t=1}^n x_{it} \log
		x_{it} \rp) /\eta$ as the regularization function, and $\eta= \sqrt{\frac{\log n}{T}}$.
\end{restatable}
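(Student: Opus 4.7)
The plan is to recognize Algorithm~\ref{algo:full_info} as a Follow-the-Regularized-Leader (FTRL) method on the convex relaxation $\ol g^s$, with negative-entropy regularizer $U(\bm x) = H(\bm x)/\eta$ where $H(\bm x) = \sum_{i,t} x_{it}\log x_{it}$, and to invoke Theorem~\ref{thm:FTRL}. This reduces the proof to three standard ingredients: convexity and a Lipschitz bound on $f^s := \ol g^s$, strong convexity of $U$, and a range bound on $U$; one then balances the parameter $\eta$.

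For convexity and Lipschitz-ness: given two doubly stochastic $\bm x, \bm y$ with optimal $z$-assignments $z^x, z^y$ for \ref{lp-spa}, the combination $\alpha z^x + (1-\alpha) z^y$ is feasible for $\alpha \bm x + (1-\alpha) \bm y$ since both $\sum z = 1$ and $z \le x$ are preserved under convex combinations, so $\ol g^s(\alpha \bm x + (1-\alpha) \bm y) \le \alpha \ol g^s(\bm x) + (1-\alpha) \ol g^s(\bm y)$. For the Lipschitz constant, LP duality and the envelope theorem give $\partial \ol g^s/\partial x_{it} = -\mu^*_{it}$, where $\mu^*_{it} \ge 0$ is the dual multiplier on $z_{it} \le x_{it}$; complementary slackness together with the bound on the multiplier $\lambda^*$ of $\sum z = 1$ (which satisfies $\lambda^* \le \max_{i,t}(t+c_i^s) \le 2n$) yields $\mu^*_{it} \le 2n$. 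Hence $\|\nabla f^s\|_\infty \le 2n$, so $f^s$ is $O(n)$-Lipschitz with respect to $\|\cdot\|_1$.

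For strong convexity and range of $U$, I would write out $D_H(\bm x, \bm y) = \sum_{i,t}\bigl[x_{it}\log(x_{it}/y_{it}) - x_{it} + y_{it}\bigr]$, observe that the affine terms cancel row-wise (since each row of a doubly stochastic matrix sums to $1$), and conclude $D_H(\bm x, \bm y) = \sum_i \mathrm{KL}(\bm x_{i,\cdot} \| \bm y_{i,\cdot})$. Pinsker's inequality applied per row then yields the required strong-convexity estimate for $H$ (in the appropriate row-aggregated norm), and hence for $U = H/\eta$ with the expected $1/\eta$ scaling. The range of $U$ follows from the fact that $H$ attains its maximum $0$ at permutation matrices and its minimum $-n\log n$ at the uniform doubly stochastic matrix $\bm x_{it} = 1/n$, so $U_{\max}-U_{\min} \le (n\log n)/\eta$.

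Feeding these three ingredients into Theorem~\ref{thm:FTRL} gives a total regret bound of the form $(n\log n)/\eta + TL^2\eta$; setting $\eta = \sqrt{\log n/T}$ and using $L = O(n)$ balances the two terms and produces total regret $O(n\sqrt{T\log n})$, i.e.\ average regret $2n\sqrt{\log n/T}$. The main obstacle I anticipate is the careful choice of norm so that the strong convexity of $H$ and the Lipschitz constant of $f^s$ are measured consistently: a naive flattening of the doubly stochastic matrix to an $n^2$-vector only makes $H$ a $(1/n)$-strongly convex function in $\ell_1$, losing a factor of $n$, whereas exploiting the product-of-simplices structure row-by-row via Pinsker recovers the right constants to match the advertised bound.
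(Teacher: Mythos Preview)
Your approach is essentially the paper's: invoke Theorem~\ref{thm:FTRL} with the entropy regularizer, using the range bound $U_{\max}-U_{\min}=(n\log n)/\eta$, an $n$-Lipschitz bound on $f^s$, and strong convexity of $U$, then optimize $\eta$. The paper differs only in minor technicalities---it proves $1/n$-strong convexity of $H$ in the flattened $\ell_1$ norm via a direct Hessian computation and Cauchy--Schwarz (Lemma~\ref{lem:strong_conv}) rather than Pinsker, and it simply asserts the $n$-Lipschitz bound without your LP-duality/envelope argument; in particular it does \emph{not} pursue the row-wise norm refinement you flag as an obstacle, and is content to plug the naive $1/n$ constant straight into Theorem~\ref{thm:FTRL}.
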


Finally, using Theorem~\ref{thm:full_info} we get
Corollary~\ref{cor:full_info_pa} for competing with the partially-adaptive
benchmark for all different feasibility constraints (choose $1$, choose $k$ or
choose a matroid basis), where we use the results of Corollary~\ref{cor:a_rounding_pa}, to obtain the guarantees for the rounding algorithms.
\begin{corollary}[Competing against PA, full information]\label{cor:full_info_pa}
	In the full information setting, Algorithm~\ref{algo:full_info} is 
	\begin{itemize}	
			\item  $9.22$-approximate no regret for \textbf{choosing $1$ box} 	
			\item $O(1)$-approximate no regret for \textbf{choosing $k$ boxes} 
			\item $O(\log k)$-approximate no regret for \textbf{choosing a matroid basis} 
	\end{itemize}
\end{corollary}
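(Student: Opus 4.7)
The plan is to assemble the corollary from three ingredients already set up in the paper: (i) the fractional regret bound of Theorem~\ref{lem:regr_fractional}, (ii) the scenario-independent rounding algorithms for the three SPA relaxations~\ref{lp-spa}, \ref{lp-pa-k}, and \ref{lp-pa-matroid} (whose guarantees are summarized in the referenced Corollary~\ref{cor:a_rounding_pa}), and (iii) the ski-rental reduction of Lemma~\ref{thm:ski-rental} that lifts an SPA strategy to a PA strategy at a multiplicative cost of $e/(e-1)$.

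First I would instantiate Theorem~\ref{thm:full_info} with $\Pi$ being the integral SPA problem in each of the three constraint settings and $\overline{\Pi}$ the corresponding convex relaxation. Theorem~\ref{lem:regr_fractional} guarantees that the fractional iterates $\bm{x}_t$ produced by the FTRL step of Algorithm~\ref{algo:full_info} achieve regret at most $2n\sqrt{\log n/T}=o(1)$ against the best fractional solution in hindsight, which in particular lower-bounds the integral SPA optimum since each $\overline{g}^s$ is a valid convex relaxation (Definition~1).

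Next, I would plug the rounding factors from Corollary~\ref{cor:a_rounding_pa} into Theorem~\ref{thm:full_info}. Writing $\alpha_{\text{SPA}}$ for the rounding factor in each case, the theorem yields an integral SPA algorithm with $\alpha_{\text{SPA}}$-approximate regret at most $\alpha_{\text{SPA}}\cdot 2n\sqrt{\log n/T}$. To finish, I would invoke Lemma~\ref{thm:ski-rental} to convert each SPA guarantee into a PA guarantee at the price of a further $e/(e-1)$ factor; the overall $\alpha$ in the PA approximate-regret is then $\alpha = \tfrac{e}{e-1}\cdot \alpha_{\text{SPA}}$. For the single-box case this gives $\tfrac{e}{e-1}\cdot\alpha_{\text{SPA}}\approx 9.22$ (matching the rounding factor of $\approx 5.83$ from \cite{ChawGergTengTzamZhan2020}); for the $k$-box case $\alpha_{\text{SPA}}=O(1)$ stays $O(1)$; and for the matroid case $\alpha_{\text{SPA}}=O(\log k)$ stays $O(\log k)$. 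Since $\alpha\cdot 2n\sqrt{\log n/T}=o(1)$ for any fixed $n$ and the three finite factors above, the algorithm is $\alpha$-approximate no-regret in each case.

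The bookkeeping is routine; the only subtle point, and the main obstacle, is confirming that the rounding algorithm used in each setting is \emph{scenario-independent}: it must convert the single fractional matrix $\bm{x}_t$ into a permutation (resp. $k$-sized or matroid-structured schedule) of boxes that does not depend on $s$, so that the ski-rental reduction of Lemma~\ref{thm:ski-rental} can then pick stopping rules per scenario. The paper explicitly points to this property in the discussion preceding Definition~2, so I would verify that the rounding procedures in sections~\ref{apn:pa_1}, \ref{apn:PA_k}, and \ref{apn:PA_matroid} are indeed scenario-oblivious and can therefore be composed with Lemma~\ref{thm:ski-rental} without losing an additional factor beyond $e/(e-1)$.
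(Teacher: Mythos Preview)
Your proposal is correct and matches the paper's approach: combine the fractional FTRL regret bound (Theorem~\ref{lem:regr_fractional}) with the rounding guarantees and invoke Theorem~\ref{thm:full_info}. The only cosmetic difference is packaging: the paper folds the ski-rental factor $e/(e-1)$ directly into Corollary~\ref{cor:a_rounding_pa} (so the $\alpha$ there is already $9.22$, $O(1)$, $O(\log k)$ for a fractional-SPA-to-PA rounding), whereas you first round to an integral SPA strategy with factor $\alpha_{\text{SPA}}\approx 5.83$ and then apply Lemma~\ref{thm:ski-rental} separately---same ingredients, same product.
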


\begin{remark}
	In the special case of MSSC, our approach obtains the tight
		$4$-approximation of the offline
		case~\cite{FeigLovaTeta2004}. The details of this are
		deferred to section~\ref{apn:pa_1} of the Appendix. This
		result improves on the previous work~\cite{FotaLianPiliSkou2020} who obtain a $11.713$-approximation.
\end{remark}

\section{Bandit Setting}\label{sec:bandit_info}
Moving on to a bandit setting for our problem, where we do not observe the
whole function after each step.  Specifically, after choosing $\bm{x}_t\in
\overline{X}$ in each round $t$, we only observe a loss $f^s (\bm{x}_t)$ at
the point $\bm{x}_t$ we chose to play and not for every $\bm{x}\in \overline{X}$.
This difference prevents us from directly using any online convex optimization
algorithm, as in the full information setting of section~\ref{sec:full_info}.
However, observe that if we decide to open all $n$ boxes, this is equivalent to
observing the function $f^s$ for all $\bm{x}\in \overline{X}$, since we learn the
cost of all permutations. 

We exploit this similarity by randomizing between running \emph{FTRL} and
paying $n$ to open all boxes. Specifically we split $[T]$ into $T/k$ intervals
and choose a time, uniformly at random in each one, when we are going to open
all boxes $n$ and thus observe the function on all inputs. This process is formally described in Algorithm~\ref{algo:base_PA}, 
and we show the following guarantees.

\begin{theorem}\label{thm:regret_pa}
	The average regret for Algorithm~\ref{algo:base_PA}, for $k=\lp(
		\frac{n}{2L+\sqrt{\log n}} \rp)^{2/3}T^{1/3}$ and loss functions that are $L$-Lipschitz is 
	\[
			\E{\text{Regret}_{PA}(\mathcal{A}_{\text{PA}},T)} \leq 2\lp(2L\log n + n\rp)^{2/3} \cdot n^{1/3} \cdot T^{-1/3}
	.\] 
\end{theorem}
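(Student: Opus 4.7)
The plan is to decompose the regret of $\mathcal{A}_{\text{PA}}$ into the cost of the occasional full-information queries and the FTRL regret on the randomly sampled subsequence of observed losses. I would partition $[T]$ into $T/k$ consecutive intervals $I_1,\ldots,I_{T/k}$ of length $k$, and in each interval $I_i$ sample a round $\tau_i$ uniformly at random. At $\tau_i$ the algorithm opens all $n$ boxes, paying cost at most $2n$ and observing the entire loss function $f^{s_{\tau_i}}$. In the remaining rounds of $I_i$, it plays the FTRL iterate $\bm{x}_i$ computed from the previously revealed functions $f^{s_{\tau_1}},\ldots,f^{s_{\tau_{i-1}}}$.

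With this structure, the expected algorithm cost splits as $\E{\alg} \le 2nT/k + \E{\sum_i \sum_{t\in I_i \setminus \{\tau_i\}} f^{s_t}(\bm{x}_i)}$. For the exploitation term, I would first upper bound the inner sum by the full interval sum $\sum_{t \in I_i} f^{s_t}(\bm{x}_i)$, and then use the uniformity of $\tau_i$ within $I_i$ to rewrite it as $k \cdot \mathbb{E}_{\tau_i}[f^{s_{\tau_i}}(\bm{x}_i)]$, yielding $k \cdot \E{\sum_i f^{s_{\tau_i}}(\bm{x}_i)}$. Applying the FTRL guarantee of Theorem~\ref{thm:FTRL} to the observed sequence $\{f^{s_{\tau_i}}\}_{i=1}^{T/k}$ then bounds this, in expectation, by $k \cdot \min_x \sum_i f^{s_{\tau_i}}(x) + k \cdot R_{\text{FTRL}}(T/k)$, where $R_{\text{FTRL}}(T/k)$ denotes the cumulative FTRL regret over $T/k$ rounds of $L$-Lipschitz losses with the entropy regularizer on the doubly-stochastic polytope.

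The crucial step is to connect the sample-based optimum to the true benchmark $\opt$, which I would do via Jensen's inequality:
\begin{align*}
\mathbb{E}\!\left[ \min_x \sum_{i=1}^{T/k} f^{s_{\tau_i}}(x) \right] \le \min_x \mathbb{E}\!\left[ \sum_i f^{s_{\tau_i}}(x) \right] = \min_x \frac{1}{k} \sum_{t=1}^T f^{s_t}(x) = \frac{\opt}{k}.
\end{align*}
Combining everything yields $\E{\alg} \le \opt + 2nT/k + k \cdot R_{\text{FTRL}}(T/k)$, i.e.\ per-round regret at most $2n/k + k \cdot R_{\text{FTRL}}(T/k)/T$.

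Finally, I would plug in the explicit FTRL bound $R_{\text{FTRL}}(T/k) = O\big((2L + \sqrt{\log n})\sqrt{T/k}\big)$ from Theorem~\ref{thm:FTRL}, giving an average regret bound of $2n/k + O\big((2L + \sqrt{\log n})\sqrt{k/T}\big)$. Balancing the two contributions at $k = (n/(2L+\sqrt{\log n}))^{2/3} T^{1/3}$ produces the stated rate. The main technical obstacle is the measurability/unbiasedness step implicit in the Jensen bound: $\bm{x}_i$ must be determined before $\tau_i$ is sampled so that the standard FTRL analysis and Jensen's inequality commute with the outer expectation over the $\tau_i$'s; this is automatic from the algorithm's construction, since $\bm{x}_i$ depends only on $\tau_1,\ldots,\tau_{i-1}$.
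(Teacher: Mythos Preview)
Your proposal is correct and follows essentially the same route as the paper: decompose the cost into the exploration cost (opening all boxes once per interval) plus the exploitation cost, relate the exploitation cost to FTRL regret on the $T/k$ randomly sampled loss functions, and then pass from the sampled comparator to the true comparator. The paper formalizes this by introducing two auxiliary full-information settings (``random costs'' and ``average costs'') and proving a separate lemma (Lemma~\ref{lem:avg_rand}) that transfers regret from the former to the latter via a martingale argument; your argument collapses these two steps into the single Jensen inequality $\mathbb{E}[\min_x \sum_i f^{s_{\tau_i}}(x)] \le \min_x \mathbb{E}[\sum_i f^{s_{\tau_i}}(x)]$ together with the unbiasedness $\mathbb{E}_{\tau_i}[f^{s_{\tau_i}}(\bm{x}_i)] = \bar f_i(\bm{x}_i)$. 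This is slightly cleaner and in fact avoids the additive $n/\sqrt{kT}$ term the paper picks up in Lemma~\ref{lem:avg_rand}. One small caveat: as written, Algorithm~\ref{algo:base_PA} updates $\mathcal{R}$ (and hence the FTRL iterate) \emph{within} interval $\mathcal{I}_i$ once $t_p$ is reached, so strictly speaking it does not play a single $\bm{x}_i$ throughout the interval; both your analysis and the paper's implicitly treat the per-interval action as fixed, which corresponds to a harmless variant of the algorithm.
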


 \begin{algorithm}[H]
		 \caption{$\mathcal{A}_{PA}$ minimizing regret against PA}
 	\label{algo:base_PA}
	Get parameter $k$ from Theorem~\ref{thm:regret_pa}\\
		 Select regularizer $U(\bm{x})$ according to Theorem~\ref{thm:regret_pa}\\
	 Split the times $[T]$ into $T/k$ intervals $\mathcal{I}_1\ldots, \mathcal{I}_{T/k}$ \\
	 $\mathcal{R} \leftarrow \emptyset$  \tcp{Random times for each $\mathcal{I}_i$}
	 \For{Every interval $\mathcal{I}_i$}{
		Pick a $t_p$ uniformly in $\mathcal{I}_i$\\
		\For{All times $t\in \mathcal{I}_i$}{
				\uIf{$t=t_p$}{
		 $\mathcal{R} \leftarrow \mathcal{R} \cup \{t_p\}$\\
			 Open all boxes\\
			 Get feedback $f^{s_{t_p}}$\\
		 }
		 \Else{
			$\bm{x}_t \leftarrow \text{argmin}_{\bm{x}\in \overline{X}} 
					\sum_{\tau\in \mathcal{R}} f^{s_\tau}(\bm{x}) + U(\bm{x})$
			}
			 }
	 }
 \end{algorithm}
 \mbox{}\\ 
 To analyze the regret of Algorithm~\ref{algo:base_PA} and prove Theorem~\ref{thm:regret_pa}, 
 we consider the regret of two related settings. 
\begin{enumerate}
		\item  In the first setting, we consider a full-information 
 online learner that observes at each round $t$ a single function sampled uniformly among the $k$ functions
 of the corresponding interval  $\mathcal{I}_t$. We call this setting \emph{random costs}.
				\item  In the second setting, we again consider a full-information 
 online learner that observes at each round $t$ a single function which is the average of the $k$ functions
 in the corresponding interval  $\mathcal{I}_t$. We call this setting \emph{average costs}.
\end{enumerate}
 The following lemma, shows that any online algorithm for the random cost setting yields low regret even for 
 the average costs setting.

 \begin{lemma}\label{lem:avg_rand}
	 Any online strategy for the \emph{random costs} setting with expected average regret $R(T)$ gives 
	 expected average regret at most $R(T) + n/\sqrt{kT}$ 
	 for the equivalent \emph{average costs setting}.
 \end{lemma}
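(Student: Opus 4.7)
The plan is to decompose $\mathbb{E}[\text{Regret}_{\mathrm{avg}}(\mathcal{A}, T)] - \mathbb{E}[\text{Regret}_{\mathrm{rand}}(\mathcal{A}, T)]$ into two pieces: the difference in the algorithm's cumulative cost across the two settings, and the difference in the two benchmarks, and to show that each piece is small. The dominant contribution of $n/\sqrt{kT}$ will come from a concentration bound on the benchmark.

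For the algorithm cost, I observe that the random sample $j_t$ at round $t$ is drawn independently of the play $\bm{x}_t$ (which depends only on observations up to round $t-1$). Therefore $\mathbb{E}[f_{j_t}^{(i_t)}(\bm{x}_t)\mid \bm{x}_t] = \overline{f}^{(i_t)}(\bm{x}_t)$, which implies that under the natural coupling of the two executions of the strategy, the algorithm's expected cumulative cost is the same in both settings. Any residual discrepancy between the two runs (arising because different observations may lead to slightly different plays) can be absorbed into the benchmark concentration step via the Lipschitz continuity of the relaxations $\overline{g}^s$ over the compact domain $\overline{X}$.

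For the benchmark, let $\mathrm{OPT}_{\mathrm{rand}} := \min_{\bm{x}} \sum_t f_{j_t}^{(i_t)}(\bm{x})$ and $\mathrm{OPT}_{\mathrm{avg}} := \min_{\bm{x}} \sum_t \overline{f}^{(i_t)}(\bm{x})$. Jensen's inequality immediately gives $\mathbb{E}[\mathrm{OPT}_{\mathrm{rand}}] \leq \mathrm{OPT}_{\mathrm{avg}}$; for the reverse direction I combine the elementary bound $\mathrm{OPT}_{\mathrm{avg}} - \mathrm{OPT}_{\mathrm{rand}} \leq \sup_{\bm{x}}\,|\sum_t (\overline{f}^{(i_t)}(\bm{x}) - f_{j_t}^{(i_t)}(\bm{x}))|$ with Hoeffding's inequality applied pointwise in $\bm{x}$ to the sum of independent bounded random variables, and a covering argument over $\overline{X}$ to promote the pointwise concentration to a uniform one. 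Summing the two parts and dividing by $T$ yields the claimed average-regret bound. The main technical obstacle is obtaining the tight $n/\sqrt{kT}$ rate: a naive Hoeffding bound treating all $T$ samples as individual independent summands only gives $n/\sqrt{T}$, so one must exploit the interval structure (with $T/k$ intervals of size $k$) and treat each interval as a single unit of randomness to extract the desired $\sqrt{k}$-factor improvement.
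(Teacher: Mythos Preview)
Your decomposition into algorithm-cost difference and benchmark difference is the right starting point, and your observation that $\mathbb{E}[f^r_t(\bm{x}_t)\mid \bm{x}_t]=\overline{f}_t(\bm{x}_t)$ (because the sample at round $t$ is independent of the play $\bm{x}_t$) is exactly what the paper uses. However, two aspects of the plan are off.

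\textbf{The benchmark step is one-sided and elementary; no covering is needed.} You already note that $\mathbb{E}[\mathrm{OPT}_{\mathrm{rand}}]\le \mathrm{OPT}_{\mathrm{avg}}$ by plugging in the fixed minimizer $x^*=\arg\min_{\bm{x}}\sum_t \overline{f}_t(\bm{x})$ and taking expectations. That is the \emph{only} direction required: in the regret difference the benchmark enters with sign $+\,\mathbb{E}[\mathrm{OPT}_{\mathrm{rand}}]-\mathrm{OPT}_{\mathrm{avg}}\le 0$. The ``reverse direction'' via pointwise Hoeffding plus a covering of $\overline{X}\subseteq[0,1]^{n\times n}$ is unnecessary, and would in fact not give the stated rate: the log–covering number of an $n^2$-dimensional box at scale $\varepsilon$ is $\Theta(n^2\log(1/\varepsilon))$, so a uniform concentration bound would carry extra $\mathrm{poly}(n)$ factors and you would not recover the clean $n/\sqrt{kT}$. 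The paper sidesteps all of this by simply writing $\min_{\bm{x}}\sum_t f^r_t(\bm{x})\le \sum_t f^r_t(x^*)$ and using $\mathbb{E}[f^r_t(x^*)]=\overline{f}_t(x^*)$.

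\textbf{Where the $n/\sqrt{kT}$ actually comes from.} In the paper it is \emph{not} a benchmark concentration term but a bound on the algorithm-side fluctuation. With $X_t=\overline{f}_t(\bm{x}_t)-f^r_t(\bm{x}_t)$ one has a martingale difference sequence ($\mathbb{E}[X_t\mid X_1,\dots,X_{t-1}]=0$), and since there are $T/k$ rounds and $|X_t|\le n$, a second-moment argument gives $\mathbb{E}\big[\big|\sum_t X_t\big|\big]\le n\sqrt{T/k}$, which after dividing by $T$ is $n/\sqrt{kT}$. Note there is no ``second run with different observations'': the strategy is executed once in the random-costs setting, and its plays $\bm{x}_t$ are then scored under the averaged functions $\overline{f}_t$. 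There is no residual discrepancy to absorb via Lipschitz arguments.

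In short: drop the Hoeffding-plus-covering step and keep the plug-in-$x^*$ inequality for the benchmark; attribute the $n/\sqrt{kT}$ to the martingale control of $\sum_t X_t$ over the $T/k$ intervals. With those corrections your outline becomes essentially the paper's proof.
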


 \begin{proof}[Proof of Lemma~\ref{lem:avg_rand}]
	Denote by $\overline{f}_t = \frac 1 k \sum_{i=1}^k f_{t_i}$ the cost
	function corresponding to the average costs setting and by $f^r_t =
	f_{t_i}$ where $i\sim U\lp([k]\rp)$ the corresponding cost function for the
	random costs setting.
	Let  $x^* = \text{argmin}_{\bm{x}\in \overline{X}} \sum_{t=1}^{T/k}
	\overline{f}_t(\bm{x})$ be the minimizer of the $\overline{f}_t$ over the $T/k$ rounds.

	We also use $X_t = \overline{f}_t(\bm{x}_t) - f^r_t(\bm{x}_t)$, to
	denote the difference in costs between the two settings for each interval
		 (where $\bm{x}_t$ is the action taken at each interval $t$ by the
		 random costs strategy). Observe that this is a random variable
		 depending on the random choice of time in each interval.
	We have that
	\begin{align*}
			\E{\sum_{t=1}^{T/k} |X_t|} & \leq \lp( \E{\lp( \sum_{t=1}^{T/k} X_t\rp) ^2}\rp)^{1/2}\\
			& = \lp( \E{ \sum_{t=1}^{T/k} X_t^2}\rp)^{1/2} \\
			&\leq n\sqrt{\frac{T}{k}}. 
	\end{align*}
	The two inequalities follow by Jensen's inequality and the fact that
		 $X_t$'s are bounded by $n$. The equality is because the random
		 variables $X_t$ are martingales, i.e.  $\E{X_t | X_1, ..., X_{t-1}} =
		 0$, as the choice of the function at time $t$ is independent of the
		 chosen point $\bm{x}_t$.

	We now look at the average regret of the strategy $\bm{x}_t$ for the average cost setting. We have that
	\begin{align*}
			\frac{1}{T} \E{ \sum_t \overline{f}_t(\bm{x}_t)  } - R(T) - \frac{n}{\sqrt{kT}} & 
			\leq \frac{1}{T} \E{ \sum_t f^r_t(\bm{x}_t)  } - R(T)\\
					&\leq \frac{1}{T}\E { \min_{\bm{x}} \sum_t f^r_t(\bm{x}) }\\
							&\leq  \frac 1 T \E{ \sum_t f^r_t(\bm{x}^*) } \\
					&= \sum_t \overline{f}_t(\bm{x}^*) 
\end{align*}
	which implies the required regret bound.

\end{proof}

Given this lemma, we are now ready to show Theorem~\ref{thm:regret_pa}.

\begin{proof}[Proof of Theorem~\ref{thm:regret_pa}]
	To establish the result, we note that the regret of our algorithm is equal to the regret achievable in the average cost setting
	multiplied by $k$ plus $n T/k$ since we pay $n$ for opening all boxes once in each of the $T/k$ intervals.
	Using Lemma~\ref{lem:avg_rand}, it suffices to bound the regret in the random costs setting.
		Let $U(\bm{x}): [0,1]^{n\times n} \rightarrow \mathbb{R}$ be an $\eta/n$-strongly
		convex regularizer used in the \emph{FTRL} algorithm. We are using
		$U(\bm{x}) = \lp( \sum_{i=1}^n \sum_{t=1}^n x_{it} \log
			x_{it} \rp) /\eta $, which is $\eta/n$-strongly convex from
			Lemma~\ref{lem:strong_conv} and is at most $(n\log n)/\eta$ as we observed in corollary~\ref{cor:full_info_pa}.
		Then from Theorem~\ref{thm:FTRL}, we get that the average regret for the corresponding random costs setting is
		$2 L \sqrt{\frac{ \log n }{kT}}$.

		Using Lemma~\ref{lem:avg_rand}, we get that the total average regret $R(T)$ of our algorithm is
		\[ 
		R(T) \leq k \cdot 2 L \sqrt{\frac{\log n }{kT}} + k \cdot n/\sqrt{kT} + \frac n k.
		\]
		Setting
		$k= \lp( \frac{n}{2L\log n+n} \rp)^{2/3}T^{1/3}$ the theorem follows.
\end{proof}

Finally, using Theorem~\ref{thm:regret_pa} we can get the same guarantees as
the full-information setting, using the $\alpha$-approximate rounding for each case (Corollary~\ref{cor:a_rounding_pa}).
\begin{corollary}[Competing against PA, bandit]\label{cor:bandit_info_pa}
	In the bandit setting, Algorithm~\ref{algo:base_PA} is 
	\begin{itemize}	
			\item  $9.22$-approximate no regret for \textbf{choosing $1$ box} 
			\item $O(1)$-approximate no regret for \textbf{choosing $k$ boxes} 
			\item $O(\log k)$-approximate no regret for \textbf{choosing a matroid basis} 
	\end{itemize}
\end{corollary}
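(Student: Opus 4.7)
The plan is to argue that Corollary~\ref{cor:bandit_info_pa} follows by composing the fractional-regret guarantee of Theorem~\ref{thm:regret_pa} with the $\alpha$-approximate rounding guarantees from Corollary~\ref{cor:a_rounding_pa}, in exactly the same modular fashion used in the full-information setting (Corollary~\ref{cor:full_info_pa}). Concretely, Theorem~\ref{thm:regret_pa} says that Algorithm~\ref{algo:base_PA}, run on the convex relaxation $\ol{g}^s$, achieves
\[
\E{\text{Regret}_{PA}(\mathcal{A}_{PA},T)} = O\!\lp( (L\log n + n)^{2/3} n^{1/3} T^{-1/3}\rp)
\]
against the best fractional PA solution in hindsight. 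For each of the three constraint types one checks that the relevant relaxation (\ref{lp-spa} for choosing $1$ box, \ref{lp-pa-k} for choosing $k$ boxes, \ref{lp-pa-matroid} for the matroid case) is linear in $\bm{x}$ with coefficients bounded by $n+n=2n$ (since $t\le n$ and $c_i^s\le n$), so the Lipschitz constant $L$ is $\mathrm{poly}(n)$ and the above regret is $o(1)$ as $T\to\infty$.

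The second ingredient I would invoke is Corollary~\ref{cor:a_rounding_pa}, which supplies $\alpha$-approximate roundings from $\overline{X}$ to $X$ with $\alpha=9.22$ for selecting one box, $\alpha=O(1)$ for selecting $k$ boxes, and $\alpha=O(\log k)$ for selecting a matroid basis. Crucially, these roundings are scenario-independent: the rounding algorithm takes the fractional solution $\bm{x}_t$ from the previous step and produces an integer strategy $\bm{x}_t^{\text{int}}$ without needing to know the scenario $s_t$, so it can be legitimately applied inside Algorithm~\ref{algo:base_PA} before any bandit feedback arrives.

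With these two pieces in hand, the proof is essentially the same composition step that yielded Corollary~\ref{cor:full_info_pa} via Theorem~\ref{thm:full_info}. At every round $t$, the integer cost incurred by the algorithm satisfies $f^{s_t}(\bm{x}_t^{\text{int}}) \le \alpha\, \ol{g}^{s_t}(\bm{x}_t)$ by the definition of $\alpha$-approximate rounding. Summing over $t\in[T]$, using that $\min_{\bm{x}\in X}\sum_t f^{s_t}(\bm{x}) \ge \min_{\bm{x}\in \overline{X}}\sum_t \ol{g}^{s_t}(\bm{x})$ (since $X\subseteq\overline{X}$ and $\ol{g}^s \le g^s$ on $X$), and dividing by $T$ yields
\[
\E{\alpha\text{-Regret}_{PA}(\mathcal{A},T)} \le \alpha \cdot \E{\text{Regret}_{PA}(\mathcal{A}_{PA},T)} = o(1),
\]
which is exactly the claimed statement for each of the three cases by plugging in the corresponding $\alpha$.

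The main technical obstacle is not the composition itself, which is routine, but ensuring that the Lipschitz bound $L$ fed into Theorem~\ref{thm:regret_pa} depends polynomially on $n$ (and on $k$ in the matroid case), so that the final average regret remains $o(1)$ for every fixed instance size as $T\to\infty$. One also needs to verify that the expectation over the internal randomization of Algorithm~\ref{algo:base_PA} (the random exploration times $t_p$) interacts correctly with the pointwise rounding inequality, but since the rounding bound holds deterministically per round, linearity of expectation handles this cleanly.
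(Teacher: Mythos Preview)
Your proposal is correct and follows essentially the same approach as the paper: the paper derives Corollary~\ref{cor:bandit_info_pa} in one line by combining the fractional bandit regret bound of Theorem~\ref{thm:regret_pa} with the $\alpha$-approximate roundings of Corollary~\ref{cor:a_rounding_pa}, exactly as you describe. Your additional remarks about verifying the Lipschitz constant is $\mathrm{poly}(n)$ and about the scenario-independence of the rounding are implicit in the paper (the former is noted in the proof of Theorem~\ref{lem:regr_fractional}, the latter is emphasized throughout Appendix~C), so your write-up simply makes explicit what the paper leaves to the reader.
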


\section{Competing with the Non-adaptive}\label{sec:na}
We switch gears towards a different benchmark, that of the non-adaptive strategies. Similarly to the partially adaptive benchmark, here we we first present the linear programming for the non-adaptive benchmark as a function $\overline{f} :[0,1]^n \rightarrow \mathbb{R}$ with $\overline{f}(\bm{x})$ equal to 
\begin{align*}
\text{min}_{z \ge 0}  \quad & \sum_{i\in \boxes} x_i  +  \frac{1}{|\scenario|}\sum_{i\in \boxes,s\in \scenario}c_{i}^s z_{i}^s
  & \tag{LP-NA} \label{lp-na}\\
  \text{s.t.}\quad & \sum_{i\in \boxes}z_i^s = 1, & \forall s\in \scenario\\
	& \hspace{0.7cm} z_i^s  \leq x_i & \forall i\in \boxes, s\in \scenario 
\end{align*}
where $x_i$ is an indicator variable
for whether box $i$ is opened and $z_{i}^s$ indicates whether box $i$
is assigned to scenario $s$.

Note that the algorithms we provided for the partially-adaptive case cannot
be directly applied since the objective functions of \ref{lp-na}, \ref{lp-na-k}
and \ref{lp-na-matroid} are not $n$-Lipschitz. 
To achieve good regret bounds in this case, we design an algorithm that randomizes over an ``explore" and
an ``exploit" step, similarly to~\cite{AlabKalaLigeMuscTzamVite2019}, while
remembering the LP structure of the problem given constraints $\mathcal{F}$.
Observe that there is a ``global" optimal linear program (which is either
\ref{lp-na}, \ref{lp-na-k} or \ref{lp-na-matroid} depending on the constraints $\mathcal{F}$)
defined over all rounds $T$. Getting a new instance in each round is equivalent
to receiving a new (hidden) set of constraints.
We first describe two functions utilised by the algorithm in order to find a feasible 
fractional solution to the LP and to round it.

\begin{enumerate}
		\item \emph{Ellipsoid}($k, \mathcal{LP}$): finds and returns a feasible solution
		 to $\mathcal{LP}$ of cost at most $k$. By starting from a low $k$
			value and doubling at each step, lines $10$-$13$ result in us finding
			a fractional solution within $2$ every time.
	\item \emph{Round}($S_{t}, \mathcal{F}$): rounds the fractional feasible
			solution $S_t$ using the algorithm corresponding to $\mathcal{F}$.
			The rounding algorithms are presented in section~\ref{apn:NA} of
			the appendix. For selecting $1$ box we have
			Algorithm~\ref{algo:na-1}, for selecting $k$ boxes
			Algorithm~\ref{algo:na-k} and for selecting a matroid basis
			Algorithm~\ref{algo:na-matroid}.
 \end{enumerate}

The algorithm works in two steps; in the ``explore" step (line~7) opening all
boxes results in us exactly learning the hidden constraint of the current
round, by paying $n$. The ``exploit" step uses the information learned from the
previous rounds to open boxes and choose one.\\

 \begin{algorithm}[tb]
		 \caption{Algorithm $\mathcal{A}_\mathcal{F}$ for minimizing regret vs NA}
 	\label{algo:base_NA}
			\textbf{Input}: set of constraints $\mathcal{F}$\\
				  $\mathcal{LP} \leftarrow $ \ref{lp-na} or \ref{lp-na-k} or \ref{lp-na-matroid} (according to $\mathcal{F}$)\\
				  $\mathcal{C}_1 \leftarrow \emptyset$ \tcp{Constraints of LP}
			 \For {round $t\in [T]$} {
			draw $c\in U[0,1]$\\
			 \uIf{$c>p_t$} { 
				  Open all $n$ boxes, inducing new constraint $c_{new}$ \label{algo:step_if}\\
				 $\mathcal{C}_{t+1} \leftarrow \mathcal{C}_t \cup \{ c_{new} \}$ \\
				 $k\leftarrow 1$\\
				 \Repeat{$(\bm{x}, \bm{z}) $ is feasible}{
					 $(\bm{x}, \bm{z}) \leftarrow $ \textbf{Ellipsoid} ($k$, $\mathcal{LP}$)\\
					 $k \leftarrow 2k$
						 }
				}
				\Else  {\label{algo:step_else}
					 $S_{t} \leftarrow S_{t-1}$\\
				 $\pi \leftarrow $\textbf{Round}($S_{t}, \mathcal{F}$)\label{algo:na_round}\\
				  Open boxes according to order $\pi$
				 }
				}
 \end{algorithm}

 Observe that the cost of Algorithm~\ref{algo:base_NA} comes from three
 different cases, depending on what the result of the flip of the coin $c$ is in each round.
 \begin{enumerate}
		 \item If $c > p_t$, we and pay $n$ for opening all boxes.
		 \item If $c< p_t$ and we pay cost proportional to the LP (we have a feasible solution).
		 \item If $c < p_t$ and we pay cost proportional to $n$ (we did not have a feasible solution).
 \end{enumerate}

 We bound term 3 using mistake bound, and then continue by bounding terms 1 and 2 to get the bound on total regret.

 \subsection{Bounding the mistakes}\label{subsec:MB_NA}
 	We start by formally defining what is \emph{mistake bound} of an algorithm.
\begin{definition}[Mistake Bound of Algorithm $\mathcal{A}$]
	Let $\mathcal{A}$ be an algorithm that solves problem $\Pi$ and runs in $t\in [T]$ rounds with input $x_t$
		in each one. Then we define $\mathcal{A}$'s mistake bound as
		\[
				\mathcal{M}(\mathcal{A},T ) = \E{\sum_{t=1}^T \ind{x_t \text{ not feasible for }\Pi}}
			\]
		where the expectation is taken over the algorithm's choices.
\end{definition}

The main contribution in this section is the following lemma, that bounds the number of mistakes. 

\begin{restatable}{lemma}{mblemma}\label{thm:MB_simple}
	Algorithm~\ref{algo:base_NA} has mistake bound
	\[ 
		\mathcal{M}(\mathcal{A}_\mathcal{F}, T) \leq  O(n^2\sqrt{T})
	.\]	
\end{restatable}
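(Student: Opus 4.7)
The plan is to follow the framework of \cite{AlabKalaLigeMuscTzamVite2019}: combine a deterministic Ellipsoid-style mistake bound for the underlying online LP with the algorithm's random exploration schedule. Call a round $t$ \emph{violating} if the constraint induced by scenario $s_t$ is not satisfied by the fractional solution $S_{t-1}$ that the algorithm carries into round $t$, and set $V_t = \ind{t \text{ violating}}$. Inspecting Algorithm~\ref{algo:base_NA}, a mistake occurs in round $t$ exactly when $V_t = 1$ and the coin falls in the ``exploit'' branch ($c \le p_t$), while a \emph{substantive} LP re-solve occurs exactly when $V_t = 1$ and $c > p_t$ (``explore''); both events are, conditional on the history $\mathcal{H}_{t-1}$, the indicator $V_t$ multiplied by an independent fresh Bernoulli, since the adversary is oblivious.

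The first (deterministic) step applies the Ellipsoid volume argument in the $n$-dimensional $x$-space. Once the $x$-variables are fixed, the $z^s$-block of each scenario is essentially determined (set $z^s_i$ to concentrate mass on the cheapest-cost feasible box), so feasibility of the full LP reduces to polyhedral constraints in the core polytope $x \in [0,1]^n$ with polynomially bounded integer coefficients. The standard cutting-plane analysis then bounds the total number of rounds in which the incoming $x$-constraint actually excludes the current $x$ by $O(n^2)$, absorbing logarithmic factors. Hence deterministically, the total number of substantive re-solves over the whole horizon is at most $O(n^2)$.

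The second (probabilistic) step is averaging. Summing over $t$ and using the conditional independence noted above,
\begin{align*}
\E{\mathcal{M}(\mathcal{A}_\mathcal{F}, T)} \;=\; \sum_{t=1}^T p_t \, \E{V_t}, \qquad \E{\text{substantive re-solves}} \;=\; \sum_{t=1}^T (1-p_t) \, \E{V_t}.
\end{align*}
Setting the exploration rate to the constant $q := 1 - p_t = \Theta(1/\sqrt{T})$ (the value that balances exploration cost against the other regret contributions in Section~\ref{sec:na}), the ratio of the two sums equals $(1-q)/q = O(\sqrt{T})$, and combining with the deterministic step gives $\E{\mathcal{M}(\mathcal{A}_\mathcal{F}, T)} \le O(\sqrt{T}) \cdot O(n^2) = O(n^2\sqrt{T})$, as required.

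The main obstacle will be making the first step rigorous: as new scenarios arrive, the full LP gains a fresh block of $z^s_i$ variables, so naively the feasible region lives in a growing ambient space to which the Ellipsoid volume argument does not apply directly. The fix is to reduce to an online LP in the fixed $n$-dimensional $x$-polytope by projecting out the $z^s$-blocks scenario by scenario; the resulting $x$-only constraints have bounded integer coefficients and the volume argument applies directly. A secondary subtlety is that $V_t$ itself depends on past random choices through $S_{t-1}$, but this is benign because $V_t$ is $\mathcal{H}_{t-1}$-measurable and the round-$t$ coin is a fresh independent bit, which is exactly what makes the identities above valid term by term.
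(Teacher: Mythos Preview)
Your proposal is correct and takes essentially the same approach as the paper: both invoke the $O(n^2)$ Ellipsoid iteration bound of Lemma~\ref{lem:ellipsoid} and then amplify by a factor $O(\sqrt{T})$ coming from the exploration rate to bound the expected number of exploit-branch mistakes. Your ratio identity $\E{\text{mistakes}}/\E{\text{re-solves}}=(1-q)/q$ via the fresh-coin argument is just a cleaner packaging of the paper's per-constraint geometric sum $\sum_i (1-p)^i\le 1/p$, and your explicit remark about projecting out the $z^s$-blocks to keep the Ellipsoid in $\mathbb{R}^n$ addresses a point the paper leaves implicit.
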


The mistake bound applies to all the different constraints $\mathcal{F}$ we
consider. To achieve this, we leverage the fact that the ellipsoid algorithm,
running on the optimal LP corresponding to the constraints $\mathcal{F}$, needs
polynomial in $n$ time to find a solution. The proof works by showing that
every time, with probability $p_t$, we make progress towards the solution, and
since the ellipsoid in total makes polynomial in $n$ steps we also cannot make
too many mistakes. The proof of Lemma~\ref{thm:MB_simple} is deferred to
section~\ref{apn:NA_regret_general} of the appendix.

	\subsection{Regret for different constraints}\label{subsec:regret_NA}
Moving on to show regret guarantees of Algorithm~\ref{algo:base_NA} for the
different types of constraints. We start off with the special case where we are
required to pick one box, but all the costs inside the boxes are either $0$ or
$\infty$, and then generalize this to arbitrary costs and more complex
constraints.

\begin{theorem}[Regret for $0/\infty$]\label{thm:regret}
		Algorithm~\ref{algo:base_NA}, with $p_t = 1/\sqrt{T}$ has the following average regret, when
		$\mathcal{F} = \{\text{Select 1 box}\}$ and $c_i\in
		\{0,\infty\}$.
	\[
			\E{\text{Regret}_{NA}(\mathcal{A}_\mathcal{F},T)} \leq   \opt + O\lp( \frac{  n^2 }{\sqrt{T} } \rp)
		.\] 
\end{theorem}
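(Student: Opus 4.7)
The approach is to decompose the algorithm's expected total cost over $T$ rounds into three mutually exclusive contributions, bound each of them separately, and then divide by $T$ to turn them into average-regret terms. Writing
\[ \sum_{t=1}^T \mathcal A(t) = (A) + (B) + (C), \]
where $(A)$ is the cost from the ``explore'' branches (that open all $n$ boxes), $(B)$ is the cost from the ``exploit'' branches in which the current rounded fractional solution covers the incoming scenario, and $(C)$ is the cost from ``exploit'' branches that fail to cover it --- i.e., mistakes in the sense of Lemma~\ref{thm:MB_simple}.

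For $(A)$, the explore branch fires with probability $p_t = 1/\sqrt T$ in each round and incurs cost $n$, so $\mathbb E[(A)] \le n\sqrt T$, contributing $O(n/\sqrt T)$ to the average regret. For $(B)$, I would invoke the rounding subroutine for single-box non-adaptive \pb{} (Algorithm~\ref{algo:na-1} from Section~\ref{apn:NA}) to show that on a covered scenario the expected cost is a constant multiple of the value of LP~\ref{lp-na}. Since the ellipsoid is run only over constraints from scenarios already observed, that LP value is at most the value of the non-adaptive LP over all $T$ scenarios, which in turn lower-bounds $\opt$; this is the source of the $\opt$ term in the stated regret bound. For $(C)$, Lemma~\ref{thm:MB_simple} gives $O(n^2\sqrt T)$ expected mistakes, and on each mistake round we pay at most $n$ because the $0/\infty$ cost structure guarantees that opening the remaining boxes locates a covering one; charging only the overshoot above the LP-rounding cost (which is already booked to $(B)$) contributes $O(n^2/\sqrt T)$ per round on average.

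The main technical obstacle I anticipate is the coupling between the restricted LP (over observed constraints) solved by the ellipsoid and the global non-adaptive LP over all $T$ scenarios, which defines $\opt$. One needs to show that the doubling loop at lines~10--13 of Algorithm~\ref{algo:base_NA} returns a feasible solution of cost at most twice the restricted LP optimum, and that successive constraint additions do not accumulate cost beyond the $O(\opt)$ bound in $(B)$. The $0/\infty$ assumption is what makes this clean: feasibility of LP~\ref{lp-na} is equivalent to fractional set cover, and a mistake is precisely a scenario that the current integer cover misses, so the mistake accounting in $(C)$ and the LP accounting in $(B)$ decouple naturally, letting the three bounds combine into the claimed $\opt + O(n^2/\sqrt T)$.
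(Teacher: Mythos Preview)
Your three-way decomposition into explore cost $(A)$, feasible-exploit cost $(B)$, and mistake cost $(C)$ is exactly how the paper proceeds (it enumerates these as cases 1--3 immediately before Section~\ref{subsec:MB_NA}), and the bounds you propose on each term --- $n\sqrt T$ for exploring, at most $2\opt$ via the doubling search together with the restricted-LP~$\le$~global-LP~$\le\opt$ inequality, and Lemma~\ref{thm:MB_simple} for the mistakes --- mirror the paper's proof line for line. The only wrinkle is an arithmetic slip in $(C)$: charging $n$ per mistake against $O(n^2\sqrt T)$ mistakes would give $O(n^3/\sqrt T)$ on average rather than $O(n^2/\sqrt T)$, though the paper itself elides this same factor by plugging the mistake \emph{count} $M$ directly into the cost sum.
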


\begin{proof}[Proof of Theorem~\ref{thm:regret}]
	Denote by $M$ the mistake bound term, bounded above in
		Lemma~\ref{thm:MB_simple}. We calculate the total average regret
	\begin{align*}
			\E{\text{Regret}_{NA}(\mathcal{A}_\mathcal{F}, T)} & + \opt  = \frac{1}{T} \lp( M + \sum_{t=1}^{T} \E{|S_t|} \rp)& \\
			& = \frac{1}{T} \lp( M + \sum_{t=1}^T p_tn + (1-p_t)\E{|S_t|}\rp) & \\
			 & \leq \frac{1}{T} \lp( M  + \sum_{t=1}^T p_tn + 2\opt\rp) & \\
			& \leq  M  + 2\opt +  n \sum_{t=1}^T p_t & \\
			& \leq  2\opt + O\lp(\frac{n^2}{\sqrt{T}}\rp)
	\end{align*}
		where initially we summed up the total regret of
		Algorithm~\ref{algo:base_NA} where the first term is the mistake bound
		from Lemma~\ref{thm:MB_simple}. Then we used the fact that $\opt_t\leq
		\opt$ and the solution found by the ellipsoid is within $2$, and in the
		last line  we used Since $\sum_{t=1}^T p_t \leq \sqrt{T} $
		from~\cite{AlabKalaLigeMuscTzamVite2019}. Finally, subtracting $\opt$
		from both sides we get the theorem.
\end{proof}

Generalizing this to arbitrary values $c_i\in \mathbb{R}$, we show that when we
are given a $\beta$ approximation algorithm  we get the following guarantees,
depending on the approximation factor.

\begin{restatable}{theorem}{regretNA}\label{thm:regret_NA}

	If there exists a partially adaptive algorithm $\mathcal{A}_\mathcal{F}$ that is $\beta$-competitive
	against the non-adaptive optimal, for the problem with constraints $\mathcal{F}$, then
		Algorithm~\ref{algo:base_NA}, with $p_t = 1/\sqrt{T}$ has the following regret.
	\[
		\E{\text{Regret}_{NA}(\mathcal{A}_\mathcal{F}, T)} \leq 2\beta\opt + O\lp(\frac{n^2}{\sqrt{T}}\rp) 
	.\] 
\end{restatable}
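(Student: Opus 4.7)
The plan is to mirror the proof of Theorem~\ref{thm:regret} essentially verbatim, but replacing the term $|S_t|$ (which in the $0/\infty$ special case is exactly the cost of opening the boxes indicated by the fractional solution) with the cost incurred by running the given $\beta$-competitive partially-adaptive algorithm $\mathcal{A}_\mathcal{F}$ on the fractional LP solution $S_t$ returned by the ellipsoid subroutine. The decomposition of the algorithm's expected total cost over $T$ rounds is the same as before and has three components: (i) the cost incurred in rounds where the current solution $(\bm{x}, \bm{z})$ is infeasible against the newly arriving constraint, bounded directly by the mistake bound $\mathcal{M}(\mathcal{A}_\mathcal{F},T) = O(n^2 \sqrt{T})$ from Lemma~\ref{thm:MB_simple}; (ii) the ``explore'' cost of $n$ paid with probability $p_t$ when the coin lands in the exploration branch; and (iii) the ``exploit'' cost $\E{\mathcal{A}_\mathcal{F}(S_t)}$ paid with the complementary probability, when we round $S_t$ and follow the induced partially-adaptive strategy.

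The key bound to establish in step (iii) is $\E{\mathcal{A}_\mathcal{F}(S_t)} \le 2\beta\,\opt_{NA}^t$. This uses two facts. First, the $\beta$-competitiveness assumption gives $\mathcal{A}_\mathcal{F}(S_t) \le \beta \cdot \overline{f}(S_t)$, where $\overline{f}(S_t)$ is the value of the fractional LP solution, since all the $\beta$-competitive roundings we rely on (those of Corollary~\ref{cor:full_info_pa} and the appendix) are LP-based and compare the rounded integer cost to the LP value given as input. Second, the doubling loop in lines 10--13 of Algorithm~\ref{algo:base_NA} ensures that the returned $(\bm{x},\bm{z})$ has cost at most twice the optimal feasible LP value for the accumulated constraints $\mathcal{C}_{t+1}$, which in turn is at most $\opt_{NA}^t$ because the LP is a relaxation of the per-round non-adaptive problem.

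Combining the three terms, dividing by $T$, plugging in $p_t = 1/\sqrt{T}$ (so $\sum_t p_t = \sqrt{T}$), and subtracting $\opt = \frac{1}{T}\sum_t \opt_{NA}^t$ from both sides exactly as in the proof of Theorem~\ref{thm:regret}, we obtain
\[
\E{\text{Regret}_{NA}(\mathcal{A}_\mathcal{F}, T)} \;\le\; (2\beta - 1)\opt + \frac{\mathcal{M}}{T} + \frac{n}{\sqrt{T}} \;\le\; 2\beta\,\opt + O\!\lp(\frac{n^2}{\sqrt{T}}\rp),
\]
which is the claimed bound. The main place where one needs to be careful is the LP-basedness of the $\beta$-competitive algorithm in step (iii): a black-box $\beta$-approximation against the integer NA optimum would only give $\mathcal{A}_\mathcal{F}(S_t) \le \beta\,\opt_{NA}^t$ when $S_t$ is optimal, whereas here we need the rounding guarantee to hold relative to the (possibly suboptimal) fractional solution that the ellipsoid subroutine actually produces. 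Since all of our $\beta$-competitive partially-adaptive algorithms are obtained by rounding fractional LP solutions with cost bounded by $\beta$ times the input LP value, the bound transfers cleanly and no additional work is required beyond verifying that the Mistake Bound Lemma~\ref{thm:MB_simple} and the doubling-ellipsoid guarantee are used correctly.
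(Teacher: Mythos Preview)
Your proposal is correct and mirrors the paper's proof almost exactly: the same three-way decomposition (mistake bound, explore cost, exploit cost), the same use of the doubling-ellipsoid factor of $2$, and the same final subtraction of $\opt$. The only difference is that the paper also charges the selection cost $\min_{i\in\mathcal{F}} c_i^t \le \opt$ in the explore branch (since even after opening all $n$ boxes one must still pay the value of the chosen feasible set), which is why it arrives at $(2\beta+1)\opt$ before subtracting $\opt$ rather than your $2\beta\opt$; since you then relax $(2\beta-1)\opt \le 2\beta\opt$ anyway, the stated bound is unaffected, and your observation that the $\beta$-competitiveness must hold relative to the \emph{input} fractional LP value (not merely the integral NA optimum) is a correct and useful clarification the paper leaves implicit.
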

The proof follows similarly to the $0/\infty$ case, and is deferred
to section~\ref{apn:NA_regret_general} of the appendix. Combining the different
guarantees against the non-adaptive benchmark with
Theorem~\ref{thm:regret_NA} we get the following corollary.

\begin{corollary}[Competing against NA, bandit setting]\label{cor:bandit_na}
	In the bandit setting, when competing with the non-adaptive benchmark, Algorithm~\ref{algo:base_NA} is 
	\begin{itemize}	
			\item  $3.16$-approximate no regret for \textbf{choosing $1$ box} 
					(using Theorem~\ref{thm:na_1})
				\item $12.64$-approximate no regret for \textbf{choosing $k$ boxes} (using Theorem~\ref{thm:PA_vs_NA_k})
			\item $O(\log k)$-approximate no regret for \textbf{choosing a matroid basis} (using
					Theorem~\ref{thm:PA_vs_NA_matroid})
	\end{itemize}
\end{corollary}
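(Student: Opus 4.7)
The plan is to invoke Theorem~\ref{thm:regret_NA} three times, once for each feasibility constraint, plugging in the appropriate constant $\beta$ that bounds the competitive ratio of a partially-adaptive algorithm against the non-adaptive optimal. The corollary then reduces to an arithmetic identification of the factor $2\beta$ in the regret bound $2\beta \cdot \opt + O(n^2/\sqrt{T})$.

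More concretely, I would proceed as follows. First, I would recall that Algorithm~\ref{algo:base_NA} with $p_t = 1/\sqrt T$ never actually commits to being non-adaptive itself: the ``exploit'' step rounds the current fractional LP solution via the appropriate rounding subroutine for $\mathcal{F}$, and thus behaves as a partially-adaptive strategy on the observed constraints. Hence the hypothesis of Theorem~\ref{thm:regret_NA} is exactly the statement that this partially-adaptive strategy is $\beta$-competitive against the non-adaptive optimum on the current LP. For each of the three constraint classes I would cite the relevant PA-vs-NA approximation result:
\begin{enumerate}
\item For selecting one box, Theorem~\ref{thm:na_1} yields $\beta = 1.58$, giving $2\beta = 3.16$.
\item For selecting $k$ boxes, Theorem~\ref{thm:PA_vs_NA_k} yields $\beta = 6.32$, giving $2\beta = 12.64$.
\item For selecting a rank-$k$ matroid basis, Theorem~\ref{thm:PA_vs_NA_matroid} yields $\beta = O(\log k)$, giving $2\beta = O(\log k)$.
\end{enumerate}

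Then I would substitute each of these into the bound of Theorem~\ref{thm:regret_NA} to obtain
\[
\E{\text{Regret}_{NA}(\mathcal{A}_\mathcal{F}, T)} \le 2\beta \cdot \opt + O\!\lp(\frac{n^2}{\sqrt{T}}\rp),
\]
and observe that the additive term is $o(1)$ as $T \to \infty$ for fixed $n$. Rearranging and matching this with the definition of $\alpha$-approximate no-regret (with $\alpha = 2\beta$) yields the three bullets of the corollary.

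There is essentially no technical obstacle here, since the corollary is a direct bookkeeping combination of previously established results; the non-trivial pieces (the mistake bound, the ellipsoid-based exploration, and the PA-vs-NA competitive ratios) are already packaged inside Theorem~\ref{thm:regret_NA} and the cited approximation theorems. The only mild care needed is to confirm that the rounding subroutines invoked in line~\ref{algo:na_round} of Algorithm~\ref{algo:base_NA} are exactly the ones whose competitive guarantees feed into the $\beta$ values above, so that the hypothesis of Theorem~\ref{thm:regret_NA} applies verbatim in each case.
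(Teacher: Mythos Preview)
Your proposal is correct and matches the paper's approach exactly: the paper gives no formal proof beyond the single sentence ``Combining the different guarantees against the non-adaptive benchmark with Theorem~\ref{thm:regret_NA} we get the following corollary,'' and your plan spells out precisely this combination, plugging $\beta = e/(e-1)$, $\beta = 4e/(e-1)$, and $\beta = O(\log k)$ from Theorems~\ref{thm:na_1}, \ref{thm:PA_vs_NA_k}, and \ref{thm:PA_vs_NA_matroid} into the $2\beta$ factor of Theorem~\ref{thm:regret_NA}.
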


\bibliography{refPB}

\newcommand{\etalchar}[1]{$^{#1}$}
\begin{thebibliography}{MBMW05}

\bibitem[ACCL06]{AiloChazClarLiuMulzSesh2006}
Nir Ailon, Bernard Chazelle, Seshadhri Comandur, and Ding Liu.
\newblock Self-improving algorithms.
\newblock In {\em Proceedings of the Seventeenth Annual {ACM-SIAM} Symposium on
  Discrete Algorithms, {SODA} 2006, Miami, Florida, USA, January 22-26, 2006},
  pages 261--270, 2006.

\bibitem[ADX10]{AgraDekeXiao2010}
Alekh Agarwal, Ofer Dekel, and Lin Xiao.
\newblock Optimal algorithms for online convex optimization with multi-point
  bandit feedback.
\newblock In Adam~Tauman Kalai and Mehryar Mohri, editors, {\em {COLT} 2010 -
  The 23rd Conference on Learning Theory, Haifa, Israel, June 27-29, 2010},
  pages 28--40. Omnipress, 2010.

\bibitem[AGY09]{AzarGamzYin2009}
Yossi Azar, Iftah Gamzu, and Xiaoxin Yin.
\newblock Multiple intents re-ranking.
\newblock In Michael Mitzenmacher, editor, {\em Proceedings of the 41st Annual
  {ACM} Symposium on Theory of Computing, {STOC} 2009, Bethesda, MD, USA, May
  31 - June 2, 2009}, pages 669--678. {ACM}, 2009.

\bibitem[AKL{\etalchar{+}}19]{AlabKalaLigeMuscTzamVite2019}
Daniel Alabi, Adam~Tauman Kalai, Katrina Ligett, Cameron Musco, Christos
  Tzamos, and Ellen Vitercik.
\newblock Learning to prune: Speeding up repeated computations.
\newblock In Alina Beygelzimer and Daniel Hsu, editors, {\em Conference on
  Learning Theory, {COLT} 2019, 25-28 June 2019, Phoenix, AZ, {USA}}, volume~99
  of {\em Proceedings of Machine Learning Research}, pages 30--33. {PMLR},
  2019.

\bibitem[ASW16]{AdamSvirWard2016}
Marek Adamczyk, Maxim Sviridenko, and Justin Ward.
\newblock Submodular stochastic probing on matroids.
\newblock {\em Math. Oper. Res.}, 41(3):1022--1038, 2016.

\bibitem[BDSV18]{BalcDickSandVite2018}
Maria{-}Florina Balcan, Travis Dick, Tuomas Sandholm, and Ellen Vitercik.
\newblock Learning to branch.
\newblock In {\em Proceedings of the 35th International Conference on Machine
  Learning, {ICML} 2018, Stockholmsm{\"{a}}ssan, Stockholm, Sweden, July 10-15,
  2018}, pages 353--362, 2018.

\bibitem[BDV18]{BalcDickVite2018}
Maria{-}Florina Balcan, Travis Dick, and Ellen Vitercik.
\newblock Dispersion for data-driven algorithm design, online learning, and
  private optimization.
\newblock In {\em 59th {IEEE} Annual Symposium on Foundations of Computer
  Science, {FOCS} 2018, Paris, France, October 7-9, 2018}, pages 603--614,
  2018.

\bibitem[BFLL20]{BoodFuscLazoLeon2020}
Shant Boodaghians, Federico Fusco, Philip Lazos, and Stefano Leonardi.
\newblock Pandora's box problem with order constraints.
\newblock In P{\'{e}}ter Bir{\'{o}}, Jason~D. Hartline, Michael Ostrovsky, and
  Ariel~D. Procaccia, editors, {\em {EC} '20: The 21st {ACM} Conference on
  Economics and Computation, Virtual Event, Hungary, July 13-17, 2020}, pages
  439--458. {ACM}, 2020.

\bibitem[BGK10]{BansGuptRavi2010}
Nikhil Bansal, Anupam Gupta, and Ravishankar Krishnaswamy.
\newblock A constant factor approximation algorithm for generalized min-sum set
  cover.
\newblock In {\em Proceedings of the Twenty-First Annual {ACM-SIAM} Symposium
  on Discrete Algorithms, {SODA} 2010, Austin, Texas, USA, January 17-19,
  2010}, pages 1539--1545, 2010.

\bibitem[BK19]{BeyhKlein2019}
Hedyeh Beyhaghi and Robert Kleinberg.
\newblock Pandora's problem with nonobligatory inspection.
\newblock In Anna Karlin, Nicole Immorlica, and Ramesh Johari, editors, {\em
  Proceedings of the 2019 {ACM} Conference on Economics and Computation, {EC}
  2019, Phoenix, AZ, USA, June 24-28, 2019}, pages 131--132. {ACM}, 2019.

\bibitem[BNVW17]{BalcNagaViteWhit2016}
Maria{-}Florina Balcan, Vaishnavh Nagarajan, Ellen Vitercik, and Colin White.
\newblock Learning-theoretic foundations of algorithm configuration for
  combinatorial partitioning problems.
\newblock In {\em Proceedings of the 30th Conference on Learning Theory, {COLT}
  2017, Amsterdam, The Netherlands, 7-10 July 2017}, pages 213--274, 2017.

\bibitem[CFG{\etalchar{+}}00]{CharFagiGuruKleiRaghSaha2002}
Moses Charikar, Ronald Fagin, Venkatesan Guruswami, Jon~M. Kleinberg, Prabhakar
  Raghavan, and Amit Sahai.
\newblock Query strategies for priced information (extended abstract).
\newblock In {\em Proceedings of the Thirty-Second Annual {ACM} Symposium on
  Theory of Computing, May 21-23, 2000, Portland, OR, {USA}}, pages 582--591,
  2000.

\bibitem[CFH{\etalchar{+}}18]{CorrFoncHoekOostVred2018}
Jos{\'{e}}~R. Correa, Patricio Foncea, Ruben Hoeksma, Tim Oosterwijk, and Tjark
  Vredeveld.
\newblock Recent developments in prophet inequalities.
\newblock {\em SIGecom Exch.}, 17(1):61--70, 2018.

\bibitem[CGT{\etalchar{+}}20]{ChawGergTengTzamZhan2020}
Shuchi Chawla, Evangelia Gergatsouli, Yifeng Teng, Christos Tzamos, and Ruimin
  Zhang.
\newblock Pandora's box with correlations: Learning and approximation.
\newblock In {\em 61st {IEEE} Annual Symposium on Foundations of Computer
  Science, {FOCS} 2020, Durham, NC, USA, November 16-19, 2020}, pages
  1214--1225. {IEEE}, 2020.

\bibitem[CHKK15]{ChenHassKarbKrau2015}
Yuxin Chen, S.~Hamed Hassani, Amin Karbasi, and Andreas Krause.
\newblock Sequential information maximization: When is greedy near-optimal?
\newblock In {\em Proceedings of The 28th Conference on Learning Theory, {COLT}
  2015, Paris, France, July 3-6, 2015}, pages 338--363, 2015.

\bibitem[CJK{\etalchar{+}}15]{ChenJavdKarbBagnSrinKrau2015}
Yuxin Chen, Shervin Javdani, Amin Karbasi, J.~Andrew Bagnell, Siddhartha~S.
  Srinivasa, and Andreas Krause.
\newblock Submodular surrogates for value of information.
\newblock In {\em Proceedings of the Twenty-Ninth {AAAI} Conference on
  Artificial Intelligence, January 25-30, 2015, Austin, Texas, {USA.}}, pages
  3511--3518, 2015.

\bibitem[CMS10]{ClarMulzSesh2012}
Kenneth~L. Clarkson, Wolfgang Mulzer, and C.~Seshadhri.
\newblock Self-improving algorithms for convex hulls.
\newblock In {\em Proceedings of the Twenty-First Annual {ACM-SIAM} Symposium
  on Discrete Algorithms, {SODA} 2010, Austin, Texas, USA, January 17-19,
  2010}, pages 1546--1565, 2010.

\bibitem[Dov18]{Dova2018}
Laura Doval.
\newblock Whether or not to open pandora's box.
\newblock {\em J. Econ. Theory}, 175:127--158, 2018.

\bibitem[EHLM19]{EsfaHajiLuciMitz2019}
Hossein Esfandiari, Mohammad~Taghi Hajiaghayi, Brendan Lucier, and Michael
  Mitzenmacher.
\newblock Online pandora's boxes and bandits.
\newblock In {\em The Thirty-Third {AAAI} Conference on Artificial
  Intelligence, {AAAI} 2019, The Thirty-First Innovative Applications of
  Artificial Intelligence Conference, {IAAI} 2019, The Ninth {AAAI} Symposium
  on Educational Advances in Artificial Intelligence, {EAAI} 2019, Honolulu,
  Hawaii, USA, January 27 - February 1, 2019}, pages 1885--1892. {AAAI} Press,
  2019.

\bibitem[FKM05]{FlaxKalaMcma2005}
Abraham Flaxman, Adam~Tauman Kalai, and H.~Brendan McMahan.
\newblock Online convex optimization in the bandit setting: gradient descent
  without a gradient.
\newblock In {\em Proceedings of the Sixteenth Annual {ACM-SIAM} Symposium on
  Discrete Algorithms, {SODA} 2005, Vancouver, British Columbia, Canada,
  January 23-25, 2005}, pages 385--394. {SIAM}, 2005.

\bibitem[FLPS20]{FotaLianPiliSkou2020}
Dimitris Fotakis, Thanasis Lianeas, Georgios Piliouras, and Stratis Skoulakis.
\newblock Efficient online learning of optimal rankings: Dimensionality
  reduction via gradient descent.
\newblock In Hugo Larochelle, Marc'Aurelio Ranzato, Raia Hadsell,
  Maria{-}Florina Balcan, and Hsuan{-}Tien Lin, editors, {\em Advances in
  Neural Information Processing Systems 33: Annual Conference on Neural
  Information Processing Systems 2020, NeurIPS 2020, December 6-12, 2020,
  virtual}, 2020.

\bibitem[FLT04]{FeigLovaTeta2004}
Uriel Feige, L{\'{a}}szl{\'{o}} Lov{\'{a}}sz, and Prasad Tetali.
\newblock Approximating min sum set cover.
\newblock {\em Algorithmica}, 40(4):219--234, 2004.

\bibitem[GGM06]{GoelGuhaMuna2006}
Ashish Goel, Sudipto Guha, and Kamesh Munagala.
\newblock Asking the right questions: model-driven optimization using probes.
\newblock In {\em Proceedings of the Twenty-Fifth {ACM} {SIGACT-SIGMOD-SIGART}
  Symposium on Principles of Database Systems, June 26-28, 2006, Chicago,
  Illinois, {USA}}, pages 203--212, 2006.

\bibitem[GJSS19]{GuptJianSing2019}
Anupam Gupta, Haotian Jiang, Ziv Scully, and Sahil Singla.
\newblock The markovian price of information.
\newblock In {\em Integer Programming and Combinatorial Optimization - 20th
  International Conference, {IPCO} 2019, Ann Arbor, MI, USA, May 22-24, 2019,
  Proceedings}, pages 233--246, 2019.

\bibitem[GK01]{GuptKuma2001}
Anupam Gupta and Amit Kumar.
\newblock Sorting and selection with structured costs.
\newblock In {\em 42nd Annual Symposium on Foundations of Computer Science,
  {FOCS} 2001, 14-17 October 2001, Las Vegas, Nevada, {USA}}, pages 416--425,
  2001.

\bibitem[GLS88]{GrotSchr1988}
Martin Gr{\"{o}}tschel, L{\'{a}}szl{\'{o}} Lov{\'{a}}sz, and Alexander
  Schrijver.
\newblock {\em Geometric Algorithms and Combinatorial Optimization}, volume~2
  of {\em Algorithms and Combinatorics}.
\newblock Springer, 1988.

\bibitem[GN13]{GuptNaga2013}
Anupam Gupta and Viswanath Nagarajan.
\newblock A stochastic probing problem with applications.
\newblock In {\em Integer Programming and Combinatorial Optimization - 16th
  International Conference, {IPCO} 2013, Valpara{\'{\i}}so, Chile, March 18-20,
  2013. Proceedings}, pages 205--216, 2013.

\bibitem[GNS16]{GuptNagaSing2016}
Anupam Gupta, Viswanath Nagarajan, and Sahil Singla.
\newblock Algorithms and adaptivity gaps for stochastic probing.
\newblock In {\em Proceedings of the Twenty-Seventh Annual {ACM-SIAM} Symposium
  on Discrete Algorithms, {SODA} 2016, Arlington, VA, USA, January 10-12,
  2016}, pages 1731--1747, 2016.

\bibitem[GNS17]{GuptNagaSing2017}
Anupam Gupta, Viswanath Nagarajan, and Sahil Singla.
\newblock Adaptivity gaps for stochastic probing: Submodular and {XOS}
  functions.
\newblock In {\em Proceedings of the Twenty-Eighth Annual {ACM-SIAM} Symposium
  on Discrete Algorithms, {SODA} 2017, Barcelona, Spain, Hotel Porta Fira,
  January 16-19}, pages 1688--1702, 2017.

\bibitem[GR17]{GuptRoug2016}
Rishi Gupta and Tim Roughgarden.
\newblock A {PAC} approach to application-specific algorithm selection.
\newblock {\em {SIAM} J. Comput.}, 46(3):992--1017, 2017.

\bibitem[HK92]{HillKert1992}
Theodore~P. Hill and Robert~P. Kertz.
\newblock A survey of prophet inequalities in optimal stopping theory.
\newblock {\em CONTEMPORARY MATHEMATICS}, 125, 1992.

\bibitem[KLL17]{KleiLeytLuci2017}
Robert Kleinberg, Kevin Leyton{-}Brown, and Brendan Lucier.
\newblock Efficiency through procrastination: Approximately optimal algorithm
  configuration with runtime guarantees.
\newblock In {\em Proceedings of the Twenty-Sixth International Joint
  Conference on Artificial Intelligence, {IJCAI} 2017, Melbourne, Australia,
  August 19-25, 2017}, pages 2023--2031, 2017.

\bibitem[KMMO90]{KarlManaMcgeOwic1990}
Anna~R. Karlin, Mark~S. Manasse, Lyle~A. McGeoch, and Susan~S. Owicki.
\newblock Competitive randomized algorithms for non-uniform problems.
\newblock In {\em Proceedings of the First Annual {ACM-SIAM} Symposium on
  Discrete Algorithms, 22-24 January 1990, San Francisco, California, {USA.}},
  pages 301--309, 1990.

\bibitem[Luc17]{Luci2017}
Brendan Lucier.
\newblock An economic view of prophet inequalities.
\newblock {\em SIGecom Exch.}, 16(1):24--47, 2017.

\bibitem[MBMW05]{MunaBabuMotwWido2005}
Kamesh Munagala, Shivnath Babu, Rajeev Motwani, and Jennifer Widom.
\newblock The pipelined set cover problem.
\newblock In Thomas Eiter and Leonid Libkin, editors, {\em Database Theory -
  {ICDT} 2005, 10th International Conference, Edinburgh, UK, January 5-7, 2005,
  Proceedings}, volume 3363 of {\em Lecture Notes in Computer Science}, pages
  83--98. Springer, 2005.

\bibitem[Sha12]{Shwa2012}
Shai Shalev{-}Shwartz.
\newblock Online learning and online convex optimization.
\newblock {\em Found. Trends Mach. Learn.}, 4(2):107--194, 2012.

\bibitem[Sin18]{Sing2018}
Sahil Singla.
\newblock The price of information in combinatorial optimization.
\newblock In {\em Proceedings of the Twenty-Ninth Annual {ACM-SIAM} Symposium
  on Discrete Algorithms, {SODA} 2018, New Orleans, LA, USA, January 7-10,
  2018}, pages 2523--2532, 2018.

\bibitem[SS07]{ShwaSing2007}
Shai Shalev{-}Shwartz and Yoram Singer.
\newblock A primal-dual perspective of online learning algorithms.
\newblock {\em Mach. Learn.}, 69(2-3):115--142, 2007.

\bibitem[SW11]{SkutWill2011}
Martin Skutella and David~P. Williamson.
\newblock A note on the generalized min-sum set cover problem.
\newblock {\em Oper. Res. Lett.}, 39(6):433--436, 2011.

\bibitem[Wei79]{Weit1979}
Martin~L Weitzman.
\newblock {Optimal Search for the Best Alternative}.
\newblock {\em Econometrica}, 47(3):641--654, May 1979.

\bibitem[WGS18]{WeisGyorSzep2018}
Gellert Weisz, Andras Gyorgy, and Csaba Szepesvari.
\newblock Leapsandbounds: A method for approximately optimal algorithm
  configuration.
\newblock In {\em International Conference on Machine Learning}, pages
  5257--5265, 2018.

\end{thebibliography}
\bibliographystyle{alpha}

\newpage
\appendix
\section{Proofs from section~\ref{sec:full_info}}\label{apn:full_info}

The following lemma shows the strong convexity of the regularizer used in our
\emph{FTRL} algorithms.
\begin{lemma}[Convexity of Regularizer]\label{lem:strong_conv}
	The following function is $1/n$-strongly convex with respect to the $\ell_1$-norm.
	\[ 
		U(\bm{x}) = \sum_{i=1}^n \sum_{t=1}^n x_{it} \log x_{it}
		\]
		for a doubly-stochastic matrix $x\in [0,1]^{n \times n}$
\end{lemma}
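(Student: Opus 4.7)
The plan is to establish strong convexity via the Bregman divergence characterization: $U$ is $\mu$-strongly convex with respect to $\|\cdot\|_1$ on the domain if and only if the Bregman divergence satisfies $D_U(\bm{x},\bm{x}') \ge \tfrac{\mu}{2}\|\bm{x}-\bm{x}'\|_1^2$ for all $\bm{x},\bm{x}'$ in the domain. Once this reformulation is in place, the statement for $\mu = 1/n$ follows from Pinsker's inequality after a single rescaling, which is where the factor $1/n$ enters.

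First I would compute $D_U(\bm{x},\bm{x}')$ directly. Since $\partial_{it}U(\bm{x}') = \log x'_{it} + 1$, expanding gives
\[
D_U(\bm{x},\bm{x}') \;=\; \sum_{i,t} x_{it}\log\frac{x_{it}}{x'_{it}} \;-\; \sum_{i,t}(x_{it}-x'_{it}).
\]
Here I use the key fact that $\bm{x}$ and $\bm{x}'$ are both doubly stochastic, so $\sum_{i,t} x_{it} = \sum_{i,t} x'_{it} = n$, and the linear correction term vanishes. What remains is the (unnormalized) KL divergence between $\bm{x}$ and $\bm{x}'$.

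Next I would rescale to apply Pinsker's inequality. Let $p_{it} = x_{it}/n$ and $q_{it} = x'_{it}/n$; these are genuine probability distributions on the $n^2$ index set since the doubly stochastic constraint forces $\sum_{i,t} x_{it}=n$. Pinsker's inequality yields
\[
\mathrm{KL}(p\|q) \;=\; \sum_{i,t} p_{it}\log\frac{p_{it}}{q_{it}} \;\ge\; \tfrac{1}{2}\|p-q\|_1^2 \;=\; \tfrac{1}{2n^2}\|\bm{x}-\bm{x}'\|_1^2.
\]
Multiplying through by $n$ and noting that $\sum x_{it}\log(x_{it}/x'_{it}) = n\,\mathrm{KL}(p\|q)$, I obtain
\[
D_U(\bm{x},\bm{x}') \;\ge\; \tfrac{1}{2n}\|\bm{x}-\bm{x}'\|_1^2,
\]
which is precisely $1/n$-strong convexity with respect to $\|\cdot\|_1$.

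The only conceptual subtlety, and thus the main thing to get right, is tracking the normalization: doubly stochastic matrices lie on a scaled simplex (total mass $n$, not $1$), and this scaling is what degrades the standard constant $1$ from Pinsker to $1/n$ in our setting. The cancellation of the linear term $\sum(x_{it}-x'_{it})$ inside $D_U$ is what makes the doubly stochastic constraint enter cleanly; without it one would need to keep an extra first-order correction.
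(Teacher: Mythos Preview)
Your proof is correct but takes a genuinely different route from the paper's. The paper argues via the Hessian: it observes that $\nabla^2 U(\bm{x})$ is diagonal with entries $1/x_{it}$ and then shows, for any direction $\bm{z}$,
\[
\bm{z}^\top \nabla^2 U(\bm{x})\,\bm{z} \;=\; \sum_{i,t}\frac{z_{it}^2}{x_{it}}
\;=\; \frac{1}{n}\Bigl(\sum_{i,t} x_{it}\Bigr)\Bigl(\sum_{i,t}\frac{z_{it}^2}{x_{it}}\Bigr)
\;\ge\; \frac{1}{n}\,\|\bm{z}\|_1^2
\]
by Cauchy--Schwarz, using $\sum_{i,t} x_{it}=n$ for doubly stochastic $\bm{x}$. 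You instead work at the Bregman-divergence level and reduce to Pinsker after rescaling to a probability vector. Both arguments hinge on the same structural fact (total mass $n$), but deploy it differently: the paper multiplies-and-divides by $\sum x_{it}$ inside Cauchy--Schwarz, while you use it to normalize onto the simplex. Your version is slightly higher-level and makes the entropy/KL connection explicit, at the price of black-boxing Pinsker; the paper's is more self-contained and yields the pointwise Hessian lower bound (valid for every tangent direction $\bm{z}$), not just the two-point Bregman inequality on the polytope.
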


\begin{proof}
		Since $U(\bm{x})$ is twice continuously differentiable we calculate $\nabla^2 U(\bm{x})$, which is a
		$n \times n$ diagonal matrix since 
		\[ \frac{\vartheta U(\bm{x}) }{\vartheta x_{kt} \vartheta x_{ij}} =
			\begin{cases}
				1/x_{ij} & \text{If } i=k \text{ and } j=t\\
					0& \text{Else}
			\end{cases}
		\]
		We show that $\bm{z} \nabla^2 U(\bm{x}) \bm{z} \geq \norm{\bm{z}}{1}^2$ for all $\bm{x}\in
		\mathbb{R}^{n^2}$. We make the following mapping of the variables for
		each $x_{ij}$ we map it to $p_{k}$ where $k= (i-1)n + j$. We have that
		\begin{align*}
				\bm{z} \nabla^2 U(\bm{x}) \bm{z}& = \sum_{i=1}^{n^2} \frac{(z_i)^2}{p_i}  \\
					& = \frac{1}{n} \lp( \sum_{i=1}^{n^2} p_i \rp) \sum_{i=1}^{n^2} \frac{(z_i)^2}{p_i} \\
					& \geq \frac{1}{n}\lp(\sum_{i=1}^{n^2} \sqrt{p_i} \frac{|z_i|}{\sqrt{p_i}}\rp)^2\\
				& = \frac{1}{n}\norm{\bm{z}}{1}^2.
		\end{align*}
		where in the second line we used that $x_{ij}$'s are a doubly
		stochastic matrix, and then Cauchy-Schwartz inequality.
\end{proof}

\regrFractional*
\begin{proof}
		Initially observe that by setting $x_{ij}=1/n$ we get $U_{\max} - U_{\min} =  (n \log
		n)/\eta$, since we get the maximum entropy when the values are all equal. Additionally, 
		from Lemma~\ref{lem:strong_conv} we have that $U(\bm{x})$ is
		$\eta/n$-strongly convex. Observing also that the functions in all cases
		are $n$-Lipschitz and using Theorem~\ref{thm:FTRL} we obtain the
		guarantee of the theorem, by setting $\eta = \frac{\sqrt{\log
		n}}{\sqrt{T}}$.
\end{proof}

\section{Proofs from section~\ref{sec:na}}\label{apn:NA_regret_general}
Before moving to the formal proof of Lemma~\ref{thm:MB_simple}, we recall the
following lemma about the ellipsoid algorithm, bounding the number of steps it
takes to find a feasible solution.

\begin{lemma}[Lemma 3.1.36 from \cite{GrotSchr1988}]\label{lem:ellipsoid}
	Given a full dimensional polytope $P = \{ x : Cx \leq d\} $, for $x\in
		\mathbb{R}^n$, and let $\langle C,d \rangle$ be the encoding length of
		$C$ and $d$.  If the initial ellipsoid is $E_0 = E(R^2
		I,0)$\footnote{$E(R,Z)$ indicates a ball of radius $R$ and center $Z$.}
		where $R =\sqrt{n}2^{\langle C,d\rangle-n^2}$ the ellipsoid algorithm
		finds a feasible solution after $O(n^2 \langle C, d \rangle)$ steps.
\end{lemma}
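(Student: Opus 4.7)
The plan is to prove this classical convergence result via the standard volume argument, which has three ingredients: (i) an upper bound on the volume of the initial ball $E_0$, (ii) a positive lower bound on $\mathrm{vol}(P)$ expressed in terms of $\langle C,d\rangle$, and (iii) the well-known per-iteration volume-shrinkage guarantee of the ellipsoid method. Putting them together will show that after $N = O(n^2 \langle C,d\rangle)$ iterations the current ellipsoid has shrunk below the volume of $P$, so the center must have been feasible at some step.

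First, since $E_0 = E(R^2 I, 0)$ is the Euclidean ball of radius $R = \sqrt{n}\, 2^{\langle C,d\rangle - n^2}$, one has $\mathrm{vol}(E_0) \le (2R)^n$, so $\log \mathrm{vol}(E_0) = O(n\langle C,d\rangle)$. I would also verify that whenever $P$ is nonempty, $P \subseteq E_0$. This follows from the standard polyhedral fact that every vertex of a rational polytope $\{x : Cx \le d\}$ has coordinates of absolute value at most $2^{O(\langle C,d\rangle)}$ — proved by writing vertices via Cramer's rule and bounding the resulting subdeterminants of $C$ with Hadamard's inequality. The radius $R$ is calibrated precisely so that the ball contains all such vertices, and thus their convex hull.

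Next I would establish the lower bound $\mathrm{vol}(P) \ge 2^{-O(n\langle C,d\rangle)}$. Because $P$ is full-dimensional, one can exhibit $n+1$ affinely independent vertices of $P$; the volume of the simplex they span equals $1/n!$ times the absolute value of an $n \times n$ determinant built from vertex coordinates whose common denominator is at most $2^{O(\langle C,d\rangle)}$. Lower-bounding this nonzero rational number by the reciprocal of its denominator yields the desired volume bound. This is exactly the content of GLS Lemma 3.1.33, which I would cite rather than redo.

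Finally, I would invoke the textbook ellipsoid shrinkage inequality $\mathrm{vol}(E_{k+1}) \le e^{-1/(2(n+1))} \mathrm{vol}(E_k)$, obtained from an explicit formula for the minimum-volume ellipsoid containing a half-ellipsoid cut through its center. Iterating gives $\mathrm{vol}(E_N) \le e^{-N/(2(n+1))} \mathrm{vol}(E_0)$. On the other hand, the cutting step in the feasibility algorithm only discards half-spaces known to violate a constraint active at the center, so $P \subseteq E_k$ is maintained inductively, whence $\mathrm{vol}(E_k) \ge \mathrm{vol}(P)$ as long as no feasible point has been found. Combining the three bounds,
\[
N \;\le\; 2(n+1)\, \log \frac{\mathrm{vol}(E_0)}{\mathrm{vol}(P)} \;=\; O(n^2 \langle C,d\rangle),
\]
as claimed. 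The main obstacle is the quantitative lower bound on $\mathrm{vol}(P)$: tracking denominators through the vertex computation and tying them cleanly to the encoding length $\langle C,d\rangle$ is the only genuinely technical step — once it is in place, the upper volume bound and the shrinkage lemma are essentially one-line calculations.
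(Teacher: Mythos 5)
Your outline is correct and is essentially the canonical argument: the paper itself does not prove this lemma but imports it verbatim from Gr\"otschel--Lov\'asz--Schrijver, and your three-ingredient volume argument (ball of radius $R$ contains all vertices hence $P$, full-dimensionality gives $\mathrm{vol}(P) \ge 2^{-O(n\langle C,d\rangle)}$ via their Lemma 3.1.33, and the per-step shrinkage factor $e^{-1/(2(n+1))}$) is exactly the proof given in that source, with the arithmetic yielding $O(n^2\langle C,d\rangle)$ iterations as claimed. No gaps to report.
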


\noindent
Using the lemma above, we can now prove Lemma~\ref{thm:MB_simple}, which we also restate below.

\mblemma*
\begin{proof}
	Our analysis follows similarly to Theorem 3.2
	of~\cite{AlabKalaLigeMuscTzamVite2019}.  Initially observe that the only
	time we make a mistake is in the case with probability $(1-p_t)$ if the LP
	solution is not feasible.  Denote by $\mathcal{C}^*$ the set of the
		constraints of $\mathcal{LP}$ as defined in Algorithm~\ref{algo:base_NA}, and by  $\mathcal{C}_1 \subseteq
	\mathcal{C}_2\subseteq \ldots  \subseteq \mathcal{C}_t$ the constraint set
	for every round of the algorithm, for all $t\in [T]$. We also denote by
	$N_T(c)$ the number of times a constraint $c$ was not in $\mathcal{C}_t$
		for some time $t$ but was part of $\mathcal{LP}$.  Formally $N_T(c) =|\{c\in
	\mathcal{C}^*, c\not\in \mathcal{C}_t, \}|$ for a constraint $c\in
	\mathcal{C}^*$ and any $t\in [T]$. We can bound the mistake bound of
	Algorithm~\ref{algo:base_NA} as follows
	\begin{align*}
		\mathcal{M}(\mathcal{A}, T) \leq \sum_{c\in \mathcal{C}^*} \E{N_T(c)}
	.\end{align*}
	Let $t_c\in [T]$ be the round that constraint $c$ is added to the
	algorithm's constraint set for the first time, and let $S_c$ be the set of
	$\ell$ rounds in which we made a mistake because of this constraint.
	Observe that $\{S_1, S_2, \ldots S_\ell\} = S_c \subseteq \{\mathcal{C}_1,
	\mathcal{C}_2, \ldots \mathcal{C}_{t_c} \}$. We calculate the probability
	that $N_T(c)$ is incremented on round $k$ of $S$
		\[ {\small
		\Pr{}{N_T(c) \text{ incremented on round }k} = \prod_{i=1}^k (1-p_i) \leq (1-p)^k},
	\]
	since in order to make a mistake, we ended up on line~\ref{algo:step_else} of the algorithm. Therefore
	\[
			\E{N_T(c)} \leq \sum_{i=1}^T (1-p)^i  = \frac{(1-p)(1-(1-p))^T}{p}
	.\] 
	However in our case, every time a constraint is added to $\mathcal{C}_t$,
		one step of the ellipsoid algorithm is run, for the $\mathcal{LP}$. Using
		Lemma~\ref{lem:ellipsoid} and observing that in our case $\langle C, d
		\rangle = O(1)$ the total times this step can happen is at most
		$O(n^2)$, giving us the result of the lemma by setting
		$p=1/\sqrt{T}$.	
\end{proof}

\regretNA*
\begin{proof}[Proof of Theorem~\ref{thm:regret_NA}]
		Denote by $M$ be the mistake bound term, bounded in
		Lemma~\ref{thm:MB_simple}. Calculating the total average regret we get
	\begin{align*}
			\E{\text{Regret}_{NA}(\mathcal{A}_\mathcal{F}, T)} &+ \opt 
				= \frac{1}{T}\lp( M + \sum_{t=1}^{T} \E{S_t} \rp) & \text{Definition}\\
			& = \frac{1}{T} \lp( M + \sum_{t=1}^T p_t( n + \min_{i \in \mathcal{F}} c_i) 
								+ (1-p_t)\E{S_t}\rp) & \text{Algorithm}~\ref{algo:base_NA} \\ 
			& \leq \frac{1}{T} \lp( M + \sum_{t=1}^T p_tn + p_t \min_{i \in \mathcal{F}} c_i
						+ 2 \beta \opt_t \rp) &  \mathcal{A}_\mathcal{F},\text{ and ellipsoid's loss} \\
			& \leq  (2\beta + 1) \opt + \frac{1}{T}\lp(M +  n\sum_{t=1}^T p_t\rp) &  \min_{i\in \mathcal{F}} c_i \leq \opt_t \leq \opt\\
			& \leq  (2\beta +1) \opt + \frac{n}{\sqrt{T}} & 
		 	\sum_{t=1}^T p_t \leq \sqrt{T} \text{ from~\cite{AlabKalaLigeMuscTzamVite2019}}\\
			& \leq  (2\beta +1) \opt + O\lp(\frac{n^2}{\sqrt{T}}\rp) & \text{From Lemma~\ref{thm:MB_simple}} 
	.\end{align*}
		Therefore, subtracting $\opt$ from both sides we get the theorem.
\end{proof}


	\section{Rounding against Partially-adaptive} In this section we show how using the rounding
algorithms presented in \cite{ChawGergTengTzamZhan2020}, we obtain
$\alpha$-approximate rounding for our convex relaxations. We emphasize the fact
that these algorithms convert a fractional solution of the relaxed problem, to
an integer solution of comparable cost, without needing to know the scenario.
Formally we show the following guarantees.

\begin{corollary}[Rounding against PA]	\label{cor:a_rounding_pa} Given a
fractional solution $\ol{x}$ of cost $\ol{f}(\ol{x})$ there
exists an $\alpha$-approximate rounding where \begin{itemize} \item Selecting
						$1$ box: $\alpha=9.22$ (Using Lemma~\ref{lem:pa_1})
\item Selecting $k$ boxes: $\alpha=O(1)$ (Using Lemma~\ref{lem:pa_k}) 	\item
Selecting a matroid basis: $\alpha=O(\log k)$ (Using
Lemma~\ref{lem:pa_matroid}) 	\end{itemize} \end{corollary}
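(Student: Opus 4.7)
The plan is to establish each of the three approximation guarantees by a two-step reduction. First, I would produce an integral scenario-aware partially-adaptive (SPA) ordering from a fractional solution of the appropriate relaxation; then I would convert the SPA strategy to a fully partially-adaptive one via Lemma~\ref{thm:ski-rental}. The scenario-aware rounding procedures of \cite{ChawGergTengTzamZhan2020}, recalled in Lemmas~\ref{lem:pa_1}, \ref{lem:pa_k}, and \ref{lem:pa_matroid}, are precisely the tools needed for the first step: they produce an ordering of boxes that does not depend on the realized scenario, while guaranteeing a pointwise per-scenario bound against the corresponding LP value.

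Concretely, given a fractional solution $\overline{x}$ to \ref{lp-spa} (respectively \ref{lp-pa-k} or \ref{lp-pa-matroid}) with scenario cost $\overline{g}^s(\overline{x})$, I would invoke the corresponding rounding to obtain an integral ordering $\pi$ whose SPA cost on any scenario $s$ is at most a constant $\beta$ times $\overline{g}^s(\overline{x})$, where $\beta$ depends on the feasibility constraint. Chaining this with Lemma~\ref{thm:ski-rental} converts $\pi$ into a partially-adaptive strategy at an additional multiplicative cost of $e/(e-1)$. Since the guarantee holds scenario-by-scenario, the composed procedure is an $\alpha$-approximate rounding with $\alpha = (e/(e-1))\beta$. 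For single-box selection, combining the Chawla et al.\ constant $\beta \approx 5.83$ with $e/(e-1)$ gives $\alpha = 9.22$; for the $k$-box case, $\beta = O(1)$ yields $\alpha = O(1)$; and for matroid bases, $\beta = O(\log k)$ yields $\alpha = O(\log k)$.

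The key conceptual point, and the only one requiring real care, is the pointwise nature of the rounding guarantee. Because $\pi$ does not depend on $s$, a bound only in expectation over scenarios would be insufficient: the online algorithm in Theorem~\ref{thm:full_info} must tolerate whatever scenario the adversary plays in the next round. Fortunately, each rounding in \cite{ChawGergTengTzamZhan2020} certifies that for every fixed $s$, the expected integral cost, with expectation taken only over the rounding's internal randomness, satisfies the required multiplicative bound against $\overline{g}^s(\overline{x})$. This is exactly the notion of $\alpha$-approximate rounding invoked by Theorem~\ref{thm:full_info}, so all that remains is to plug in the constants proved in Lemmas~\ref{lem:pa_1}, \ref{lem:pa_k}, and \ref{lem:pa_matroid} of the appendix, which is a routine constant-chasing exercise built directly on the SPA analyses of \cite{ChawGergTengTzamZhan2020}.
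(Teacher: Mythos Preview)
Your proposal is correct and follows essentially the same approach as the paper: the paper's proof of Corollary~\ref{cor:a_rounding_pa} is precisely to combine the ski-rental reduction (Lemma~\ref{thm:ski-rental}) with the scenario-aware roundings of Lemmas~\ref{lem:pa_1}, \ref{lem:pa_k}, and \ref{lem:pa_matroid}, exploiting that the permutation produced is scenario-independent while the per-scenario cost bound holds pointwise. Your observation that the $9.22$ constant arises as $(e/(e-1))\cdot(3+2\sqrt{2})$ matches the paper's derivation exactly.
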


	In order to obtain the results of this corollary, we combine the ski-rental
		Lemma~\ref{thm:ski-rental} with Lemmas~\ref{lem:pa_1}, \ref{lem:pa_k}
		and \ref{lem:pa_matroid}, for each different case of constraints.
		Observe that, as we show in the following sections, the part where the
		fractional solution given is rounded to an integer permutation
		\textbf{does not depend on the scenario realized}. Summarizing the
		rounding framework of the algorithms is as follows.  \begin{enumerate}
				\item Receive fractional solution $\ol{x}, \ol{z}$.
				\item Use rounding~\ref{algo:pa_1}, \ref{algo:pa_k},
						\ref{algo:pa_matroid}, depending on the constraints, to
						obtain an (integer) permutation $\pi$ of the boxes.
		\item Start opening boxes in the order of $\pi$.  \item Use ski-rental
		to decide the stopping time.  \end{enumerate}

		\subsection{Rounding against Partially-adaptive for Choosing $1$
		Box}\label{apn:pa_1} The convex relaxation for this case (\ref{lp-spa})
		is also given in section~\ref{subsec:relaxation}, but we repeat it here for convenience.
	\begin{align*}
		\text{min}_{z \ge 0}  \quad & \sum_{i\in\boxes, 
		 t\in \timeset} (t +c_{i}^s) z_{it}^s  &  
		  \tag{Relaxation-SPA} \label{lp-spa}\\
		  \text{s.t.}\quad & \sum_{t\in\timeset, i\in \boxes}z_{it}^s = 1, & \\
		&  \hspace{0.7cm}  z_{it}^s \leq x_{it}, & i\in \boxes,t\in\timeset.
	\end{align*}

	Our main lemma in this case, shows how to obtain a constant competitive
		partially adaptive strategy, when given a scenario-aware solution.
\begin{lemma}\label{lem:pa_1} Given a scenario-aware fractional solution
$\ol{x}$ of cost $\ol{f}(\ol{x})$ there exists an efficient
partially-adaptive strategy $x$ with cost at most
$9.22\ol{f}(\ol{x})$.  \end{lemma}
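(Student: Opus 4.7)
The plan is to decompose the proof into two steps matching the framework of~\cite{ChawGergTengTzamZhan2020}. First, I would convert the fractional scenario-aware solution $(\ol{x},\ol{z})$ into an integer scenario-aware partially-adaptive strategy (an ordering $\pi$ of boxes together with scenario-dependent stopping) whose expected cost is at most $\beta\,\ol{f}(\ol{x})$ for some absolute constant $\beta$. Second, I would invoke Lemma~\ref{thm:ski-rental} to convert this SPA strategy into a genuine partially-adaptive strategy at an additional multiplicative factor of $e/(e-1)$; the product $(e/(e-1))\cdot\beta$ is the $9.22$ in the statement.

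For the first step, the crucial structural fact is that the constraint $z_{it}^s\le x_{it}$ in~\ref{lp-spa} decouples the ordering variables $\ol{x}$ from the scenario-dependent assignment variables $\ol{z}^s$. As a consequence, a rounding that produces $\pi$ using only the marginal matrix $\ol{x}$ will automatically control the cost against every scenario simultaneously, which is exactly what we need for an online setting with adversarially chosen scenarios. Concretely I would use an $\alpha$-point random rounding in the style of Skutella--Williamson and~\cite{BansGuptRavi2010}: draw a threshold $\alpha\in(0,1]$ from an appropriately tuned density, define the $\alpha$-time of box $i$ as the smallest $t$ with $\sum_{\tau\le t}\ol{x}_{i\tau}\ge\alpha$, and order the boxes by increasing $\alpha$-time. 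To analyse the resulting $\pi$ against a fixed scenario $s$, I would fix any pair $(i,t)$ with $\ol{z}_{it}^s>0$ and bound the expected position $\pi^{-1}(i)$: using the column constraint $\sum_j \ol{x}_{jt'}\le 1$ on each $t'\le t$, a counting argument controls the number of boxes whose $\alpha$-time precedes $i$'s by a constant times $t$ once $\alpha$ is integrated out. Summing the search-cost contribution $\pi^{-1}(i)$ with the selected-value cost $c_i^s$ and using $\sum_{i,t}z_{it}^s=1$ then yields the desired SPA bound $\beta\,\ol{f}(\ol{x})$. Since the rounding is oblivious to $s$, the same $\pi$ works against all scenarios, and the procedure is efficiently derandomizable by enumerating the polynomially many candidate thresholds.

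The main obstacle is the tuning of the threshold density and the careful charging of both the search cost $\sum_{i,t} t\,z_{it}^s$ and the value cost $\sum_{i,t} c_i^s z_{it}^s$ to the LP in a single scenario-by-scenario argument; the density used by~\cite{ChawGergTengTzamZhan2020} is chosen so that the resulting SPA constant, composed with the $e/(e-1)$ loss of Lemma~\ref{thm:ski-rental}, collapses to $9.22$. Once this tuning is in place, the lemma follows by simply concatenating the scenario-independent $\alpha$-point rounding with the ski-rental-based stopping rule provided by Lemma~\ref{thm:ski-rental}.
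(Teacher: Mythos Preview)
Your two-step decomposition---round the fractional SPA solution to an integer SPA strategy with constant $\beta$, then invoke Lemma~\ref{thm:ski-rental} for the extra $e/(e-1)$---is exactly the paper's framework, and indeed $\beta=3+2\sqrt{2}$ gives $(e/(e-1))\beta\approx 9.22$.

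The concrete SPA rounding, however, is different from what you propose, and this matters because your analysis has a real gap. The paper does \emph{not} use an $\alpha$-point permutation rounding. It (i) first rescales the $\ol{x}$-variables via a transformation from~\cite{ChawGergTengTzamZhan2020} that loses $(\alpha/(\alpha-1))^2$ on the opening cost and $\alpha$ on the value cost; (ii) then, for each $t=1,\dots,n$, \emph{twice} independently opens each box $i$ with probability $\sum_{t'\le t}\ol{x}_{it'}/t$; and (iii) for the scenario-dependent stopping, selects the first opened box whose value is below the threshold $\alpha\,\ol{f}_c^s$. A LogSumExp convexity argument (the log of the opening-cost ratio is convex in $z$, hence maximized at an integral $z$) shows the opening ratio is at most $4$; balancing $4(\alpha/(\alpha-1))^2=\alpha$ gives $\alpha=3+2\sqrt{2}$. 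So the constant comes from trading a value threshold against a scaling loss, not from tuning a threshold density in an $\alpha$-point scheme.

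The gap in your plan is the per-box position bound. You claim that for each pair $(i,t)$ with $\ol{z}_{it}^s>0$ the column constraints give $\E_\alpha[\pi^{-1}(i)]\le C\cdot t$. But the counting argument only yields $\pi^{-1}(i)\le t_i^\alpha/\alpha$, where $t_i^\alpha$ is the $\alpha$-time of $i$, and $t_i^\alpha$ has nothing to do with $t$: from $\ol{z}_{it}^s>0$ you only know $\ol{x}_{it}>0$, so $\sum_{t'\le t}\ol{x}_{it'}$ can be arbitrarily small and $t_i^\alpha$ can be $\Theta(n)$ for most $\alpha$. Integrating $t_i^\alpha/\alpha$ over uniform $\alpha$ diverges; even with density $2\alpha$ one gets $\E[\pi^{-1}(i)]\le 2\sum_{t'}t'\ol{x}_{it'}$, which can be $\Theta(n)$ while the only positive $\ol{z}_{it}^s$ sits at $t=1$. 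What $\alpha$-point arguments genuinely control is the first hitting time of a \emph{set}, not the position of a single pre-designated box. This is precisely why the paper thresholds values first: it converts the value bookkeeping into an MSSC-style hitting-time problem over $\{i:c_i^s\le\alpha\,\ol{f}_c^s\}$, at the price of the scaling step. Your per-box value charging of $c_i^s$ does not combine with a hitting-time search-cost analysis without some such reduction, so as written the argument does not close.
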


\begin{proof} We explicitly describe the rounding procedure, in order to
		highlight its independence of the scenario realized. For the rest of
		the proof we fix an (unknown) realized scenario $s$.
		Starting from our (fractional) solution $\ol{\bm{x}}$ of cost $\ol{f}^s =
		\ol{f}_{o}^s + \ol{f}_{c}^s$, where $\ol{f}_{o}^s$ and
		$\ol{f}_{c}^s$ are the opening and values\footnote{Cost incurred by the
		value found inside the box.} cost respectively, we use the reduction in
		Theorem~5.2 in~\cite{ChawGergTengTzamZhan2020} to obtain a transformed
		fractional solution $\ol{\bm{x}}'$ of cost $\ol{f'}^s =
		\ol{f'}^s_o +
		\ol{f'}^s_c$. For this transformed solution, \cite{ChawGergTengTzamZhan2020} in Lemma~5.1 showed that
		\begin{equation}\label{eq:inspection_cost}
				\ol{f'}^s_o \leq \lp( \frac{\alpha}{\alpha-1}\rp)^2 \ol{f}_o^s
		\end{equation}
		for the opening cost and 
		\begin{equation}\label{eq:values_cost}
				\ol{f'}^s_c \leq \alpha \ol{f}^s_c
		\end{equation}
		for the cost incured by the value inside the box chosen. To achieve this, the initial variables $\ol{x}_{it}$ are scaled by a factor depending on $\alpha$ to obtain $\ol{x}_{it}'$.
For the remainder of the proof, we assume this scaling happened at the beginning, and abusing notation we denote by $\ol{\bm{x}}$ the scaled variables. This is without loss of
		generality since, at the end of the proof, we are taking into account the loss in cost incurred by the scaling (Inequalities~\ref{eq:inspection_cost} and 
		\ref{eq:values_cost}).
		The rounding process is
		shown in Algorithm~\ref{algo:pa_1}. 		\\

	\begin{algorithm}[H] \caption{ Scenario aware, $\alpha$-approximate
			rounding for $1$ box}\label{algo:pa_1} \KwData{Fractional solution
			$\bm{x}$ with cost $\ol{f} $, 
			set $\alpha=3 + 2\sqrt{2}$}
		\tcc{Part 1: Scenario-independent rounding} $\sigma :=$ for every $t
			=1,\ldots, n$, repeat \textbf{twice}: open each  box $i$
			w.p. $q_{it} =  \frac{\sum_{t'\le t} \ol{x}_{it'} }{t}$.
			\\ \mbox{}\\
		\tcc{Part 2: Scenario-dependent stopping time} 
			Given scenario $s$, calculate $\bm{z}^s$ and $\ol{f}_c^s$\\
			$\tau_s:=$ If box $i$ is opened and has value $c_i^s \leq \alpha
			\ol{f}_c^s$ then select it.
	\end{algorithm}
		\mbox{}\\

	The ratio of the opening cost of the integer to the fractional solution is bounded by
	\begin{align*}
			\frac{f_o^s}{\ol{f'}^s_o} & \leq \frac{2\sum_{t = 1}^\infty \prod_{k=1}^{t-1} 
				\lp( 1- \frac{\sum_{i\in A,t'\leq k}z_{it'}^s }{k}  \rp)^2}{\sum_i t\cdot z_{it}^s} & \text{Since } z_{it}^s\leq x_{it}\\
			& \leq \frac{2\sum_{t = 1}^\infty \exp\lp( -2\sum_{k=1}^{t-1} 
				\frac{\sum_{t'\leq k}z_{it'}^s }{k}  \rp) }{\sum_i t\cdot z_{it}^s} & \text{Using that }1+x\leq e^x\\
	\end{align*}

Observe that $h(\bm{z}) = \log \frac{f^s_o}{\ol{f'}^s_o} = \log f_o^s -
\log \ol{f}^s_o$ is a convex function since the first part is LogSumExp, and
$\log \ol{f}^s_o$ is the negation of a concave function. That means
$h(\bm{z})$ obtains the maximum value in the boundary of the domain, therefore
at the integer points where $z_{it}^s = 1$ iff $t=\ell$ for some $\ell\in [n]$,
otherwise $z_{it}^s = 0$. Using this fact we obtain

	\begin{align*}
	\frac{f_o^s}{\ol{f'}_o^s}& \leq \frac{ 2\ell + 2\sum_{t = \ell+1}^\infty \exp\lp( -2\sum_{k=\ell}^{t-1} \frac{1}{k}  \rp) }{\ell}
				& \text{Using that }z_{it}^s = 1 \text{ iff } t=\ell \\
			& =  \frac{2\ell + 2 \sum_{t=\ell+1}^\infty \exp\lp(H_{t-1} - H_{\ell-1} \rp) }{\ell}
					& H_{t} \text{ is the $t$'th harmonic number}\\
			& \leq \frac{ 2\ell + 2\sum_{t=\ell+1}^\infty \lp(\frac{\ell}{t} \rp)^2}{\ell} & 
					\text{Since }H_{t-1} - H_{\ell-1} \geq \int_\ell^t \frac{1}{x} dx = \log t - \log \ell\\
			& \leq \frac{ 2\ell + 2\ell^2 \int_\ell^\infty \frac{1}{t^2} dt }{\ell}
					& \text{Since }t^{-2}\leq x^{-2} \text{ for }x\in[t-1,t]\\
					& = 4.&
	\end{align*}
	Combining with equation~\ref{eq:inspection_cost}, we get that $f^s_o\leq
	4\lp( \frac{\alpha}{\alpha-1}\rp)^2 \ol{f}^s_o$. Recall that for the
	values cost, inequality~\eqref{eq:values_cost} holds, therefore requiring that
	$4 \lp(\frac{\alpha}{\alpha - 1}\rp)^2 = \alpha$, we have the lemma for
	$\alpha = 3 + 2\sqrt{2}$.
\end{proof}

\begin{corollary}
	For the case of MSSC, when the costs inside the boxes are either $0$ or
		$\infty$, the rounding of Lemma~\ref{lem:pa_1} obtains a $4$-approximation,
		improving the $11.473$ of~\cite{FotaLianPiliSkou2020}.
\end{corollary}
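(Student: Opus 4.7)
The plan is to specialize the proof of Lemma~\ref{lem:pa_1} to MSSC, where $c_i^s \in \{0,\infty\}$, and observe that two distinct sources of slack in the general argument vanish. First, any feasible solution (fractional or integer) of \ref{lp-spa} with finite cost must place weight only on boxes $i$ with $c_i^s=0$, since any $\infty$-valued box contributes infinitely to $\sum_{i,t}(t+c_i^s)z_{it}^s$. Hence $\ol{f}_c^s \equiv 0$, and by the same argument the rounded strategy's selected box has value $0$, so $f_c^s = 0$ as well.

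With $\ol{f}_c^s \equiv 0$, the $\alpha = 3+2\sqrt{2}$ trade-off in Lemma~\ref{lem:pa_1} becomes vacuous: that constant came from balancing the opening-cost bound $f_o^s \le 4\lp(\tfrac{\alpha}{\alpha-1}\rp)^2 \ol{f}_o^s$ against the value-cost bound $f_c^s \le \alpha \ol{f}_c^s$ by solving $4\lp(\tfrac{\alpha}{\alpha-1}\rp)^2 = \alpha$, but the latter inequality is now trivial on both sides. I can therefore let $\alpha \to \infty$, which drives $\lp(\tfrac{\alpha}{\alpha-1}\rp)^2 \to 1$ and leaves only the factor of $4$ from the opening-cost analysis. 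Operationally, this corresponds to skipping the scaling step from~\cite{ChawGergTengTzamZhan2020} entirely and running Algorithm~\ref{algo:pa_1} directly on the unscaled $\ol{\bm{x}}$, with the simplified stopping rule ``select the first opened box of value $0$''.

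The opening-cost calculation from the proof of Lemma~\ref{lem:pa_1} --- reducing via convexity of $\log(f_o^s/\ol{f}_o^s)$ to the extremal vertex $z_{it}^s = \ind{t=\ell}$ and then bounding the tail $\sum_{t>\ell}(\ell/t)^2 \le \ell$ by integral comparison --- carries over unchanged to the unscaled case and yields $f_o^s \le 4 \ol{f}_o^s$. Combined with $f_c^s = 0 \le 4 \ol{f}_c^s$, this gives $f^s \le 4 \ol{f}^s$. A final subtlety to address is that Lemma~\ref{thm:ski-rental} ordinarily contributes an extra $e/(e-1)$ factor when converting an SPA strategy into a PA one; here that step is also free, because the deterministic rule ``stop upon seeing value $0$'' is scenario-independent and coincides with the optimal PA stopping rule on every realization, so no ski-rental conversion is invoked. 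The main (mild) obstacle is the routine check that every intermediate inequality in Lemma~\ref{lem:pa_1} really does survive the $\alpha \to \infty$ limit --- inspection confirms that the scaling was only used to chain the opening-cost ratio with the value-cost bound, which is no longer needed.
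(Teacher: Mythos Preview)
Your argument is correct and matches the paper's intended reasoning. The paper states the corollary immediately after the proof of Lemma~\ref{lem:pa_1} without a separate proof, so the implicit argument is exactly the specialization you spell out: with $c_i^s\in\{0,\infty\}$ one has $\ol f_c^s=0$, which makes the scaling step of~\cite{ChawGergTengTzamZhan2020} (and hence the $(\alpha/(\alpha-1))^2$ loss) unnecessary, and the stopping rule ``select the first opened box of value $0$'' is already a bona fide partially-adaptive rule, so the ski-rental factor $e/(e-1)$ from Lemma~\ref{thm:ski-rental} is not incurred either. The remaining opening-cost analysis yields the factor $4$, as you note.
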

\subsection{Rounding against Partially-adaptive for Choosing $k$ Boxes}\label{apn:PA_k}
In this case we are required to pick  $k$ boxes instead of one. Similarly to
the one box case, we relax the domain $\ol{X} = \{x \in [0,1]^{n\times
n}: \sum_{i} x_{it} = 1 \text{ and } \sum_t x_{it} = 1\}$, to be the set of
doubly stochastic matrices. We define the relaxation $\ol{g}^s(\bm{x})$ as 
\begin{alignat}{3}
		\text{min}_{y\geq 0, z\geq 0}  & \quad\quad\quad \sum_{t\in \step}( 1-y_{t}^s)&
	+ &\quad  \sum_{i\in \boxes ,t\in \step} c_i^s z_{it}^s 
  	&\tag{Relaxation-SPA-k}\label{lp-pa-k} \\
	\text{subject to} &
		\hspace{2.5cm} z_{it}^s \quad  & \leq &\quad x_{it}, & \forall i\in \boxes,t\in\step \notag \\
   		& \hspace{1.2cm} \sum_{t'\leq t, i\not\in A}z_{it'}^s \quad & \geq &\quad (k-|A|)y_{t}^s,
	   		& \quad \quad \forall  A \subseteq \boxes, t\in \step \label{eq:pavspaKLP_cover}
\end{alignat}

Our main lemma in this case, shows how to obtain a constant competitive partially adaptive
strategy, when given a scenario-aware solution, in the case we are required to select $k$ items.
\begin{lemma}\label{lem:pa_k}
		Given a scenario-aware fractional solution $\ol{z}^s,\ol{x}$ of cost
		$\ol{g}^s(\ol{x})$ there exists an efficient partially-adaptive
	strategy $x$ with cost at most $O(1)\ol{g}^s(\ol{x})$.
\end{lemma}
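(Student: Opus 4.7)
The plan is to mirror the two-phase approach used in Lemma~\ref{lem:pa_1}: first, a \emph{scenario-independent} phase that converts the fractional $\ol{\bm x}$ into an ordering (really, a randomized schedule of openings) using only the marginals $\ol{x}_{it}$; second, a \emph{scenario-dependent} stopping rule that decides, once a box is opened, whether to commit it to our selected set of $k$ boxes. The only new feature relative to the one-box case is that the value cost now has the form $\sum_{i,t} c_i^s z_{it}^s$ subject to the covering inequality $\sum_{t'\le t,\, i\notin A} z_{it'}^s \ge (k-|A|)\,y_t^s$, so the analysis has to track not just "when do we see a single good value" but "when have we seen $k$ good values."

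First, I would apply the scaling reduction of \cite{ChawGergTengTzamZhan2020} (the same one invoked at the start of Lemma~\ref{lem:pa_1}): scale each $\ol{x}_{it}$ by a factor depending on a parameter $\alpha>1$, obtaining a fractional solution $\ol{\bm x}'$ whose opening mass per time step is inflated by $\tfrac{\alpha}{\alpha-1}$ and whose value-cost contribution is at most $\alpha$ times the original. After scaling, abuse notation and call the scaled solution $\ol{\bm x}$. The scenario-independent rounding is then Algorithm~\ref{algo:pa_1} generalized: for each $t=1,\dots,n$ and some constant number of repetitions $r$ (I expect $r=2$ suffices, as in the $1$-box case), open each closed box $i$ independently with probability $q_{it}=\bigl(\sum_{t'\le t}\ol{x}_{it'}\bigr)/t$. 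The scenario-dependent rule is: for scenario $s$, given the per-scenario fractional value $\ol{f}_c^s=\sum_{i,t} c_i^s \ol{z}_{it}^s$, commit to a box $i$ upon opening it iff $c_i^s \le \tfrac{\alpha}{k}\cdot \ol{f}_c^s$; stop when $k$ such boxes have been committed. Markov's inequality on $\ol z^s$ (summed over the covering constraints) shows that this threshold retains enough "good" boxes.

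The core calculation is to bound $\E[f_o^s]/\ol{f}_o^{s,\prime}$, where $f_o^s$ is the integer opening time under the scheme above. Let $\pi_t = \sum_{t'\le t,\, i\text{ good}} \ol{z}_{it'}^s$, where "good" means $c_i^s \le (\alpha/k)\ol f_c^s$. By the covering constraint applied to $A=\{\text{bad boxes}\}$, $\pi_t \ge (k - o(k))\,y_t^s$, so in particular $\pi_t/t$ aggregates at least a constant fraction of the $y_t^s$-mass required to select $k$ items by time $t$. Since each good box is opened independently with probability at least $(r/t)\sum_{t'\le t}\ol z_{it'}^s$ per time step, the expected number of good boxes opened during the first $t$ steps is at least $r\,\pi_t$. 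A convexity/LogSumExp argument identical in spirit to the one in Lemma~\ref{lem:pa_1} reduces the worst case to the integral extreme points of $\ol z^s$, and then a harmonic-sum estimate ($H_{rt}-H_{rk}\ge \log(t/k)$) gives $\E[f_o^s] \le C \cdot \sum_t(1-y_t^s) = C\,\ol f_o^{s,\prime}$ for an absolute constant $C$ depending on $r$ and $\alpha$. Combining with the scaling losses $(\alpha/(\alpha-1))^2$ on opening and $\alpha$ on value, and optimizing over $\alpha$, yields the $O(1)$ competitive ratio.

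The main obstacle I anticipate is the interaction between the covering constraint (which ranges over \emph{all} subsets $A$) and the independence of box openings across time steps: one needs to argue that after $O(t)$ rounds of sampling, the number of committed good boxes concentrates enough around its expectation that the stopping time has bounded ratio to the fractional time. I expect this to follow from a Chernoff-type bound applied to the sum of independent opening indicators, combined with the worst-case reduction to integral $\ol z^s$, but some care is needed because the "good" set itself depends on $s$ through the threshold $\alpha \ol f_c^s/k$. Once that concentration is in place, the bound on expected opening cost, the bound on expected value cost via the threshold, and the ski-rental conversion of Lemma~\ref{thm:ski-rental} assemble into the $O(1)$-approximate partially-adaptive strategy claimed.
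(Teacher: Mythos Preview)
Your proposal has a real gap in the value-cost step. The hard threshold rule ``commit box $i$ iff $c_i^s \le (\alpha/k)\,\ol f_c^s$'' need not leave $k$ distinct committable boxes. Consider $k-1$ boxes with value $0$ (each carrying $z$-mass $1$) together with one or more boxes of value $M$ that jointly carry the remaining unit of $z$-mass; then $\ol f_c^s = M$, the threshold is $(\alpha/k)M$, and every $M$-valued box is discarded as soon as $k>\alpha$, leaving only $k-1$ ``good'' boxes. Your stopping rule therefore never halts. The underlying mistake is in the sentence ``$\pi_t \ge (k-o(k))\,y_t^s$'': Markov's inequality bounds the bad $z$-\emph{mass} by $k/\alpha$, but the covering constraint $\sum_{t'\le t,\,i\notin A} z_{it'}^s \ge (k-|A|)\,y_t^s$ depends on the \emph{cardinality} $|A|$ of the bad set, which can be $\Theta(n)$. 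So the constraint applied to $A=\{\text{bad boxes}\}$ gives nothing, and there is no reason the convexity/LogSumExp reduction from the $1$-box case should control the time to collect $k$ good boxes.

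The paper avoids this by never thresholding. It runs in geometrically doubling \emph{phases}: in phase $\ell$ it opens box $i$ with probability $q_{i\ell}=\min\bigl(\alpha\sum_{t\le 2^\ell}\ol x_{it},\,1\bigr)$ and, conditional on opening, \emph{selects} $i$ with probability $\min\bigl(\alpha\sum_{t\le 2^\ell}\ol z_{it}^s / q_{i\ell},\,1\bigr)$, so that the unconditional selection probability is $\min\bigl(\alpha\sum_{t\le 2^\ell}\ol z_{it}^s,\,1\bigr)$. This is exactly the hypothesis of Lemma~5.1 in~\cite{BansGuptRavi2010}, whose Chernoff argument exploits the entire family of covering constraints~\eqref{eq:pavspaKLP_cover} (not just one choice of $A$) to certify that each phase past $t_s^*=\max\{t:y_t^s\le 1/2\}$ collects $k$ selections with constant probability. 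Geometric decay over phases then bounds the opening cost by $O(1)\cdot\ol f_o^s$, and because selection is $z$-weighted the expected value cost per phase is at most $\alpha\,\ol f_c^s$ directly, with no threshold needed. Neither the phase structure nor the $z$-proportional selection appears in your sketch, and both are what make the $k$-box case go through.
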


\begin{proof}
	We follow exactly the steps of the proof of Theorem~6.2 from
	\cite{ChawGergTengTzamZhan2020}, but here we highlight two important
	properties of it.
	\begin{itemize}
			\item The rounding part that decides the permutation
					(Algorithm~\ref{algo:pa_k}) does \textbf{not} depend on the
					scenario realised, despite the algorithm being
					scenario-aware. 
			\item 	The proof does not use the fact that the initial solution
					is an \emph{optimal} LP solution. Therefore, the guarantee
					is given against \textbf{any} feasible fractional solution.
	\end{itemize}
	The rounding process is shown in Algorithm~\ref{algo:pa_k}.\\

	\begin{algorithm}[H]
			\caption{ Scenario aware, $\alpha$-approximate rounding for $k$-coverage from~\cite{ChawGergTengTzamZhan2020}}\label{algo:pa_k}
			\KwData{Solution $\bm{x}$ to \ref{lp-pa-k}. Set $\alpha=8$}
			\tcc{Part 1: Scenario-\textbf{independent} rounding}
		$\sigma:= $  For each phase $\ell=1,2,\ldots$,  
		open each box $i$ independently with probability $q_{i\ell} = \min \lp(\alpha \sum_{t\le 2^\ell} x_{it},1\rp)$. \\
		\mbox{}\\
		\tcc{Part 2: Scenario-dependent stopping time}
                $\tau_s:=$\\
		\quad Given scenario $s$, calculate  $\bm{y}^s, \bm{z}^s$\\
		\quad Define $t_s^*= \max\{t: y_{t}^s\leq 1/2\}$.\\
		\quad \If{$2^\ell\geq t^*_s$}{
			For each opened box $i$, select it with probability 
				$\min\left(\frac{\alpha\sum_{t\le 2^\ell} z_{it}^s}{q_{i\ell}},1\right)$.\\
			Stop when we have selected $k$ boxes in total.
		 \quad }
	\end{algorithm}
		\mbox{}\\
	Assume that we are given a fractional solution
	$(\ol{\bm{x}}, \ol{\bm{y}}^s, \ol{\bm{z}}^s)$, where $\ol{x}_{it}$ is the
	fraction that box $i$ is opened at time $t$, $\ol{z}_{it}^s$ is the fraction
	box $i$ is chosen for scenario $s$ at time $t$, and $\ol{y}_t^s$ is the
	fraction scenario $s$ is "covered`` at time $t$, where covered means that
	there are $k$ boxes selected for this scenario\footnote{We use the variables $y_t^s$ for
			convenience, they are not required since $y_{t}^s = \sum_{t'<t, i\in
		\boxes} z_{it}^s$.}. Denote by $\ol{f}_o^s$ ($f_o^s$) and $\ol{f}_c^s$  ($f_c^s$) the
	fractional (rounded) costs for scenario $s$ due to opening and selecting boxes
	respectively. 
	Denote also by $t^*_s$
	the last time step that $y_{t}^s \leq 1/2$ and observe that
\begin{equation}\label{eq:opt_t*}
		\ol{f}_{o}^s\geq \frac{t^*_s}{2}.
\end{equation}

Fix a realized scenario $s$ and denote by $\ell_0=\lceil\log t^*_s\rceil$.
		Using that for each box $i$ the probability that it is
        selected in phase $\ell\ge\ell_0$ is
        $\min(1,8\sum_{t'\leq 2^{\ell}}z_{it'}^s)$, we use the following lemma 
        from~\cite{BansGuptRavi2010} that still holds in our case ; the proof of the lemma
		only uses constraint~\ref{eq:pavspaKLP_cover} and a Chernoff bound.

		\begin{lemma}[Lemma 5.1 in \cite{BansGuptRavi2010}]\label{lem:successprobk}
	If each box $i$ is selected w.p. at least $\min(1,8\sum_{t'\leq t}z_{it'}^s)$
	for $t\geq t_s^*$, then with probability at least $1-e^{-9/8}$, at least $k$ different boxes are
	selected.	
	\end{lemma}

	Similarly to \cite{ChawGergTengTzamZhan2020}, let $\gamma=e^{-9/8}$ and
	$\boxes_j$ be the set of boxes selected at phase $j$. Since the number of
	boxes opened in a phase is independent of the event that the algorithm
	reaches that phase prior to covering scenario $s$ the expected inspection cost is
	\begin{align*}
			\E{f_{o}^s \text{ after phase } \ell_0} 	& = \sum_{\ell=\ell_0}^\infty
		\E{f_{o}^s\text{ in phase } \ell} \cdot  \Pr{}{\text{reach phase } \ell}  & \\
	   	& \leq  \sum_{\ell = \ell_0}^\infty \sum_{i\in\boxes} \alpha \sum_{t'\leq 2^{\ell}}x_{it'} \cdot
			\prod_{j=\ell_0}^{\ell-1} \Pr{}{|\boxes_j| \leq k} \\
		& \leq \sum_{\ell = \ell_0}^\infty  2^{\ell}\alpha\cdot \gamma^{\ell-\ell_0} 
		& \text{Lemma \ref{lem:successprobk}, }x_{it} \text{ doubly stochastic}\\
   	& = \frac{2^{\ell_0}\alpha}{1-2\gamma}<\frac{2t_s^*\alpha}{1-2\gamma}\leq
		\frac{4\alpha\ol{f}^s_o}{1-2\gamma}.& \ell_0=\lceil\log
		t_s^*\rceil \text{ and ineq. \eqref{eq:opt_t*}}
	\end{align*}

	Observe that the expected opening cost at each phase $\ell$ is at most
	$\alpha2^{\ell}$, therefore the expected opening cost before phase $\ell_0$
	is at most
	$\sum_{\ell<\ell_0}\alpha2^\ell<2^{\ell_0}\alpha<2t_s^*\alpha\leq
	4\alpha\ol{f}_{o}^s$. Putting it all together, the total expected opening
	cost of the algorithm for scenario $s$ is
	\[
			f_{o}^s\leq 4\alpha\ol{f}_{o}^s+\frac{4\alpha\ol{f}_{o}^s}{1-2\gamma}<123.25\ol{f}_{o}^s.
	\]

	To bound the cost of our algorithm, we find the expected total value of any
	phase $\ell$, conditioned on selecting at least $k$ distinct boxes in this
	phase.
	\begin{align*}
			\mathbb{E} [ \text{cost in phase }\ell &|  \text{at least $k$ boxes are selected in phase }\ell]  \\
			   & \leq \frac{\E{\text{cost in phase }\ell}}{\Pr{}{\text{at least $k$ boxes are selected in phase }\ell}}\\
			& \leq \frac{1}{1-\gamma}\E{\text{cost in phase }\ell}\\
			& \leq \frac{1}{1-\gamma}\sum_{i\in\boxes}\alpha\sum_{t\leq 2^{\ell}}z_{it}^sc_{i}^s
			=\frac{1}{1-\gamma} \alpha\ol{f}_{c}^s < 11.85 \ol{f}_{c}^s.
	\end{align*}

	The third line follows by Lemma \ref{lem:successprobk} and the
	last line by the definition of $\ol{f}_{c}^s$. Notice that the
        upper bound does not depend on the phase $\ell$, so the same
        upper bound holds for $f_{c}^s$. Thus the total cost
        contributed from scenario $s$ in our algorithm is
		\[f^s = f_{o}^s + f_{c}^s<123.25 \ol{f}_{o}^s+11.85\ol{f}_{c}^s\leq 123.25 \ol{f}^{s},\]
		which gives us the lemma.

\end{proof}

\subsection{Rounding against Partially-adaptive for Choosing a Matroid Basis}\label{apn:PA_matroid}
Similarly to the $k$ boxes case, we relax the domain $\ol{X} = \{x \in
[0,1]^{n\times n}: \sum_{i} x_{it} = 1 \text{ and } \sum_t x_{it} = 1\}$, to be
the set of doubly stochastic matrices. Let $r(A)$ for any set $A\subseteq
\boxes$ denote the rank of this set.  We define $\ol{g}^s(\bm{x})$ as 
	\begin{alignat}{3}
		\text{min}_{y\geq 0, z\geq 0}  &
		\sum_{t\in \step}(1 - y_{t}^s) &\quad  + \quad &\sum_{i\in \boxes, t\in \step}c_i^s z_{it}^s  \tag{Relaxation-SPA-matroid}\label{lp-pa-matroid} \\
		\text{subject to} \quad  &  \hspace{0.7cm} \sum_{t\in \step, i\in
                             A}z_{it}^s & \quad \leq	\quad &  r(A),
                           & \forall  A\subseteq
                           \boxes \label{eq:LP_mat_less_than_rank}\\ 
			   & \hspace{2cm} z_{it}^s 	  & \quad \leq \quad & x_{it}, & \forall i\in \boxes ,t\in \step \notag\\ 
			   &\hspace{0.65cm} \sum_{i\not\in A}\sum_{t'\leq t}z_{it'}^s &	 \quad \geq \quad & (r([n])-r(A))y_{t}^s,\quad 
				   & \forall A\subseteq \boxes, 
                                   t\in
                                   \step  \label{eq:LP_mat_subsets_rank} 
	\end{alignat}

	Our main lemma in this case, shows how to obtain a constant $\log k$-competitive partially adaptive
strategy, when given a scenario-aware solution, in the case we are required to select a matroid basis.
\begin{lemma}\label{lem:pa_matroid}
		Given a scenario-aware fractional solution $\ol{z}^s,\ol{x}$ of cost
	$\ol{g}^s(\ol{x})$ there exists an efficient partially-adaptive
	strategy $z_s$ with cost at most $O(\log k)\ol{g}^s(\ol{x})$.
\end{lemma}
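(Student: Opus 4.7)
The plan is to mimic the structure of the $k$-coverage rounding in Algorithm~\ref{algo:pa_k}, but with the per-phase selection probabilities boosted by an $O(\log k)$ factor so that the randomly selected boxes are likely to contain a basis of the matroid. As before, the rounding that determines the order of box openings will be scenario-independent, and the stopping rule will be the only scenario-dependent component; this suffices because Theorem~\ref{thm:full_info} and Lemma~\ref{thm:ski-rental} only require a scenario-aware $\alpha$-approximate rounding.

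Concretely, I would proceed in phases $\ell = 1, 2, \dots$. In phase $\ell$, open each box $i$ independently with probability $q_{i\ell} = \min\bigl(\alpha \log k \cdot \sum_{t\leq 2^\ell} \ol{x}_{it},\, 1\bigr)$ for a sufficiently large absolute constant $\alpha$. For a realized scenario $s$, let $t_s^* = \max\{t : \ol{y}_t^s \leq 1/2\}$, so that $\ol{f}_o^s \geq t_s^*/2$ as in the $k$-coverage proof. Let $\ell_0 = \lceil \log t_s^* \rceil$. Once phase $\ell \geq \ell_0$ is reached, each opened box $i$ is selected with probability $\min\bigl(\alpha \log k \cdot \sum_{t\leq 2^\ell} \ol{z}_{it}^s / q_{i\ell},\, 1\bigr)$, and we stop once the selected boxes contain a basis of the matroid.

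The main technical step, and the hardest part, is to replace Lemma~\ref{lem:successprobk} with its matroid analogue: if each box $i$ is selected independently with probability at least $\min\bigl(1, \alpha \log k \cdot \sum_{t'\leq t} \ol{z}^s_{it'}\bigr)$ for $t \geq t_s^*$, then with constant probability the selected set has full rank $r([n])$. I would prove this by combining constraint~\eqref{eq:LP_mat_subsets_rank}, which guarantees that for every flat (or complement of a subset) the fractional mass on its complement is at least a constant fraction of the corresponding rank deficiency, with a Chernoff bound over the at most $k$ elementary rank-increase events that must occur to complete a basis. The $\log k$ factor arises precisely so that, after a union bound over the $k$ required rank augmentations, each one succeeds with probability $1 - O(1/k)$; this is the standard reason one pays $\log k$ for matroids rather than $O(1)$ for uniform constraints.

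Given this coverage lemma, the rest of the proof follows the template of Lemma~\ref{lem:pa_k} almost verbatim. The expected opening cost in phase $\ell \geq \ell_0$ is $O(\alpha \log k \cdot 2^\ell)$, the probability of reaching phase $\ell$ before a basis is covered decays geometrically by the coverage lemma, so summing the geometric series gives expected opening cost $O(\log k) \cdot \ol{f}_o^s$; phases before $\ell_0$ contribute at most $O(\log k \cdot 2^{\ell_0}) = O(\log k)\ol{f}_o^s$. For the value cost, conditioning on the basis being covered in phase $\ell$ increases costs by at most a constant factor, and the unconditional expected value cost per phase is at most $\alpha \log k \cdot \sum_{i,t\leq 2^\ell} \ol{z}^s_{it} c_i^s \leq O(\log k)\ol{f}_c^s$. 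Adding these gives $f^s \leq O(\log k)\, \ol{g}^s(\ol{x})$, yielding the lemma.
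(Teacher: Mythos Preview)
Your overall plan is reasonable and could be made into a correct proof, but it differs from the paper's argument in two concrete ways, and one step of your sketch needs more care.

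\textbf{Different rounding scheme.} You propose the phase-based opening rule of Algorithm~\ref{algo:pa_k}: in phase $\ell$, open box $i$ with probability $\min\bigl(\alpha\log k\cdot\sum_{t\le 2^\ell}\ol{x}_{it},1\bigr)$. The paper instead uses a per-time-step rule in the style of Algorithm~\ref{algo:pa_1}: at every time $t=1,\dots,n$ open box $i$ with probability $q_{it}=\min\bigl\{\alpha\ln k\cdot\tfrac{\sum_{t'\le t}\ol{x}_{it'}}{t},1\bigr\}$, i.e.\ the cumulative fractional mass \emph{normalised by $t$}. Phases $(2^{\ell-1}t_s^*,2^\ell t_s^*]$ appear only in the analysis, not in the algorithm. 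Both schemes are scenario-independent in the permutation part, so either is admissible for the framework, but they are not the same algorithm.

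\textbf{Different key lemma.} Instead of a Chernoff-plus-union-bound statement that a single phase succeeds with constant probability, the paper invokes a coupon-collector style bound (Lemma~6.6 of~\cite{ChawGergTengTzamZhan2020}) on the \emph{expected number of steps} within phase $\ell$ to reach full rank, namely at most $(4+2^{\ell+2}/\alpha)t_s^*$, and then applies Markov's inequality to get success probability $\ge 3/4$ for $\ell\ge 6$. The geometric summation over phases and the value-cost bound then proceed much as you outline.

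\textbf{The step that needs work.} Your proposed matroid analogue of Lemma~\ref{lem:successprobk} is the weakest link. The ``$k$ required rank augmentations'' are not $k$ fixed events you can union-bound over: the $j$-th augmentation depends on the span of the elements realised in the first $j-1$, so the relevant flat is random. A naive union bound over hyperplanes fails because a rank-$k$ matroid can have exponentially many of them, and $k^{-\Theta(1)}$ per hyperplane is not enough. One can make a correct version of your claim via a sequential coupling or martingale argument using constraint~\eqref{eq:LP_mat_subsets_rank} at the current span, but that is essentially the coupon-collector reasoning behind the paper's Lemma~\ref{lem:rank} rather than a direct Chernoff bound. If you want to keep the phase-based scheme, you should replace the informal union bound with such a sequential argument; otherwise, the paper's per-step scheme plus expected-steps-then-Markov route is cleaner.
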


\begin{proof}[Proof of Lemma~\ref{lem:pa_matroid}]
	We follow exactly the steps of the proof of Lemma~6.4
	from~\cite{ChawGergTengTzamZhan2020}, but similarly to the $k$ items case
	we highlight the same important properties; (1) the rounding that decides
		the permutation \textbf{does not depend on the scenario} (2) the proof does not use
	the fact that the initial solution given is optimal in any way.	The
	rounding process is shown in Algorithm~\ref{algo:pa_matroid}.\\

	\begin{algorithm}[H]
		\caption{Scenario aware, $O(\log n)$-approximate rounding, matroids from \cite{ChawGergTengTzamZhan2020}}\label{algo:pa_matroid}
		\KwData{Fractional solution $\bm{x}, \bm{y}, \bm{z}$ for scenario $s$, $\alpha = 64$.}
		\tcc{Part 1: Scenario-independent rounding}
		$\sigma :=$ for every $t =1,\ldots, n$, open each box $i$ independently
			with probability $q_{it} = \min \lp\{\alpha \ln k\frac{\sum_{t'\le t}
			x_{it'} }{t},1\rp\}$. \\
			\mbox{}\\
		\tcc{Part 2: Scenario-dependent stopping time.}
                $\tau_s:=$\\
				\quad Given scenario $s$, calculate $\bm{y}, \bm{z}$\\
				\quad Let $t_s^*= \min\{t: y_{t}^s\leq 1/2\}$.\\
		\quad \If{$t>t^*_s$}{
			For each opened box $i$, select it with probability $\min \lp\{\frac{\alpha\ln k\sum_{t'\le t} z_{it'}^s}{t q_{it}} ,\ 1\rp\}$.\\
			Stop when we find a base of the matroid.
		}
	\end{algorithm}
		\mbox{}\\
	Denote by $(\ol{\bm{x}}, \ol{\bm{y}}^s, \ol{\bm{z}}^s)$ the fractional
	scenario-aware solution given to us, where $\ol{x}_{it}$ is the fraction
	that box $i$ is opened at time $t$, $\ol{z}_{it}^s$ is the fraction box $i$
	is chosen for scenario $s$ at time $t$, and $\ol{y}_t^s$ is the fraction
	scenario $s$ is "covered`` at time $t$, where covered means that there is a matroid basis selected for this scenario.
	Denote also by $\ol{f}_o^s\ (f_o^s), \ol{f}_c^s\ (f_c^s)$ and
	the fractional costs for scenario $s$ due to opening
	and selecting boxes for the fractional (integral) solution respectively.
	
	In scenario $s$, let phase $\ell$ be when $t\in (2^{\ell-1}t_s^*, 2^\ell t_s^*]$.
	We divide the time after $t^*_s$ into exponentially increasing
	phases, while in each phase we prove that our success probability is a constant. 
	The following lemma gives an upper bound for the opening cost needed 
	in each phase to get a full rank base of the matroid, and still holds in
	our case, since only uses the constraints of a feasible solution.

	\begin{lemma}[Lemma 6.6 from~\cite{ChawGergTengTzamZhan2020}]\label{lem:rank}
		In phase $\ell$, the expected number of steps needed to select a set of full rank is at most
		$(4+2^{\ell+2} /\alpha)t^*_s$.
	\end{lemma}

	Define $\mathcal{X}$ to be the random variable indicating number of steps needed 
	to build a full rank subset. The probability that we build a full rank basis 
	within some phase $\ell\geq 6$ is
	\begin{align}\label{eq:prob_cont}
			\Pr{}{\mathcal{X} \leq 2^{\ell-1}t_s^*} 
				 \ge  1 - \frac{\E{\mathcal{X}}}{2^{\ell-1}t_s^*} 
				 \ge  1 - \frac{1}{2^{\ell-1}t_s^*}(4+2^{\ell+2} /\alpha)t^*_s 
				=1-2^{3-\ell}-\frac{8}{\alpha}\geq \frac{3}{4},
	\end{align}
	where we used Markov's inequality for the first inequality and
	Lemma~\ref{lem:rank} for the second inequality.  To calculate the total
	inspection cost, we sum up the contribution of all phases. 
	\begin{align*}
		\E{f_{o}^s \text{ after phase 6}} &= \sum_{\ell=6}^\infty \E{f_{o}^s\text{ at phase } \ell} 
				\cdot  \Pr{}{\alg\text{ reaches phase } \ell} \\
			& \leq \sum_{\ell = 6}^\infty  \sum_{t=2^{\ell-1}t_s^*+1}^{2^{\ell}t_s^*}\sum_{i\in\boxes}\alpha\ln k\cdot\frac{\sum_{t'\leq t}x_{it'}}{t}\lp(\frac{1}{4}\rp)^{\ell-6} 
										  	& \text{Algorithm~\ref{algo:pa_matroid}} \\
			& \leq \sum_{\ell = 6}^\infty  2^{\ell-1}t_s^*\alpha\ln k\cdot \lp(\frac{1}{4}\rp)^{\ell-6}  
						& x_{it} \text{ doubly stochastic}\\
			& = \frac{128\alpha\ln kt_s^*}{3}\leq\frac{256c\ln k\ol{f}_o^s}{3}. & \text{Since } t_s^*\leq 2\ol{f}_o^s
	\end{align*}

	Since the expected opening cost at
		each step is $\alpha\ln k$ and there are $2^5t_s^*\leq 64\ol{f}^s_o$
        steps before phase $6$, we have
	\[
			f_o^s\leq \alpha\ln k\cdot 64\ol{f}_o^s+\frac{256\alpha\ln k\ol{f}_o^s}{3} 
			= O(\log k)\ol{f}_o^s.
	\]

	Similarly to the $k$-coverage case, to bound the cost of our algorithm,
        we find the expected total cost of any phase $\ell\geq 6$,
        conditioned on boxes forming a full rank base are selected in
        this phase.
	\begin{align*}
			\textbf{E} [ f_c^s\text{ in phase }\ell| & \text{full rank base selected in phase }\ell]\\
		& \leq \frac{\E{f_c^s \text{ in phase }\ell}}{\Pr{}{\text{full rank base selected in phase }\ell}}\\
		&\leq \frac{1}{3/4}\E{f_c^s\text{ in phase }\ell}\\
		&\leq \frac{1}{3/4}\sum_{i\in\boxes}\sum_{t=2^{\ell-1}t_s^*+1}^{2^{\ell}t_s^*}\alpha\ln k
				\frac{\sum_{t'\leq t}z_{it'}^s c_i^s}{t}\\
		&\leq \frac{1}{3/4}\sum_{t=2^{\ell-1}t_s^*+1}^{2^{\ell}t_s^*}\alpha\ln k\sum_{i\in\boxes}
				\frac{\sum_{t'\in\step}z_{it'}^s c_i^s}{2^{\ell-1}t_s^*}\\
		&=\frac{1}{3/4}\alpha \ln k\ol{f}_c^s
		=O(\log k)\ol{f}_c^s.
	\end{align*}
	Such upper bound of conditional expectation does not depend on $\ell$, thus
	also gives the same upper bound for $f_c^s$. Therefore
	$f^s=f_o^s + f_c^s \leq O(\log k)(\ol{f}_o^s+\ol{f}_c^s)=O(\log
	k)\ol{f}^s$. 

\end{proof}

	\section{Linear Programs \& Roundings against NA}\label{apn:NA}
\subsection{Competing with the non-adaptive for choosing $1$ box} 
The linear program for this case (\ref{lp-na}) is already given in the
preliminaries section. The result in this case is a $e/(e-1)$-approximate
partially adaptive strategy, given in \cite{ChawGergTengTzamZhan2020} is
formally restated below, and the rounding algorithm is presented in Algorithm~\ref{algo:na-1}.

\begin{theorem}[Theorem 4.2 from~\cite{ChawGergTengTzamZhan2020}]\label{thm:na_1}
	There exists an efficient partially adaptive algorithm with cost at most
		$e/(e-1)$ times the total cost of the optimal non-adaptive strategy.
\end{theorem}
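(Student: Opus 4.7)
The plan is to combine LP relaxation with the SPA-to-PA reduction of Lemma~\ref{thm:ski-rental}, mirroring the general framework developed earlier in this paper.

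First, I would solve \ref{lp-na} to obtain a fractional solution $(x^\ast, z^\ast)$ of value $V^\ast$. Since every optimal non-adaptive strategy corresponds to an integer feasible point of this LP with identical objective (opening the chosen set $P$, with $x_i = \mathbf{1}[i \in P]$ and $z_i^s = \mathbf{1}[i = \arg\min_{j \in P} c_j^s]$), \ref{lp-na} is a valid relaxation of $\opt_{NA}$ and we have $V^\ast \leq \opt_{NA}$.

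Second, I would round the fractional solution $(x^\ast, z^\ast)$ into a randomized scenario-aware partially-adaptive strategy whose expected cost is at most $V^\ast$. The natural candidate is a sampling-based rounding: include box $i$ in the inspection set $P$ with probability $x_i^\ast$, commit to a fixed ordering of the boxes in $P$, and for each scenario $s$ use the conditional selection probabilities $z_i^{s\ast}/x_i^\ast$ among opened boxes (which are legitimate because the LP enforces $z_i^{s\ast} \leq x_i^\ast$, and $\sum_i z_i^{s\ast} = 1$). Linearity of expectation then yields expected opening cost $\sum_i x_i^\ast$ and expected selection cost $\frac{1}{|S|}\sum_{s,i} c_i^s z_i^{s\ast}$, summing to exactly $V^\ast \leq \opt_{NA}$. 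In the SPA model, since the scenario is known, the stopping rule becomes trivial: stop opening as soon as the currently-selected box (sampled according to the $z^\ast$-distribution) has been encountered.

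Third, I would invoke Lemma~\ref{thm:ski-rental} with $\alpha = 1$. Starting from an SPA strategy that is $1$-approximate against $\opt_{NA}$, this immediately gives an efficient partially-adaptive strategy that is $\tfrac{e}{e-1}$-approximate against $\opt_{NA}$, yielding the theorem.

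The main obstacle is step two: producing an honest integral SPA strategy whose expected cost is bounded by $V^\ast$ without violating feasibility. Independently rounding each $x_i^\ast$ need not cover every scenario (i.e., there may exist scenarios $s$ for which no box with $z_i^{s\ast}>0$ ends up in $P$), which would force additional opening. Resolving this cleanly requires either a dependent rounding scheme that preserves the marginals $\{x_i^\ast\}$ while guaranteeing coverage for every scenario, or a sample-and-repeat procedure combined with a ski-rental style stopping analysis that absorbs the resulting overhead into the final constant. Once such a rounding is in place, steps one and three are essentially bookkeeping, and the only nontrivial loss factor comes from Lemma~\ref{thm:ski-rental}.
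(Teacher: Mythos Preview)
Your overall plan---solve \ref{lp-na}, round to a scenario-aware partially-adaptive strategy with $\alpha=1$, then invoke Lemma~\ref{thm:ski-rental}---is exactly the framework the paper (and the cited paper) uses. The only step you have not pinned down is the rounding, and here the correct answer is in fact the ``sample-and-repeat'' option you mention, but with \emph{no} overhead at all.

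The independent rounding you describe first is indeed problematic for the reason you give. The fix, which is Algorithm~\ref{algo:na-1} in the paper, is: at every step $t=1,2,\dots$, sample a box $i$ with probability $x_i/X$ where $X=\sum_j x_j$, open it, and with probability $z_i^s/x_i$ select it and stop. At each step the stopping probability is
\[
\sum_{i}\frac{x_i}{X}\cdot\frac{z_i^s}{x_i}=\frac{1}{X}\sum_i z_i^s=\frac{1}{X},
\]
so the number of boxes opened is geometric with mean $X=\sum_i x_i$, matching the LP opening cost exactly. Conditioned on stopping, the selected box is $i$ with probability $z_i^s$, so the expected selection cost is $\sum_i c_i^s z_i^s$, again matching the LP. Thus the SPA strategy achieves cost exactly $V^\ast\le \opt_{NA}$, giving $\alpha=1$, and Lemma~\ref{thm:ski-rental} yields the $e/(e-1)$ factor with nothing further to absorb.

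So your proposal is on the right track; the missing piece is recognizing that sampling \emph{with replacement proportional to $x_i$} (rather than independent inclusion) makes the geometric analysis exact and removes the coverage issue entirely.
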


	\begin{algorithm}[H]
			\caption{SPA vs NA from\cite{ChawGergTengTzamZhan2020}}\label{algo:na-1}
		 \KwIn{ Solution $\bm{x}, \bm{z}$ to program~\eqref{lp-na}; scenario $s$}
		$\sigma:=$ For $t\geq 1$, select and open box $i$
                with probability $\frac{x_i}{\sum_{i\in \boxes}x_i}$.\\
	$\tau_s:=$ If box $i$ is opened at step $t$, select the box and stop with probability $\frac{z_{i}^s}{x_i}$.
	\end{algorithm}

\subsection{Competing with the non-adaptive benchmark for choosing $k$ boxes}\label{apn:NA_k}

We move on to consider the case where we are required to pick $k$ distinct
boxes at every round. Similarly to the one box case, we define the optimal
non-adaptive strategy that can be expressed by a linear program. We start by
showing how to perform the rounding step of line~\ref{algo:na_round} of
Algorithm~\ref{algo:base_NA} in the case we have to select $k$ boxes. The guarantees are given in
Theorem~\ref{thm:PA_vs_NA_k} and the rounding is presented in
Algorithm~\ref{algo:na-k}. This extends the results
of~\cite{ChawGergTengTzamZhan2020} for the case of selecting $k$ items against
the non-adaptive.

\begin{lemma}\label{lem:SPA_vs_NA_k}
		There exists a scenario-aware partially adaptive $4$-competitive
	algorithm to the optimal non-adaptive algorithm for picking $k$ boxes.
\end{lemma}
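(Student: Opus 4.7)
The plan is to formulate a non-adaptive LP for selecting $k$ boxes and exhibit a rounding that turns any feasible fractional solution $(\bar{x}, \bar{z})$ into a scenario-aware partially-adaptive (SPA) policy whose per-scenario expected cost is within a factor $4$ of the LP value; averaging across scenarios then gives the $4$-competitiveness against the non-adaptive optimum. The LP I would use (call it LP-NA-$k$) is precisely \ref{lp-na} with the coverage constraint replaced by $\sum_{i \in \boxes} z_i^s = k$ for every $s \in \scenario$, keeping the upper bound $z_i^s \leq x_i$ unchanged. Feasibility plus $\sum_i z_i^s = k$ gives the basic useful inequality $\sum_i \bar{x}_i \geq k$, which I would use throughout.

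The rounding (Algorithm~\ref{algo:na-k}) would generalise Algorithm~\ref{algo:na-1} in a phase-based manner. In each phase I draw boxes one at a time, choosing box $i$ with probability proportional to $\bar{x}_i$ and opening it; upon seeing scenario $s$, I mark the opened box for selection with probability $\bar{z}_i^s / \bar{x}_i$. The phase ends once $k$ boxes have been marked (in which case we stop and output them) or once all boxes have been drawn (in which case we start a fresh phase with probabilities rescaled to the still-unmarked support). A small constant scaling on the opening rule ensures that each phase marks $\geq k$ boxes with constant probability.

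For the analysis I would bound the opening and value costs separately. In a single step, the probability of opening-and-marking is $\sum_{i \in \boxes} (\bar{x}_i / \sum_j \bar{x}_j)(\bar{z}_i^s / \bar{x}_i) = k / \sum_j \bar{x}_j$, and conditioned on a mark happening, the marked box is distributed as $\bar{z}_i^s / k$. By Wald's identity, in the idealised with-replacement process the expected number of openings needed to collect $k$ marks is $\sum_j \bar{x}_j$ and the expected total value of the marked boxes is $\sum_i c_i^s \bar{z}_i^s$. Converting to the actual (without-replacement, possibly-restarting) procedure would inflate each of these quantities by at most a factor of $2$: one factor coming from the standard $e/(e-1) \leq 2$ without-replacement correction as in the $1$-box argument of~\cite{ChawGergTengTzamZhan2020}, and another factor from a geometric-tail summation over restarts.

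The main obstacle will be controlling the multi-phase dependencies: once a phase fails, the residual solution on the unmarked boxes may no longer support $\sum_i \bar{z}_i^s = k$, so we cannot simply re-invoke the LP argument. I would handle this by proving that each phase independently succeeds with probability at least $1/2$, via a Markov-type bound on the number of marks (whose expectation is $k$) combined with a variance or negative-correlation argument to keep the deviation small; this makes the expected number of phases $O(1)$ and in fact lets the two factors of $2$ combine multiplicatively to the claimed $4$. The remaining bookkeeping, writing down LP-NA-$k$, stating Algorithm~\ref{algo:na-k}, and plugging the per-phase bounds into a geometric sum, is routine once this constant-success-probability lemma is in hand.
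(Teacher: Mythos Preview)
Your approach differs substantially from the paper's, and as written it has a real gap at exactly the point you flag as ``the main obstacle.''

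The paper does \emph{not} use phases or restarts at all. It first thresholds: every box with $\bar x_i \ge 1/\beta$ is opened deterministically and every box with $\bar z_i^s \ge 1/\beta$ is selected, at cost $\le \beta(\opt_p+\opt_c)$. On the remaining boxes (all with $\bar x_i < 1/\beta$) it applies a \emph{value} threshold, discarding any box with $c_i^s > \alpha\,\opt_c'/k'$; Markov's inequality on $\sum_i c_i^s \bar z_i^s$ guarantees that the surviving low-cost boxes still carry at least $(1-1/\alpha)k'$ of the $z$-mass. Now sampling proportionally to $\bar x_i$, the probability that the $j$-th pick lands on a \emph{new} low-cost box is at least $\big((1-1/\alpha)k' - j/\beta\big)/\opt_p'$, because the $j-1$ already-chosen boxes together remove at most $(j-1)/\beta$ mass. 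Summing these reciprocals gives $\E[\alg_p]\le \frac{\alpha\beta}{\alpha\beta-\alpha-\beta}\opt_p'$, and the value cost is automatically $\le \alpha\,\opt_c'$. Balancing $\alpha,\beta$ gives the factor $4$. The thresholding step is precisely what makes the ``distinct boxes'' issue disappear without any concentration argument.

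Your scheme, by contrast, hinges on showing that a single phase marks at least $k$ \emph{distinct} boxes with probability bounded away from zero, and your proposed tool (``Markov-type bound on the number of marks'') points the wrong way: Markov upper-bounds $\Pr[\text{marks}\ge 2k]$, not lower-bounds $\Pr[\text{marks}\ge k]$. A variance or negative-correlation argument can sometimes rescue this, but you have not supplied one, and for Poisson--binomial sums with mean exactly $k$ the lower-tail probability is not uniformly bounded without further structure. More seriously, even if you establish constant success probability, a \emph{failed} phase in your description draws and opens all $n$ boxes, incurring opening cost $n$ rather than $\sum_i \bar x_i$; a geometric number of such phases gives $O(n)$ opening cost, not $O(\sum_i \bar x_i)$, so the claimed factor-$2$ inflation on opening cost does not follow. (The $e/(e-1)$ factor you invoke from the $1$-box case is also misplaced: the SPA algorithm there matches the LP exactly, and the $e/(e-1)$ arises only when removing scenario-awareness via Lemma~\ref{thm:ski-rental}.) To make a phase-based argument work you would at minimum need to cap each phase's opening budget at $O(\sum_i \bar x_i)$ and then prove concentration of the number of distinct marks within that budget, which is a different and harder statement than what you have sketched.
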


Combining this lemma with Theorem~3.4 from \cite{ChawGergTengTzamZhan2020} we
get Theorem~\ref{thm:PA_vs_NA_k}.

\begin{theorem}\label{thm:PA_vs_NA_k}
	We can efficiently find a partially-adaptive strategy for optimal search
		with $k$ options that is $4e/(e-1)$-competitive against the optimal
	non-adaptive strategy.
\end{theorem}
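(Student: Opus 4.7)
The plan is to prove Theorem~\ref{thm:PA_vs_NA_k} by a direct composition of two results already available in the excerpt. The first ingredient is Lemma~\ref{lem:SPA_vs_NA_k}, which produces a scenario-aware partially-adaptive (SPA) strategy whose expected cost is at most $4$ times that of the optimal non-adaptive strategy for the problem of selecting $k$ boxes. The second ingredient is Lemma~\ref{thm:ski-rental}, the ski-rental-style reduction from~\cite{ChawGergTengTzamZhan2020}, which converts any polynomial-time $\alpha$-approximate SPA algorithm into a polynomial-time $\frac{e}{e-1}\alpha$-approximate partially-adaptive algorithm.

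First I would check that the SPA algorithm delivered by Lemma~\ref{lem:SPA_vs_NA_k} satisfies the interface required by Lemma~\ref{thm:ski-rental}. Inspecting Algorithm~\ref{algo:na-k}, the rounding step that produces the ordering of boxes depends only on the fractional solution $(\bm{x},\bm{z})$ of \eqref{lp-na-k} and not on the realized scenario; the only scenario-dependent component is the stopping rule. This is precisely the structure consumed by the ski-rental reduction, which replaces the scenario-aware stopping rule by an oblivious one at a multiplicative $\frac{e}{e-1}$ loss.

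Second, I would apply Lemma~\ref{thm:ski-rental} with $\alpha = 4$, using the SPA algorithm guaranteed by Lemma~\ref{lem:SPA_vs_NA_k} as its input. The output is a polynomial-time partially-adaptive strategy whose expected cost is at most
\[
\tfrac{e}{e-1}\cdot 4 \cdot \opt_{\mathrm{NA}} \;=\; \tfrac{4e}{e-1}\opt_{\mathrm{NA}},
\]
which is exactly the claim of the theorem. Efficiency is preserved because both ingredients are polynomial-time reductions.

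The main (and essentially only) subtle point is the compatibility check between the two ingredients; in particular, Lemma~\ref{thm:ski-rental} is formulated for SPA algorithms that commit to an ordering up front and use scenario-awareness only to determine when to stop. Once this is verified for Algorithm~\ref{algo:na-k}, the proof is a one-line composition and no further calculation is required. There is no additional obstacle in the $k$-box setting because Lemma~\ref{lem:SPA_vs_NA_k} is stated in exactly the form needed and Lemma~\ref{thm:ski-rental} is oblivious to the feasibility constraint (single box, $k$ boxes, or matroid basis).
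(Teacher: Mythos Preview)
Your proposal is correct and matches the paper's own proof, which simply states that the theorem follows by combining Lemma~\ref{lem:SPA_vs_NA_k} with Theorem~3.4 from~\cite{ChawGergTengTzamZhan2020} (restated here as Lemma~\ref{thm:ski-rental}). Your additional compatibility check---that the ordering produced by Algorithm~\ref{algo:na-k} uses only the $x_i$'s and is therefore scenario-independent---is a helpful clarification but not a departure from the paper's argument.
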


Before presenting the proof for Lemma~\ref{lem:SPA_vs_NA_k}, we formulate our
problem as a linear program as follows. The formulation is the same
as~\ref{lp-na}, we introduce constraints~\ref{eq:mLP_NA_select_k}, 
since we need to pick $k$ boxes instead
of $1$.
\begin{alignat}{3}
  \text{minimize}  \quad & \sum_{i\in \boxes} x_i   & \quad  + \quad
  & \frac{1}{|\scenario|}\sum_{i\in \boxes,s\in \scenario}c_{i}^s z_{i}^s
  & \tag{LP-NA-k} \label{lp-na-k}\\
  \text{subject to}\quad 
  	& \sum_{i\in \boxes}z_{i}^s & \quad  = \quad   & k, & \forall s\in \scenario \label{eq:mLP_NA_select_k}\\
		 & \hspace{0.7cm} z_{i}^s 			 & \quad \leq \quad & x_i , & \forall i\in \boxes, s\in \scenario \notag\\
		 &  \hspace{0.2cm} x_i, z_{i}^s 		 &\quad \in \quad & [0,1] & \forall i\in \boxes, s\in \scenario \notag
\end{alignat}

Denote by $\opt_p = \sum_{i\in \boxes} x_i$ and $\opt_c  = 1/|\scenarios|
\sum_{i\in \boxes, s\in \scenarios} c_{i}^s z_{i}^s$ to be the optimal opening
cost and selected boxes' costs, and respectively $\alg_p$ and $\alg_c$ the algorithm's costs.\\

\newcommand\xlow{\mathcal{X}_{\text{low}}}
\begin{algorithm}[H]
	\caption{ SPA vs NA, k-coverage}\label{algo:na-k}
		\KwIn{Solution $\bm{x}, \bm{z}$ to above \ref{lp-na-k}, scenario $s$. We set $\beta=1/100, \alpha=1/4950$}
		Denote by $\xlow = \{ i: x_i< 1/\beta\}$ and $X = \sum_{i\in \xlow} x_i$\\
		$\sigma:= $ open all boxes that $x_i \geq 1/\beta$, from $\xlow$ select each box $i$ 
			w.p. $\frac{x_i}{X}$\\
			\mbox{}\\
Denote by $k'$ and $\opt_c'$ the values of $\opt_c$ and $k$ restricted in the set $\xlow$\\
            $\tau_s:=$ select all boxes that $z_{i}^s\geq 1/\beta$ \\
		\quad \quad Discard all boxes $i$ that $c_i > \alpha \opt_c'/k'$\\
				\quad\quad From the rest select box $i$ with probability $\frac{x_i}{X}$\\
		\quad \quad Stop when we have selected $k$ boxes in total.
	 
\end{algorithm}

\begin{proof}[Proof of Lemma~\ref{lem:SPA_vs_NA_k}]
Let $(\bm{x}, \bm{z})$ be the solution to \eqref{lp-na-k}, for some scenario
$s\in \scenarios$. We round this solution through the following steps, bounding the extra
cost occurred at every step. Let $\beta>1$ be a constant to be set later.

	\begin{itemize}
		\item \textbf{Step 1}: open all boxes $i$ with $x_i\geq 1/\beta$, select
				all that $z_{i}^s \geq 1/\beta$. This step only incurs at most
				$\beta(\opt_p + \opt_c)$ cost. The algorithm's value cost is 
				$\alg_c = \sum_{i:z_{i}^s\geq 1/\beta} c_i$ while $\opt_c = \sum_{i}z_{i}^sc_i \geq
				\sum_{i: z_{i}^s\geq 1/\beta} c_i z_{i}^s \geq 1/\beta \sum_{i:z_{i}^s\geq
				1/\beta} c_i = 1/\beta \alg_c$. A similar argument holds for the opening cost.
		\item \textbf{Step 2}: let $\mathcal{X}_{\text{low}} = \{i: x_{i} <1/\beta\}$, 
				and denote by $\opt_c'$ and $k'$ the new
				values for $\opt_c$ and $k$ restricted on the set
				$\mathcal{X}_{\text{low}}$ and by $X = \sum_{i\in
				\mathcal{X}_{\text{low}}} x_i$.
				\begin{itemize}
					\item \textbf{Step 2a}: convert values to either $0$ or
							$\infty$ by setting $c_i = \infty$ for every box
							$i$ such that $c_i > \alpha \opt_c'/k'$ and denote by
							$\mathcal{L}_s =  \{i: c_i\leq \alpha \opt_c'/k' \}$. 
					\item \textbf{Step 2b}: select every box with probability $
							\frac{x_i}{ X }$, choose a box only if it is in $\mathcal{X}_{\text{low}}$. 
							Observe that the probability of choosing the $j$'th
							box from $L_s$ given that we already have chosen
							$j-1$ is %
					\begin{align*}
						\Pr{}{\text{choose }j\text{'th}|\text{have chosen }j-1} & 
						\geq \frac{\sum_{i\in L_s}x_i - j/\beta}{X} 
								& \text{Since }x_i\leq 1/\beta \text{ for all }x_i\in\mathcal{X_{\text{low}}}\\
						 &\geq \frac{\sum_{i\in L_s}z_{i}^s - j/\beta}{X} & \text{From LP constraint} \\
						 &\geq \frac{(1-1/\alpha)k' -j/\beta}{\opt_p'}& \text{From Markov's Inequality}\\
						 &\geq \frac{(1-1/\alpha)k' -k'/\beta}{\opt_p'}& \text{Since } j\leq k'\\
						 &\geq \frac{(\alpha\beta - \beta - \alpha)k'}{\alpha\beta \opt_p'}&
					\end{align*}
					Therefore the expected time until we choose $k'$ boxes is 
					\begin{align*}
						\E{\alg_p} &= \sum_{j=1}^{k'} \frac{1}{\Pr{}{\text{choose }j\text{'th}|\text{have chosen }j-1}}\\
								   &\leq \sum_{j=1}^{k'} \alpha \beta \frac{\opt_p'}{(\alpha\beta - \alpha - \beta)k'}&\\
									& = \alpha\beta \frac{\opt_p'}{\alpha\beta -\alpha - \beta}
					\end{align*}

				\end{itemize}
				Observe also that since all values selected are are $c_i \leq
				\alpha \opt_c'/k'$, we incur value cost $\alg_c\leq
				\alpha \opt_c'$.
		\end{itemize}
		Putting all the steps together, we get $\alg \leq \lp(\beta+\frac{\alpha\beta}{\alpha\beta - \alpha - \beta} \rp) 
		\opt_p + (\beta + \alpha) \opt_c \leq 4\opt $, when setting $a= 2\beta/(\beta-1)$ and $\beta= 1/100$
\end{proof}


\subsection{Competing with the non-adaptive benchmark for choosing a matroid basis}\label{apn:NA_matroid}

In this section $\mathcal{F}$ requires us to select a basis of a given matroid. More
specifically, assuming that boxes have an underlying matroid structure we seek
to find a basis of size $k$ with the minimum cost and the minimum query time.
Let $r(A)$ denote the rank of the set $A\subseteq \boxes$.  Using the linear
program of the $k$-items case, we replace the constraints to ensure that ensure
that we select at most $r(A)$ number of elements for every set and that
whatever set $A$ of boxes is already chosen, there still enough elements to
cover the rank constraint. The guarantees for this case are given in
Theorem~\ref{thm:PA_vs_NA_matroid} and the rounding presented in
Algorithm~\ref{algo:na-matroid}. This case also extends the results
of~\cite{ChawGergTengTzamZhan2020}.

\begin{lemma}\label{lem:PA_vs_NA_matroid}
	There exists a scenario-aware partially-adaptive $O(\log k)$-approximate
	algorithm to the optimal non-adaptive algorithm for picking a matroid
	basis of rank $k$.
\end{lemma}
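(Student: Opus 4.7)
The plan is to adapt the $k$-selection argument of Lemma~\ref{lem:SPA_vs_NA_k} to matroid basis constraints, incurring an extra $O(\log k)$ factor---exactly analogous to how Lemma~\ref{lem:pa_matroid} paid a $\log k$ overhead on top of Lemma~\ref{lem:pa_k} in the partially-adaptive setting.

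First I would write down the appropriate LP relaxation \ref{lp-na-matroid} for the non-adaptive benchmark: minimize $\sum_{i\in \boxes} x_i + \frac{1}{|\scenario|}\sum_{i,s} c_i^s z_i^s$ subject to $z_i^s \leq x_i$, the matroid independence bound $\sum_{i\in A} z_i^s \leq r(A)$ for every $A \subseteq \boxes$, and the matroid coverage bound $\sum_{i \notin A} z_i^s \geq r([n]) - r(A)$ for every $A \subseteq \boxes$ and $s \in \scenario$. Denote by $\opt_p$ and $\opt_c$ the opening and value components of the LP optimum.

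The SPA rounding would mirror Algorithm~\ref{algo:na-k}. On the scenario-independent side, open every ``heavy'' box with $x_i \geq 1/\beta$ deterministically, and include every ``light'' box in the opening order with probability $\min\{\alpha \ln k \cdot x_i, 1\}$ for a large enough constant $\alpha$ and constant $\beta > 1$. On the scenario-dependent side, given $s$, select the heavy boxes with $z_i^s \geq 1/\beta$, discard any remaining box whose value exceeds $\gamma \opt_c / k$ (a Markov step, losing a constant factor), and then among the surviving opened boxes greedily pick elements that extend the current independent set until a full matroid basis is formed.

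The analysis decomposes as follows: the heavy step contributes at most $\beta(\opt_p + \opt_c)$, exactly as in Lemma~\ref{lem:SPA_vs_NA_k}; the amplified opening of light boxes contributes $\alpha \ln k \cdot \sum_i x_i = O(\log k)\,\opt_p$; and the value cost of the selected light boxes is $O(\opt_c)$ by the Markov truncation argument of step 2a in Lemma~\ref{lem:SPA_vs_NA_k}. The main obstacle is showing that the opened cheap boxes contain a basis with constant probability. For any independent set $A$ built so far with $r(A) < k$, the matroid coverage constraint forces the ``extending mass'' $\sum_{i \notin \text{span}(A)} z_i^s$ to be at least $k - r(A)$, so the $\ln k$ amplification together with a Chernoff bound gives constant success probability per rank-extension step, and a union bound over the $\le k$ extension steps yields constant overall success. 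This parallels the proof of Lemma~\ref{lem:rank} from~\cite{ChawGergTengTzamZhan2020} applied in the non-adaptive regime. Combining the resulting SPA strategy with the ski-rental reduction (Lemma~\ref{thm:ski-rental}) then yields Theorem~\ref{thm:PA_vs_NA_matroid}.
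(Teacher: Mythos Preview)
Your plan differs from the paper's and has a real gap at the rank-extension step. You truncate at a \emph{fixed} threshold $\gamma\,\opt_c/k$ and then appeal to the coverage constraint $\sum_{i\notin\text{span}(A)} z_i^s \ge k-r(A)$ together with $\ln k$ amplification. But Markov only bounds the total discarded $z$-mass by $k/\gamma$, so the surviving cheap extending mass is at least $(k-r(A)) - k/\gamma$; once $r(A)$ is within $k/\gamma$ of $k$ this quantity is nonpositive, and no amount of $\ln k$ amplification manufactures an extending element from zero mass. The Chernoff-plus-union-bound step therefore fails precisely at the last $O(k/\gamma)$ rank increments. Taking $\gamma=\Theta(k)$ restores coverage but blows the value cost up to $\Theta(k)\,\opt_c$. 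This issue does not arise in Lemma~\ref{lem:SPA_vs_NA_k} because there \emph{every} cheap box extends the current solution, so the cheap usable mass stays $\Theta(k')$ throughout; for a matroid that is exactly what breaks.

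The paper avoids the one-shot $\ln k$ amplification entirely. It samples light boxes sequentially with probability $x_i/X$ (no amplification) and uses an \emph{adaptive} threshold $\alpha\,\opt_c^j/k^j$ that is recomputed after each selection, so Markov discards only a constant fraction of the \emph{residual} extending mass at every stage. The probability of progress when $j$ elements remain is then $\Omega(j/X)$, and the expected opening cost becomes $X\sum_{j=1}^{k'}1/j = O(\log k)\,\opt_p$ via a coupon-collector harmonic sum; the value cost picks up the same $O(\log k)$ factor from summing the adaptive thresholds. Your amplification idea could plausibly be repaired by switching to such an adaptive threshold, or by selecting proportionally to $z_i^s$ as in Algorithm~\ref{algo:pa_matroid} rather than via a Markov cutoff, but the fixed-threshold version you wrote does not go through.
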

Combining this lemma with Theorem~3.4 from \cite{ChawGergTengTzamZhan2020} we
get Theorem~\ref{thm:PA_vs_NA_matroid}.

\begin{theorem}\label{thm:PA_vs_NA_matroid}
	We can efficiently find a partially-adaptive strategy for optimal search
	over a matroid of rank $k$ that is $O(logk)$-competitive against the
	optimal non-adaptive strategy.
\end{theorem}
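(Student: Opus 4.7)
The plan is to reduce the theorem to Lemma~\ref{lem:PA_vs_NA_matroid} by invoking the ski-rental reduction of Lemma~\ref{thm:ski-rental}. Namely, once we exhibit a scenario-aware partially-adaptive algorithm that pays at most $O(\log k)\cdot \opt_{NA}$ on every scenario, Lemma~\ref{thm:ski-rental} immediately converts it into a genuinely partially-adaptive algorithm at a further multiplicative cost of $e/(e-1)$, preserving the $O(\log k)$ rate. So the whole task boils down to proving Lemma~\ref{lem:PA_vs_NA_matroid}.

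To prove Lemma~\ref{lem:PA_vs_NA_matroid}, I would first write down the non-adaptive LP relaxation analogous to \ref{lp-na-k}: keep opening variables $x_i$ and per-scenario selection variables $z_i^s$ with $z_i^s \le x_i$, replace the cardinality constraint by the matroid-base constraints $\sum_{i\in A} z_i^s \le r(A)$ for all $A\subseteq\boxes$ and $\sum_{i\in\boxes} z_i^s \ge r([n]) = k$. Let $(\bm{x},\bm{z})$ be the optimal fractional solution with opening cost $\opt_p$ and value cost $\opt_c$. I then imitate the high/low splitting of Algorithm~\ref{algo:na-k}: deterministically open every box with $x_i \ge 1/\beta$ and immediately commit any $i$ with $z_i^s \ge 1/\beta$ (paying at most $\beta(\opt_p+\opt_c)$), and reduce to the residual problem on the low-fractional set $\xlow = \{i:x_i < 1/\beta\}$ with residual target rank $k'$ and residual value cost $\opt_c'$.

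On the residual instance, the rounding needs a $\log k$ boost because of the matroid structure: sample each $i\in\xlow$ independently with probability proportional to $\min(\alpha \ln k\cdot x_i/X,\ 1)$ where $X=\sum_{i\in\xlow} x_i$, and simultaneously truncate boxes with $c_i > \alpha\opt_c'/k'$ by Markov's inequality. For each opened box $i$, accept it into the selection iff it extends the current independent set and has $z_i^s$ large enough. Proceed in exponentially growing phases of length $2^\ell t_s^*$ (where $t_s^*$ is the fractional cover time for scenario $s$) exactly as in Algorithm~\ref{algo:pa_matroid}, so that the constant success probability per phase from Lemma~\ref{lem:rank} still applies; this translates into total opening cost $O(\log k)\opt_p$ and total value cost $O(\log k)\opt_c$.

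The main obstacle is the per-phase success probability argument for the matroid case, which in the NA regime is more delicate than in the $k$-items case because a sampled low box may fail to extend the current independent set, not merely because we have already covered its element. I would handle this by reusing the matroid polytope inequality $\sum_{i \not\in A} z_i^s \ge r([n]) - r(A)$: conditioning on the current independent set $A$, the fractional mass available for extending $A$ is at least $(k-r(A))/X$, which after the $\alpha\ln k$ scaling yields Chernoff concentration exactly of the type used in Lemma~\ref{lem:rank} and inequality~\eqref{eq:prob_cont}. This, combined with the fact that we only select truncated boxes whose value is bounded by $\alpha\opt_c'/k'$, gives the $O(\log k)\opt$ bound on a per-scenario basis, which is what Lemma~\ref{lem:PA_vs_NA_matroid} requires and hence finishes the theorem after invoking Lemma~\ref{thm:ski-rental}.
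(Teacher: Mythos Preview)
Your high-level reduction is exactly what the paper does: invoke Lemma~\ref{thm:ski-rental} to reduce to the scenario-aware statement (Lemma~\ref{lem:PA_vs_NA_matroid}), and begin its proof with the same high/low split on $(\bm{x},\bm{z})$, deterministically opening and selecting everything with fractional value at least $1/\beta$ at cost $\beta(\opt_p+\opt_c)$.

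Where you diverge is the treatment of the residual instance on $\xlow$, and here there is both a genuine difference and a gap. The paper does \emph{not} use phases, an $\ln k$ boost on the opening probabilities, or any Chernoff-type concentration. Instead it samples boxes one at a time with probability $x_i/X$ and runs a bare coupon-collector argument: conditioning on the current independent set $A$, constraint~\eqref{lp-na-matr-subsets} lower bounds the fractional $z$-mass that can still extend $A$, so the expected waiting time to extend is at most $X$ over that mass; summing these waiting times gives the harmonic $\sum_j 1/j = O(\log k)$ factor on the opening cost. The value cost picks up its $\log k$ from an \emph{adaptive} threshold $\alpha\,\opt_c^j/k^j$ that is recomputed after each successful extension, rather than the single fixed threshold $\alpha\,\opt_c'/k'$ you propose.

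Your route instead tries to transplant Algorithm~\ref{algo:pa_matroid} into the NA setting, and this is where the gap lies. The objects you invoke --- the fractional cover time $t_s^*$, the phase lengths $2^\ell t_s^*$, and Lemma~\ref{lem:rank} --- are all defined with respect to the \emph{time-indexed} relaxation~\ref{lp-pa-matroid}; the non-adaptive LP~\ref{lp-na-matroid} has no time dimension (the $x_i$ carry no $t$ index), so there is no $t_s^*$ and Lemma~\ref{lem:rank} is not applicable as stated. You could in principle manufacture a phase structure (say, phases of geometrically growing lengths based on $X$) and prove a fresh analogue of Lemma~\ref{lem:rank} from the static constraint $\sum_{i\notin A} z_i^s \ge k - r(A)$, but that is strictly more work than the paper's two-line harmonic-sum computation. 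If you want the argument to go through cleanly, drop the phases and the $\ln k$ scaling and argue the coupon-collector bound directly.
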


In order to present the proof for Lemma~\ref{lem:PA_vs_NA_matroid}, we are using
the LP formulation of the problem with a matroid constraint, as shown
below. Let $r(A)$ denote the rank of the set $A\subseteq \boxes$. 
The difference with \ref{lp-na-k} is that we replace constraint~\ref{eq:mLP_NA_select_k} with
constraint\ref{eq:mLP_NA_select_matroid} which ensures we select at most $r(A)$
number of elements for every set and constraint \eqref{lp-na-matr-subsets}
ensures that whatever set $A$ of boxes is already chosen, there still enough
elements to cover the rank constraint. 

\begin{alignat}{3}
  \text{minimize}  \quad & \sum_{i\in \boxes} x_i   & \quad  + \quad
  & \frac{1}{|\scenario|}\sum_{i\in \boxes,s\in \scenario}c_{i}^s z_{i}^s
  & \tag{LP-NA-matroid} \label{lp-na-matroid}\\
  \text{subject to}\quad & \sum_{i\in \boxes}z_{i}^s & \quad  \leq \quad   & r(A), 	& \forall s\in \scenario , A\subseteq \boxes \label{eq:mLP_NA_select_matroid}\\
						 & \sum_{i\in A} z_{i}^s & \quad \geq \quad & r([n]) - r(A) & \forall A\subseteq \boxes, \forall s\in \scenario \label{lp-na-matr-subsets}\\
		& \hspace{0.7cm} z_{i}^s 		 & \quad \leq \quad & x_i , & \forall i\in \boxes, s\in \scenario  \label{eq:mLP_NA_matr_zx}\\
	 &  \hspace{0.2cm} x_i, z_{i}^s 		 &\quad \in \quad & [0,1] & \forall i\in \boxes, s\in \scenario \notag
\end{alignat}

Similarly to the case for $k$ items, denote by $\opt_p = \sum_{i\in \boxes}
x_i$ and $\opt_c = 1/|\scenarios|\sum_{i\in \boxes, s\in \scenarios} c_{i}^s z_{i}^s$, and
$\alg_p, \alg_c$ the respective algorithm's costs.\\

\begin{algorithm}[H]
	\caption{ SPA vs NA, matroid}\label{algo:na-matroid}
		\KwIn{Solution $\bm{x}, \bm{z}$ to above \ref{lp-na-matroid}, scenario $s$. We set $\beta=1/100, \alpha=1/4950$}
		Denote by $\xlow = \{ i: x_i< 1/\beta\}$ and $X = \sum_{i\in \xlow} x_i$\\
		$\sigma:= $ open all boxes that $x_i \geq 1/\beta$, from $\xlow$ select each box $i$ 
			w.p. $\frac{x_i}{X}$\\
			\mbox{}\\
Denote by $k^j$ and $\opt_c^j$ the values of $\opt_c$ and $k$ restricted in the set $\xlow$ when $j$ boxes are selected.\\
            $\tau_s:=$ select all boxes that $z_{i}^s\geq 1/\beta$ \\
		\quad \quad Discard all boxes $i$ that $c_i > \alpha \opt_c^j/k^j$\\
				\quad\quad From the rest select box $i$ with probability $\frac{x_i}{X}$\\
		\quad \quad Stop when we have selected $k$ boxes in total.
	 
\end{algorithm}

\begin{proof}[Proof of Lemma~\ref{lem:PA_vs_NA_matroid}]
		Similarly to Lemma~\ref{lem:SPA_vs_NA_k}, let $(\bm{x}, \bm{z})$ be the
		solution to \ref{lp-na-matroid}, for some scenario $s\in \scenarios$.
		We round this solution through the following process. 
	Let $\beta>1$ be a constant to be set later.

	\begin{itemize}
		\item \textbf{Step 1}: open all boxes $i$ with $x_i\geq 1/\beta$, select
				all that $z_{i}^s \geq 1/\beta$. This step only incurs at most
					$\beta(\opt_p + \opt_c)$ cost.
		\item \textbf{Step 2}: let $\xlow = \{i: x_{i}
				<1/\beta\}$. Denote by $\opt_c'$ and $k'$ the new values of
					$\opt_c$ and $k$ restricted on $\xlow$.
					At every step, after having selected $j$ boxes, we restrict
					our search to the set of low cost boxes $\mathcal{L}_s^j =
					\{i: v_i\leq \alpha \opt_c^j/k^j \}$ where $\opt_c^j$ and
					$k^j$ are the new values for $\opt_c$ and $k$ after having
					selected $k^j = j$ boxes.

				\begin{itemize}
					\item \textbf{Step 2a}: Convert values to either $0$ or
							$\infty$ by setting $v_i = \infty$ for every box
							$i$ such that $v_i > \alpha \opt_c^j/k^j $.
					\item \textbf{Step 2b}: Select every box with probability $
							\frac{x_i}{ X }$, choose a box only if it is in $\mathcal{X}_{\text{low}}$. 
							Observe that the probability of choosing the $j$'th
							box from $L_s$ given that we already have chosen
							$j-1$ is %
					\begin{align*}
						\Pr{}{\text{choose }j\text{'th}|\text{have chosen }j-1} & 
							\geq \frac{\sum_{i\in L_s^{j-1}}x_i}{X} & \\
							&\geq \frac{\sum_{i\in L_s^{j-1}}z_{i}^s }{X} & \text{From LP constraint \eqref{eq:mLP_NA_matr_zx}} \\
							&\geq \frac{k - (k-j)}{X}& \text{From LP constraint \eqref{lp-na-matr-subsets}}\\
							&= \frac{j}{\opt_p'}& 
					\end{align*}
					Therefore the expected time until we choose $k'$ boxes is 
					\begin{align*}
						\E{\alg_c} &= \sum_{j=1}^{k'} \frac{1}{\Pr{}{\text{choose }j\text{'th}|\text{have chosen }j-1}}\\
								   & \leq \opt_p' \sum_{j=1}^{k'} \frac{1}{j}\\
								&\leq \log k\cdot  \opt_p
					\end{align*}

				\end{itemize}
					Observe also that every time we choose a value from the set
					$\mathcal{L}_s^j$, therefore the total cost incurred by the
					selected values is
					\[
						\alg_v \leq \sum_{i=1}^{k'} \alpha \frac{\opt_c^i}{k_i} 
							\leq \sum_{i=1}^{k'}  \frac{\opt_c}{i} 
							\leq  \alpha \log k \cdot \opt_c 
					\]
		\end{itemize}
		Putting all the steps together, we get $\alg \leq O(\log k) \opt $
\end{proof}


\end{document}